\documentclass{article}

 \usepackage[main, final]{neurips_2026}
\usepackage[utf8]{inputenc} % allow utf-8 input
\usepackage[T1]{fontenc}    % use 8-bit T1 fonts
\usepackage{hyperref}       % hyperlinks
\usepackage{url}            % simple URL typesetting
\usepackage{booktabs}       % professional-quality tables
\usepackage{graphicx} % Required for inserting images
\usepackage{amsfonts}       % blackboard math symbols
\usepackage{nicefrac}       % compact symbols for 1/2, etc.
\usepackage{microtype}      % microtypography
\usepackage{xcolor}         % colors
\usepackage{amsmath,amssymb,amsfonts}
\usepackage{amsthm}
\usepackage{algorithm}
\usepackage{algorithmic}
\usepackage[normalem]{ulem} 
\usepackage{array}      
\usepackage[table]{xcolor}
\usepackage{multirow, multicol}
\usepackage[capitalize,noabbrev]{cleveref}
\usepackage{wrapfig}
\definecolor{methodgreen}{RGB}{235,245,235}
\definecolor{methodblue}{RGB}{232,240,250}
\definecolor{baselinegray}{gray}{0.95}

\theoremstyle{plain}
\newtheorem{theorem}{Theorem}[section]
\newtheorem{proposition}[theorem]{Proposition}
\newtheorem{lemma}[theorem]{Lemma}
\newtheorem{corollary}[theorem]{Corollary}
\theoremstyle{definition}
\newtheorem{definition}[theorem]{Definition}
\newtheorem{assumption}[theorem]{Assumption}
\theoremstyle{remark}
\newtheorem{remark}[theorem]{Remark}

\usepackage[textsize=tiny]{todonotes}
\usepackage{float}

\usepackage{amsmath,amsfonts,bm}

\def\eqref#1{equation~\ref{#1}}
\def\1{\bm{1}}

\DeclareMathAlphabet{\mathsfit}{\encodingdefault}{\sfdefault}{m}{sl}
\SetMathAlphabet{\mathsfit}{bold}{\encodingdefault}{\sfdefault}{bx}{n}

\title{SMI: Ontology-Grounded Semantic IDs for Hierarchy-Aware Medical Concept Representation}

\author{%
\begin{tabular}{c}
\textbf{Ziyang Song}\textsuperscript{1,*},
\textbf{Lia Shen}\textsuperscript{2},
\textbf{Yixuan Li}\textsuperscript{3},
\textbf{Qincheng Lu}\textsuperscript{3}
\\[2pt]
\textbf{Weihao Li}\textsuperscript{4},
\textbf{Haoran Xu}\textsuperscript{5},
\textbf{He Zhu}\textsuperscript{3},
\textbf{Xinyu Wang}\textsuperscript{3},
\textbf{Yue Li}\textsuperscript{3,*}
\\[6pt]
\normalfont\textsuperscript{1}School of Electrical Engineering and Computer Science, Ohio University
\\
\normalfont\textsuperscript{2}Department of Quantitative and Computational Biology, University of Southern California
\\
\normalfont\textsuperscript{3}School of Computer Science, McGill University
\\
\normalfont\textsuperscript{4}School of Engineering, Northwestern University
\\
\normalfont\textsuperscript{5}Department of Electrical \& Computer Engineering, University of Washington
\\[2pt]
\normalfont\textsuperscript{*}Corresponding authors:
\texttt{ziyangs@ohio.edu}, \texttt{yueli@cs.mcgill.ca}
\end{tabular}
}

\begin{document}

\maketitle

\begin{abstract}
Biomedical language models (LMs) provide strong representations for clinical tasks, but they often fail to capture the hierarchical structure among medical concepts. This limitation is especially pronounced for rare and fine-grained clinical codes in long-tailed vocabularies, where hierarchy-agnostic LM representations become less informative as concept frequency decreases. We introduce Semantic Medical Identifier (SMI), a training-free framework for constructing ontology-grounded Semantic IDs and hierarchy-aware medical concept representations. For each diagnosis, procedure, or medication concept, SMI defines its Semantic ID as the coarse-to-fine path through an expert-defined medical ontology. SMI then aggregates frozen biomedical LM embeddings across the corresponding ontology levels to form hierarchy-aware medical concept representations that preserve coarse-to-fine clinical semantics. The base SMI uses uniform aggregation across ontology levels, while a level-weighted variant selects global level weights on validation data. Our theoretical analysis shows that SMI yields more informative medical concept representations than hierarchy-agnostic LM embeddings, particularly for rare and fine-grained clinical concepts. Experiments show that SMI captures meaningful hierarchical structure and improves clinical relation prediction. On MIMIC-IV, SMI improves mortality prediction, readmission prediction, drug recommendation, and rare drug prediction.  On eICU, SMI yields more robust patient representations under cross-hospital distribution shift.
\end{abstract}

\section{Introduction}
The widespread adoption of electronic health record (EHR) systems has produced large-scale clinical data, enabling broad applications of language models (LMs) in healthcare. Biomedical LMs learn generalizable clinical representations for downstream tasks such as diagnosis prediction, drug recommendation, and patient risk stratification \citep{timelygpt, trajgpt, saferxagent}. However, most biomedical LMs remain predominantly data-driven and struggle to capture the hierarchical structure among medical concepts. Clinical coding systems such as the International Classification of Diseases (ICD) and the Anatomical Therapeutic Chemical (ATC) organize diagnoses, procedures, and medications into multi-level taxonomies, ranging from broad categories to fine-grained codes \citep{icd, atc}. Without explicitly modeling these medical ontologies, hierarchy-agnostic LMs encode each medical concept largely from its single-level textual description \citep{gao2018representation}, limiting their ability to preserve coarse-to-fine clinical semantics.

\begin{figure}[t]
    \centering
    \includegraphics[width=\columnwidth]{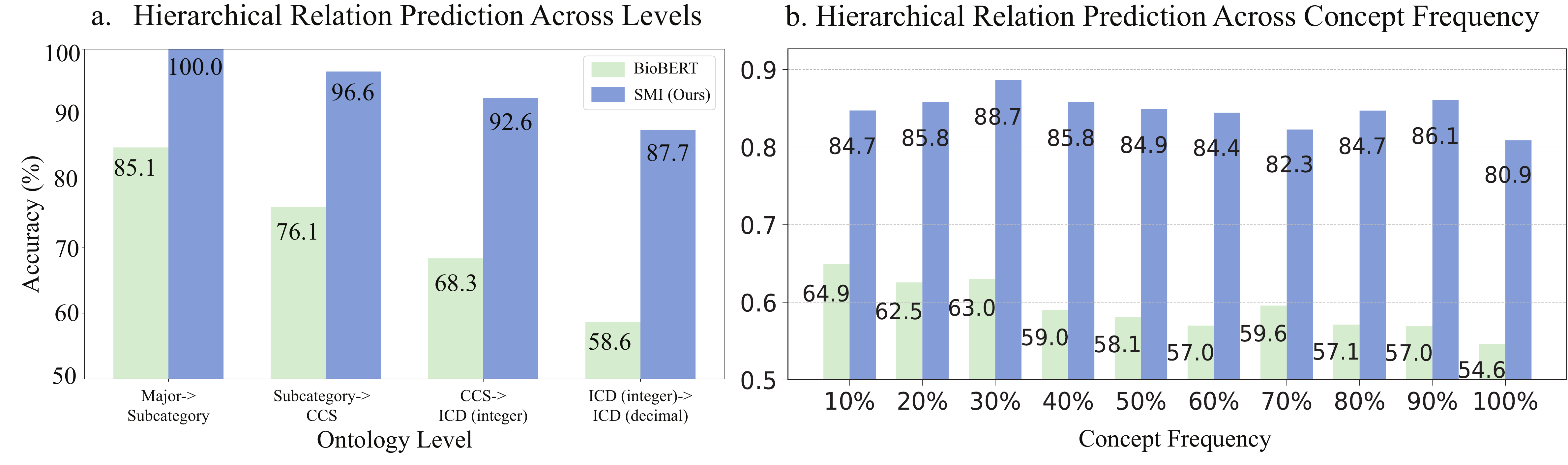}
    \vspace{-1\baselineskip} 
    \caption{
    \textbf{SMI better distinguishes hierarchical relations than BioBERT on ICD 1-hop prediction.} 
    (a) SMI consistently improves hierarchical relation prediction across levels, with the largest gains at finer ICD granularities.
    (b) SMI maintains stronger and more stable performance than BioBERT across concept-frequency bins, suggesting that  medical ontology structure helps mitigate representation degradation for long-tailed clinical codes.    }
    \label{fig:hier_rel_pred}
    \vspace{-1\baselineskip} 
\end{figure}

To demonstrate this limitation, we evaluated ICD 1-hop prediction using BioBERT, a common medical LM pre-trained on PubMed abstracts and PMC articles \citep{lee2020biobert}, and our Semantic Medical Identifier (SMI). This task evaluates whether a child concept is embedded closer to its true parent than to unrelated concepts. As shown in Fig.~\ref{fig:hier_rel_pred}a, BioBERT performs reasonably well at the coarsest levels (85.1\%), but its accuracy deteriorates rapidly as the hierarchy becomes finer, dropping to 58.6\% at the finest level. The drop is most severe for fine-grained concepts, which are far more numerous and rarer in EHR data, making their embeddings harder to learn reliably (Fig.~\ref{fig:long_tail_mimic4}). To further examine this long-tail effect, we stratified fine-grained ICD concepts by frequency and evaluated 1-hop prediction. As shown in Fig.~\ref{fig:hier_rel_pred}b, BioBERT accuracy decreases as concept frequency decreases, indicating that hierarchy-agnostic LM embeddings are vulnerable to long-tailed clinical code distributions. In contrast, SMI maintains strong and stable accuracy across frequency bins, suggesting that medical ontology structure can help mitigate representation degradation for infrequent fine-grained concepts. These observations motivate leveraging expert-defined medical ontologies to improve representations of long-tailed clinical codes under data scarcity.  

Prior structure-aware approaches for medical concept representation broadly fall into two categories. The first learns structure-aware concept embeddings by incorporating curated biomedical resources such as UMLS \citep{liu2021learning}, knowledge graphs \citep{sakhovskiy-et-al-2024-bergamot}, and ontologies \citep{he2024language}, but usually requires additional structure-aware training. The second develops task-specific prediction models that use ontology structure \citep{choi2017gram, shang2019pre, tan2025boxlm}, but these methods are optimized for specific tasks rather than general-purpose concept representation across clinical applications. These limitations leave a gap for a hierarchy-aware framework that directly constructs concept representations without training an explicit ontology encoder or making representation construction dependent on a particular downstream task. SMI addresses this gap by directly aggregating frozen, level-specific LM embeddings along expert-defined medical ontology paths to construct hierarchy-aware medical concept representations without additional training.

% Semantic alignment methods leverage curated biomedical resources (e.g., synonyms, definitions) to learn semantic/ally grounded concept embeddings \citep{remy-etal-2022-biolord,Yuan2022-fj}. However, they mainly capture local semantic similarity and do not explicitly model the global relations among clinical concepts. In contrast, structure‑aware methods leverage structured domain knowledge to learn  representations consistent with clinical relations \citep{he2024language}. However, these methods may underutilize the rich textual semantics captured by LMs, limiting their ability to capture multi-level hierarchical semantics. To bridge both gaps, we propose to construct hierarchy-aware concept embeddings by aggregating LM-encoded representations via expert-curated medical hierarchy, yielding embeddings that are both semantically expressive and hierarchically consistent.

% Semantic Identifier (ID) approach was recently introduced in recommendation systems to represent items as discrete tokens that encode coarse-to-fine semantics \citep{grid, 10.1145/3640457.3688190, . Most approaches learn these IDs in a purely data-driven manner via RQ-VAE with a fixed hierarchy depth and a fixed number of tokens per level \citep{rqvae}. However, because these hierarchies are learned purely from data rather than anchored to expert-defined medical taxonomies, they may misalign with domain knowledge and offer limited interpretability. 

In this study, we introduce SMI, a training-free framework for constructing ontology-grounded Semantic IDs and hierarchy-aware medical concept representations. SMI derives Semantic IDs from expert-defined medical ontologies. Diagnosis and procedure concepts are represented using coarse-to-fine ICD taxonomy, while medication concepts follow the ATC taxonomy. SMI then aggregates frozen biomedical LM embeddings along each Semantic ID to form a hierarchy-aware medical concept representation that preserves coarse-to-fine clinical semantics. The base SMI assigns equal importance to all hierarchy levels, while a level-weighted variant accounts for their different contributions by selecting a global weight for each level on validation data. Theoretically, we show that SMI yields more informative medical concept representations than hierarchy-agnostic LM embeddings, particularly for rare and fine-grained clinical concepts. Empirically, clinical relation prediction experiments show that SMI produces concept representations that better align with medical ontology structure, improving relation prediction within ICD and ATC as well as cross-ontology ICD-to-ATC prediction. On the MIMIC-IV dataset, SMI improves mortality prediction, readmission prediction, and drug recommendation, with particularly strong gains in rare drug prediction. On the eICU dataset, SMI shows significantly smaller cross-hospital distribution shifts than BioBERT, indicating improved cross-site robustness.

\section{Related Work}

Prior work on biomedical concept representation with external knowledge mainly follows two directions. One direction learns structure-aware concept embeddings by injecting curated biomedical resources into representation learning. Existing methods use UMLS synonym sets \citep{sung2020biomedical, liu2021learning}, ontology definitions and descriptions \citep{remy-etal-2022-biolord, Remy2024-hb}, biomedical knowledge graphs \citep{Zou2022-te, Yuan2022-fj, sakhovskiy-et-al-2024-bergamot}, and medical ontologies \citep{he2024language, OnT}. While these methods improve concept embeddings using external biomedical knowledge, they typically require additional training rather than the direct use of frozen LM representations.
The other direction develops supervised downstream prediction models that incorporate ontology structure through ontology-guided aggregation \citep{choi2017gram}, graph-augmented patient modeling \citep{shang2019pre}, expert-guided topic modeling \citep{song2022seed, Yang2026-mb}, or structure-semantics fusion \citep{tan2025boxlm}. These methods are trained for specific predictive tasks, whereas SMI constructs task-agnostic concept representations for downstream clinical applications.
In contrast, SMI uses expert-defined ontology paths to construct hierarchy-aware concept representations from frozen LM embeddings without training.

Semantic IDs represent entities as structured sequences of discrete codes, allowing semantically related entities to share common prefixes and capture coarse-to-fine semantic structure. TIGER constructs hierarchical Semantic IDs by applying a residual-quantized variational autoencoder (RQ-VAE) to item content embeddings \cite{TIGER}. Subsequent work further improves learned Semantic IDs by aligning language and collaborative information \cite{LC-Rec}, incorporating collaborative signals into item tokenization \citep{LETTER}, and jointly learning item tokens with the downstream model \cite{ETEGRec}.
These methods construct discrete identifiers through learned quantization or tokenization. In contrast, SMI derives ontology-grounded Semantic IDs directly from expert-defined medical ontologies, with each Semantic ID corresponding to a medical concept’s coarse-to-fine ontology path. SMI then aggregates frozen biomedical LM embeddings along these ontology paths to construct hierarchy-aware medical concept representations without training an additional quantizer or tokenization model.

\begin{figure}[t]
\centering
\includegraphics[width=\textwidth]{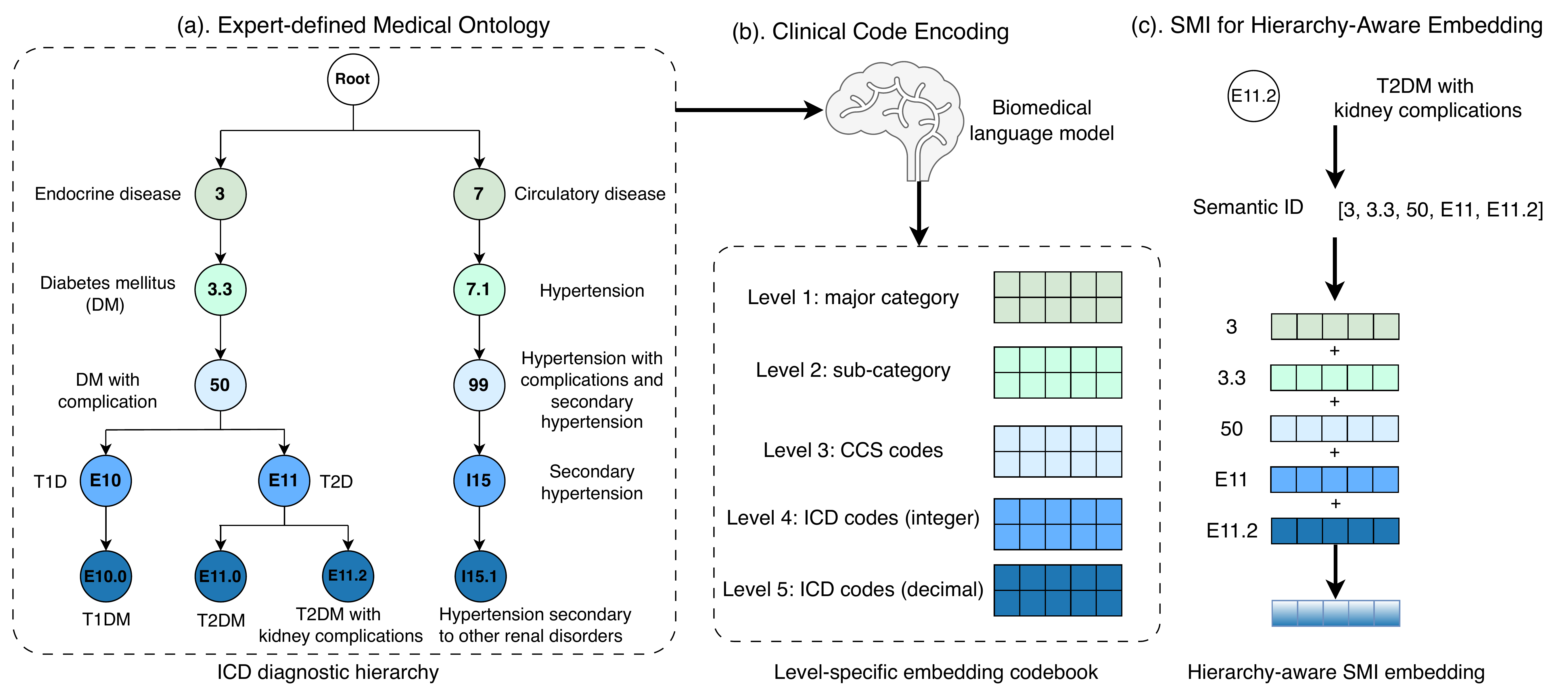} 
\vspace{-1\baselineskip} 
\caption{Overview of SMI and hierarchy-aware medical concept embedding. \textbf{a}. An expert-curated medical ontology organizes clinical concepts from coarse to fine levels. \textbf{b}. A frozen biomedical LM encodes clinical codes at each ontology level into level-specific embeddings. \textbf{c}. SMI represents each medical concept by its coarse-to-fine ontology path as a Semantic ID and aggregates the corresponding embeddings to form a hierarchy-aware medical concept representation.}
\label{fig: SMI}
\vspace{-1\baselineskip} 
\end{figure}

\section{Methodology}

This section presents the SMI framework for hierarchy-aware concept representation of clinical codes, including diagnoses, procedures, and medications. Section~\ref{sec: medical_onto} describes the construction of medical ontologies. Section~\ref{sec: SMI} introduces the SMI framework. Section~\ref{sec: theory} provides a theoretical analysis showing that SMI yields more informative representations than hierarchy-agnostic LMs.

\subsection{Expert-Curated Medical Ontologies}
\label{sec: medical_onto}
% From this path, we generate a sequence of hierarchical indices (e.g., (4, 70, 121)) as \textbf{Semantic ID}, where 4 is the index for "Circulatory Disease" in the first layer and so on. 

% Accurately representing medical concepts requires capturing their inherent hierarchical structure.
Clinical coding systems, such as ICD and ATC, organize diagnoses, procedures, and medications through multi-level taxonomies ranging from broad categories to fine-grained conditions. We organize clinical concepts into distinct five-level hierarchies, where each code has a unique path. For diagnostic codes, we adopt the Clinical Classifications Software (CCS) diagnostic taxonomy and construct a hierarchy of Major category $\rightarrow$ Sub-category $\rightarrow$ CCS code $\rightarrow$ ICD-CM code (integer) $\rightarrow$ ICD-CM code (decimal). For procedure codes, we follow the CCS procedure taxonomy and construct a five-level hierarchy: major category, sub-category, CCS code, prefix ICD-PCS code (first three characters), and full ICD-PCS code. For medications, we leverage the inherent five-level hierarchy of the ATC system: Anatomical Main Group $\rightarrow$ Therapeutic Subgroup $\rightarrow$ Pharmacological Subgroup $\rightarrow$ Chemical Subgroup $\rightarrow$ Chemical Substance. We provide the construction procedure and key statistics of medical ontologies in Appendix~\ref{app: med_onto}.

% For ICD-9, mappings were derived from the official AHRQ release (\textit{Appendix A: Single Level Diagnoses}: \texttt{AppendixASingleDX.txt}), 
% which provides one-to-many associations between ICD codes and major CCS categories. Available at \url{https://hcup-us.ahrq.gov/toolssoftware/ccs/ccs.jsp}).
% \smallskip
% \\For ICD-10, we adopted mappings from the open-source GraphCare repository (\texttt{ICD10CM\_to\_CCSCM.csv} and \texttt{CCSCM.csv}), 
% which align ICD-10 strings and codes to CCS counterparts via both string-based and ID-based matching. 

% \paragraph{Hierarchical CCS Structure.} Beyond the flat major CCS categories, we constructed a multi-level hierarchy reflecting parent-child relationships among disease groups. Using the AHRQ document (\textit{Appendix C: Multi-Level Diagnoses}: \texttt{AppendixCMultiDX.txt}), we built a four-layer taxonomy: \textbf{Major $\rightarrow$ Sub $\rightarrow$ CCS $\rightarrow$ ICD}, where each ICD node has a unique path to its parent CCS node and ultimately to the major category (e.g., \textit{Infectious and Parasitic Diseases $\rightarrow$ Bacterial Infection $\rightarrow$ Tuberculosis $\rightarrow$ 01000}). This structure was stored into a layer-based mapping file (\texttt{icd\_9\_10\_w\_ccs\_combined\_cleaned.csv}) for subsequent embedding alignment.

\subsection{SMI: Semantic IDs for Medical Concept Representation}
\label{sec: SMI}
Clinical concepts are organized in an expert-curated medical ontology with $L$ granularity levels. Let $C^{(\ell)}=\{c^{(\ell)}_{1}, \dots, c^{(\ell)}_{|C^{(\ell)}|}\}$ denote the set of clinical codes at level $\ell \in \{1,\dots,L\}$. As illustrated in Fig.~\ref{fig: SMI}, the \emph{SMI} of concept $i$ is its ontology path in the hierarchy, defined as the ordered sequence of ancestor codes from coarse to fine levels:
\begin{equation}
\mathcal{P}(i) \equiv \big(a_i^{(1)},\dots,a_i^{(L)}\big)
\end{equation}
where $a_i^{(\ell)} \in C^{(\ell)}$ denotes the ancestor of concept $i$ at level $\ell$. 
As shown in Fig.~\ref{fig: SMI}c, the ICD-10-CM code \texttt{E11.2} (type 2 diabetes mellitus with kidney complications) follows the path \texttt{[3, 3.3, 50, E11, E11.2]} corresponding to endocrine disease (Major category), diabetes mellitus with complication (Sub-category), diabetes mellitus with complication (CCS code), type 2 diabetes mellitus (integer-level ICD), and itself (decimal-level ICD).

% \begin{figure*}[!t]
% \centering
% \includegraphics[width=0.7\textwidth]{figures/contrastive learning.png} 
% \caption{LM hyperbolic embedding \textbf{a}. LM embeddings are represented in a $d$-dimensional hyper-cube, without capturing hierarchical structure. \textbf{b}. Using contrastive learning, concepts are organized according to hierarchical semantics within a Poincaré ball of radius $\sqrt{d}$. }
% \label{fig: hyperbolic}
% \vspace{-0.6\baselineskip} 
% \end{figure*}

% \begin{equation}
%     \mathbf{c}_{\text{I10}} = (\underbrace{\texttt{7}}_{\text{Major}}, \underbrace{\texttt{7.1}}_{\text{Sub}}, \underbrace{\texttt{99}}_{\text{CCS}}, \underbrace{\texttt{I10}}_{\text{ICD}}),
% \end{equation}
% representing \textit{Circulatory Disease} $\to$ \textit{Hypertension} $\to$ \textit{Essential Hypertension} $\to$ \textit{Primary Hypertension}.
 
% Specifically, the first token $c^{(1)}$ denotes the coarsest concept of major disease category, while  $c^{(2)}$ and $c^{(3)}$ specify sub-category and CCS code, respectively. The last token $c^{(4)}$ further refines the most fine-grained ICD code. This allows us to control both the amount and the structure of clinical information encoded within each SMI. 

% textual descriptions of clinical codes at four levels of a clinical hierarchy, i.e., major category, sub-category, CCS code, and ICD code (Fig.~\ref{fig: SMI}.b). Let the hierarchy levels be $\ell \in \mathcal{L}=\{1,2,3,4\}$, corresponding to a specific level (e.g., Major category, Sub-category, CCS, ICD) with $K^{(l)}$ clinical codes.

We construct a \textit{hierarchy-aware medical concept representation} for concept $i$ by aggregating level-specific embeddings of the codes in its Semantic ID $\mathcal{P}(i)$. For each clinical code $c^{(\ell)}_j$ at level $\ell$, we use a frozen biomedical LM $\text{Enc}(\cdot)$ to encode its textual description and then obtain its embedding by mean pooling the token embeddings:
\begin{equation}
\mathbf{e}^{(\ell)}_{j}=\text{meanpool}\big(\text{Enc}(c^{(\ell)}_{j})\big)
\end{equation}
Let $\mathbf{E}^{(\ell)} \in \mathbb{R}^{|C^{(\ell)}|\times d}$ denote the level-$\ell$ embedding table, whose row $\mathbf{e}^{(\ell)}_j \in \mathbb{R}^d$ represents code $c^{(\ell)}_j$. We obtain the SMI embedding of concept $i$ by summing the level-specific embeddings along $\mathcal{P}(i)$:
\begin{equation}\label{eq:smi}
\mathbf{e}^{\text{SMI}}_{i}=\sum_{\ell=1}^{L}\mathbf{e}_{a_i^{(\ell)}}^{(\ell)}
\end{equation}
This yields a $d$-dimensional SMI embedding $ \mathbf{e}_i^{\text{SMI}}$ that preserves coarse-to-fine semantics without introducing additional trainable parameters.

The base SMI in \eqref{eq:smi} uses uniform aggregation across hierarchy levels. To account for their different contributions to the final representation, we consider a level-weighted variant, SMI-W, with \emph{global} level weights shared across all concepts. Let ${w_1,\dots,w_L}$ denote the level weights, where $w_\ell \ge 0, \sum_{\ell=1}^{L} w_\ell = L$. We define the weighted SMI representation of concept $i$ as
\begin{align}
    \label{eq:smiw}
    \mathbf{e}_i^{\text{SMI-W}} = \sum_{\ell=1}^{L} w_\ell \, \mathbf{e}^{(\ell)}_{a_i^{(\ell)}}.
\end{align}
SMI-W replaces uniform aggregation with global level weighting. When $w_\ell=1$ for all $\ell$, it reduces to the original SMI. The weights are shared across all concepts and selected on the validation set.

\subsection{Theoretical Analysis of LM Representation Collapse and SMI Informativeness}
\label{sec: theory}

We consider two embedding paradigms for clinical concepts. A \emph{hierarchy-agnostic LM embedding} $\mathbf{e}_i^{\mathrm{LM}}$ is obtained by directly encoding the textual description of concept $i$ with a biomedical LM. In contrast, our proposed \emph{hierarchy-aware SMI embedding} $\mathbf{e}_i^{\mathrm{SMI}}$ aggregates embeddings of all ancestor concepts along its hierarchical path. Both aim to estimate the latent semantic vector $\boldsymbol{\mu}_i$. However, $\mathbf{e}_i^{\mathrm{LM}}$ depends only on data-driven textual signal, whereas $\mathbf{e}_i^{\mathrm{SMI}}$ leverages higher-frequency ancestor information to reduce estimation noise variance and thus improves representations. All proofs in this section are detailed in Appendix~\ref{sec:proof}.

Prior work has shown that word frequency affects contextualized embedding quality in LMs~\citep{schick2020rare,  schick2020bertram}, which is consistent with our observation in Fig.~\ref{fig:hier_rel_pred}b that BioBERT accuracy declines for less frequent concepts. Motivated by these findings, we model hierarchy-agnostic LM embeddings $\mathbf{e}_i^{\mathrm{LM}}$ as frequency-dependent noisy estimates of latent concept semantics $\boldsymbol{\mu}_i$, with estimation variance decreasing as frequency $n_i$ increases. Because clinical vocabularies are long-tailed (Table~\ref{tab:hierarchy_stat}), many fine-grained concepts have small $n_i$ and thus noise-dominated embeddings. This motivates the following frequency-dependent estimation assumption.

\begin{assumption}[Frequency-dependent embedding estimation] \label{ass:embedding_est}
For each concept $i$, let $\boldsymbol{\mu}_i \in \mathbb{R}^d$ denote its true semantic vector and $n_i$ its frequency in the training corpus. We estimate its LM embedding $\mathbf{e}_i^{\mathrm{LM}}$ as a true signal perturbed by a zero-mean isotropic noise:
\begin{align*}
    \mathbf{e}_i^{\mathrm{LM}} = \boldsymbol{\mu}_i + \boldsymbol{\epsilon}_i,
    \qquad 
    \boldsymbol{\epsilon}_i \sim \mathcal{N} \left( \mathbf{0}, \frac{\sigma^2}{n_i}\mathbf{I}_d \right).
\end{align*}
\end{assumption}
Assumption~\ref{ass:embedding_est} implies that LM embedding noise decreases with concept frequency. In long-tailed clinical vocabularies, fine-grained concepts typically occur infrequently and thus suffer from high noise. This motivates a signal-to-noise (SNR) criterion for measuring semantic separability.

\begin{definition}[Pairwise SNR]\label{def:pairwise-snr}
Given two concepts $i$ and $j$ with true semantic vectors $(\boldsymbol{\mu}_i, \boldsymbol{\mu}_j)$ and noises $(\boldsymbol{\epsilon}_i, \boldsymbol{\epsilon}_j)$, we define the pairwise SNR criterion as the ratio of their semantic distance to the expected noise magnitude:
\[
    \mathrm{SNR}(i,j)
    := \frac{\|\boldsymbol{\mu}_i - \boldsymbol{\mu}_j\|}
    {\sqrt{\mathbb{E}\!\left\|\boldsymbol{\epsilon}_i -
    \boldsymbol{\epsilon}_j\right\|^2}}.
\]
Under Lemma~\ref{lem:noise-diff}, this expression simplifies to
$$
    \mathrm{SNR}(i,j)
    = \frac{\|\boldsymbol{\mu}_i - \boldsymbol{\mu}_j\|}
    {\sigma \sqrt{d\!\left(\frac{1}{n_i} + \frac{1}{n_j}\right)}}.
$$
\end{definition}
Under Assumption~\ref{ass:embedding_est} and the long-tailed frequency distribution, fine-grained concepts exhibit vanishing pairwise SNR, leading to the following theorem:

\begin{theorem}[Inseparability of infrequent concepts with hierarchy-agnostic LM embeddings] \label{thm:separability_collapse} 
Let $i$ and $j$ be two fine-grained concepts with low frequencies. The semantic difference between the two concepts becomes negligible relative to the noise:
\[
    \frac{\|\boldsymbol{\mu}_i - \boldsymbol{\mu}_j\|}
    {\|\boldsymbol{\epsilon}_i - \boldsymbol{\epsilon}_j\|}
    \xrightarrow{p} 0,
\]
where $\|\boldsymbol{\epsilon}_i - \boldsymbol{\epsilon}_j\|$ represents the magnitude of noise difference between the two concepts. 
\end{theorem}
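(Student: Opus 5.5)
We read ``low frequencies'' as an asymptotic regime. We fix the true semantic gap $\Delta := \|\boldsymbol{\mu}_i - \boldsymbol{\mu}_j\|$ (or let it stay bounded), and let $n_i, n_j \to 0$ in the rescaled sense that the noise variance $\sigma^2(1/n_i + 1/n_j) \to \infty$. Equivalently, $\sigma^2/n_i$ and $\sigma^2/n_j$ grow without bound relative to $\Delta^2$. The plan is to show that the denominator $\|\boldsymbol{\epsilon}_i - \boldsymbol{\epsilon}_j\|$ diverges in probability while the numerator stays fixed, so the ratio converges to $0$ in probability.

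\textbf{Step 1 (distribution of the noise difference).} By Assumption~\ref{ass:embedding_est}, taking the noises of distinct concepts to be independent (this is what Lemma~\ref{lem:noise-diff} uses), we have
\[
\boldsymbol{\epsilon}_i - \boldsymbol{\epsilon}_j \sim \mathcal{N}\!\left(\mathbf{0}, s^2 \mathbf{I}_d\right), \qquad s^2 := \sigma^2\!\left(\tfrac{1}{n_i} + \tfrac{1}{n_j}\right).
\]
Hence $\|\boldsymbol{\epsilon}_i - \boldsymbol{\epsilon}_j\|^2 = s^2 Z$ with $Z \sim \chi^2_d$. The law of $Z$ does not depend on $n_i$ or $n_j$, and its expectation $\mathbb{E}\|\boldsymbol{\epsilon}_i-\boldsymbol{\epsilon}_j\|^2 = d s^2$ recovers Lemma~\ref{lem:noise-diff}.

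\textbf{Step 2 (ratio bound).} For any $\eta > 0$,
\[
\Pr\!\left(\frac{\Delta}{\|\boldsymbol{\epsilon}_i-\boldsymbol{\epsilon}_j\|} > \eta\right) = \Pr\!\left(Z < \frac{\Delta^2}{\eta^2 s^2}\right) = F_{\chi^2_d}\!\left(\frac{\Delta^2}{\eta^2 s^2}\right).
\]
The $\chi^2_d$ distribution function is continuous and satisfies $F_{\chi^2_d}(0) = 0$. Since $s^2 \to \infty$, its argument tends to $0$, so the probability tends to $0$. This is exactly convergence in probability to $0$. For an explicit rate one can use the small-ball bound $F_{\chi^2_d}(t) \le C_d\, t^{d/2}$, which gives probability $O\big((\mathrm{SNR}(i,j)^2 d/\eta^2)^{d/2}\big)$. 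This links the statement to the vanishing pairwise SNR of Definition~\ref{def:pairwise-snr}.

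\textbf{Main obstacle.} The only delicate point is formalizing ``low frequency.'' The frequencies $n_i$ are counts, so literally sending them to $0$ is unnatural. We will therefore state the limit in terms of $\mathrm{SNR}(i,j) \to 0$, i.e.\ $\Delta^2 n_i n_j/(\sigma^2(n_i+n_j)) \to 0$. This covers both small counts and a growing noise scale $\sigma$, and the $\chi^2$ argument goes through unchanged under this condition.

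If the noises were not assumed independent, we would instead use the reverse triangle inequality $\|\boldsymbol{\epsilon}_i - \boldsymbol{\epsilon}_j\| \ge \big|\|\boldsymbol{\epsilon}_i\| - \|\boldsymbol{\epsilon}_j\|\big|$. This fallback is weaker, since the two norms could cancel. We would therefore commit to independence, which matches the computation in Lemma~\ref{lem:noise-diff}.
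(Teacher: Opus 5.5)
Your argument is correct, but it uses a different mechanism from the paper's. The paper factors the ratio as $\mathrm{SNR}(i,j)$ times $\sigma\sqrt{d(1/n_i+1/n_j)}/\|\boldsymbol{\epsilon}_i-\boldsymbol{\epsilon}_j\|$. It then invokes Lemma~\ref{lem:noise-conc}, normalized $\chi^2_d$ concentration as $d\to\infty$, to send the second factor to $1$ in probability. It reads the low-SNR hypothesis as $\mathrm{SNR}(i,j)\to 0$ and concludes by Slutsky.

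You instead stay at fixed $d$ and use the exact law $\|\boldsymbol{\epsilon}_i-\boldsymbol{\epsilon}_j\|^2=s^2Z$ with $Z\sim\chi^2_d$. This gives the tail probability directly as $F_{\chi^2_d}\bigl(d\,\mathrm{SNR}(i,j)^2/\eta^2\bigr)$. Your displayed condition $\Delta^2 n_i n_j/(\sigma^2(n_i+n_j))\to 0$ is $d\,\mathrm{SNR}(i,j)^2\to 0$, which is equivalent at fixed $d$.

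What your route buys:
\begin{itemize}
\item It needs no high-dimensional limit. The law of the paper's second factor does not depend on $n_i,n_j$ at all, so in the low-frequency limit only its tightness matters, and your CDF computation supplies that directly.
\item It makes the limiting regime explicit. The paper leaves it as an implicit joint limit in $d$ and the SNR.
\item It yields a non-asymptotic small-ball rate.
\end{itemize}

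What the paper's route buys: for large $d$ it identifies the ratio asymptotically with $\mathrm{SNR}(i,j)$ itself. That is, the noise norm concentrates in a thin shell at its RMS value, which is the stronger geometric statement behind the ``noise shell'' remark after the theorem.
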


Theorem~\ref{thm:separability_collapse} shows that, in long-tailed clinical vocabularies, hierarchy-agnostic LMs produce noise-dominated embeddings for infrequent fine-grained concepts, causing  semantic separability to degrade at finer hierarchy levels.
\begin{remark}
Pairwise distances between infrequent fine-grained concepts concentrate in a narrow ``noise shell,'' making their representations geometrically indistinguishable. Thus, 1-hop prediction with hierarchy-agnostic LM embeddings degrades to near-random results (Fig.~\ref{fig:hier_rel_pred}a).
\end{remark}

Inspired by the additive compositionality of word embeddings \citep{skip_gram}, we model a fine-grained concept as the sum of its coarse-grained parent semantics and a child-specific semantic offset. By leveraging more informative parent representations, SMI mitigates the data scarcity and noise of fine-grained concepts. We formalize this hierarchical semantic decomposition as follows:

\begin{assumption}[Hierarchical semantic decomposition] \label{ass:hier-decom}
For a child concept $c$ and its parent concept $p$, we assume the child’s semantic vector decomposes as
\[
\boldsymbol{\mu}_c^{(\ell+1)} = \boldsymbol{\mu}_p^{(\ell)} +
\boldsymbol{\delta}_c^{(\ell+1)},
\]
where $\boldsymbol{\mu}_p^{(\ell)}$ captures the shared coarse-grained semantics of the parent concept and $\delta_{c}^{(\ell+1)}$ captures the child-specific semantic offset (e.g., severity or complication of a disease). We assume offsets are zero-centered ($\mathbb{E}[\delta] = 0$) and are independent across child concepts.
\end{assumption}

Appendix~\ref{app:offset_consistency} provides empirical offset consistency analysis supporting this decomposition using real ICD-9 parent-child concept pairs. Offsets sharing the same clinical modifier are substantially more aligned than offsets from different modifier groups, with mean cosine similarity $0.655$ versus $0.080$ ($p=2.0\times 10^{-4}$). Under Assumption~\ref{ass:hier-decom}, SMI improves the representation of infrequent fine-grained concepts by anchoring them to higher-frequency ancestor semantics with lower variance, rather than relying only on infrequent fine-grained codes. This explains why SMI can mitigate representation collapse in long-tailed clinical vocabularies. The following proposition formalizes this effect by showing that SMI preserves separability between an infrequent concept and non-parent concepts.

\begin{proposition}[SMI preserves separability against non-parent concepts]
\label{prop:smi_nonparent_separable}
Under Assumptions~\ref{ass:embedding_est} and~\ref{ass:hier-decom}, let $c$ be a child concept with parent $p$, and let $j\neq p$ be any other concept.
If $\|\mu_p-\mu_j\|>0$  then as $n_c\to 0$,
\[
\frac{\|\mu^{\mathrm{SMI}}_c-\mu^{\mathrm{SMI}}_j\|}
{\|\epsilon^{\mathrm{SMI}}_c-\epsilon^{\mathrm{SMI}}_j\|}
\;\xrightarrow{p}\;
\frac{\|\mu_p-\mu_j\|}
{\|\epsilon_p-\epsilon_j\|}
\;>\;0 .
\]
\end{proposition}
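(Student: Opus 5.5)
The plan is to write both SMI embeddings as a signal plus a noise term, use Assumption~\ref{ass:hier-decom} to split off the parent contribution, and then show that every term tied to the child $c$ vanishes in probability as $n_c\to 0$. I read the proposition in a two-level reduction: the SMI embedding of $c$ is its parent's embedding plus its own level embedding, and that of $j$ is built the same way (in the simplest case $\mathbf{e}^{\mathrm{SMI}}_j=\mathbf{e}_j$).

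\textbf{Step 1 (decomposition).} By Assumption~\ref{ass:embedding_est}, $\mathbf{e}^{\mathrm{SMI}}_c=\mathbf{e}_p+\mathbf{e}_c=(\mu_p+\mu_c)+(\epsilon_p+\epsilon_c)$. Substituting $\mu_c=\mu_p+\delta_c$ from Assumption~\ref{ass:hier-decom} gives
\[
\mu^{\mathrm{SMI}}_c=2\mu_p+\delta_c,\qquad \epsilon^{\mathrm{SMI}}_c=\epsilon_p+\epsilon_c .
\]
I will match the normalisation of the limiting ratio, either by taking $j$'s SMI embedding at the same depth or by noting that a common factor cancels. The result is that the signal difference is $\mu_p-\mu_j$ plus child-specific terms, and the noise difference is $\epsilon_p-\epsilon_j$ plus a child-specific noise term.

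\textbf{Step 2 (vanishing child terms).} The regime $n_c\to 0$ makes sense only if the child's contribution to the aggregate is down-weighted by its reliability. I will therefore make the natural reading explicit: the level-$L$ term enters the aggregate as a frequency-weighted (shrinkage) estimate, for example scaled by $n_c/(n_c+n_p)$, or equivalently its effective contribution goes to $0$. Under that reading, Chebyshev's inequality with $\mathbb{E}\|w_c\epsilon_c\|^2=w_c^2 d\sigma^2/n_c\to 0$ shows the child noise term is $o_p(1)$. The offset term $w_c\delta_c$ also vanishes, since $\delta_c$ has mean zero and is independent of the other terms.

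\textbf{Step 3 (continuous mapping).} The numerator converges to $\|\mu_p-\mu_j\|$, which is positive by hypothesis. The denominator $\|\epsilon_p-\epsilon_j+o_p(1)\|$ converges to $\|\epsilon_p-\epsilon_j\|$. This limit is almost surely nonzero, because $\epsilon_p-\epsilon_j$ is a nondegenerate Gaussian (see Lemma~\ref{lem:noise-diff}). Slutsky's theorem and the continuous mapping theorem then give the stated convergence in probability, and the limit is strictly positive almost surely. For contrast, Theorem~\ref{thm:separability_collapse} shows the raw LM ratio tends to $0$, and $n_p$ does not go to zero here.

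\textbf{Main obstacle.} The hard part is Step 2. With plain unweighted summation, $\mathrm{Var}(\epsilon_c)=\sigma^2/n_c$ blows up, so the child noise would dominate rather than vanish. The proof therefore hinges on fixing an interpretation in which the child's contribution is controlled. My first choice is the precision-weighted aggregate, which is also the optimal linear combination under Assumption~\ref{ass:embedding_est}. My fallback is to treat $\mathbf{e}_c$ as absent or uninformative, meaning set to its prior mean $\mu_p$, in the limit. I will state whichever convention I use explicitly before carrying out Steps 1--3.
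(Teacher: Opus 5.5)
Your ``Main obstacle'' paragraph identifies the real difficulty, and it cannot be repaired while keeping the paper's estimator. For SMI as defined in \eqref{eq:smi}, or SMI-W in \eqref{eq:smiw} with global weights fixed independently of $n_c$, the proposition is false under the paper's own noise model. The leaf term contributes $w_L\boldsymbol{\epsilon}_c$ to $\boldsymbol{\epsilon}^{\mathrm{SMI}}_c$, and $\mathbb{E}\|w_L\boldsymbol{\epsilon}_c\|^2=w_L^2 d\sigma^2/n_c\to\infty$ whenever $w_L>0$. Meanwhile, nothing in the signal difference depends on $n_c$. So the ratio tends to $0$ in probability, at the same $\sqrt{n_c}$ rate as the LM ratio.

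The paper's proof follows the same skeleton as your Steps 1 and 3: it writes $\boldsymbol{\mu}^{\mathrm{SMI}}_c=\boldsymbol{\mu}_p+\boldsymbol{\delta}^{\mathrm{res}}_c$, takes limits of numerator and denominator separately, and combines them. It handles your Step 2 by assertion, and both assertions fail.
\begin{itemize}
\item It concludes $\|\boldsymbol{\mu}^{\mathrm{SMI}}_c-\boldsymbol{\mu}_j\|\to\|\boldsymbol{\mu}_p-\boldsymbol{\mu}_j\|$ because the residual ``remains bounded.'' Boundedness does not give convergence, and the residual does not change with $n_c$.
\item It concludes $\|\boldsymbol{\epsilon}^{\mathrm{SMI}}_c-\boldsymbol{\epsilon}_j\|\xrightarrow{p}\|\boldsymbol{\epsilon}_p-\boldsymbol{\epsilon}_j\|$ because the parent noise does not scale with $n_c$. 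This overlooks exactly the child noise term you flagged.
\end{itemize}
So do not expect to recover the paper's argument; the gap is in the statement itself.

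What your Steps 2--3 actually prove is a different proposition, and you should present it that way. Weights such as $n_c/(n_c+n_p)$ are concept-specific and depend on $n_c$. SMI-W instead uses global weights shared by all concepts and chosen on validation data, so yours is a new estimator.

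A clean version is $\mathbf{e}^{\mathrm{SMI}}_c=\frac{n_p}{n_p+n_c}\mathbf{e}_p+\frac{n_c}{n_p+n_c}\mathbf{e}_c$. Its signal is exactly $\boldsymbol{\mu}_p+\frac{n_c}{n_p+n_c}\boldsymbol{\delta}_c$. This removes the scale mismatch in your unweighted Step 1 ($2\boldsymbol{\mu}_p+\boldsymbol{\delta}_c$), which ``a common factor cancels'' does not fix unless $j$ is embedded with the same weights.

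A few smaller corrections:
\begin{itemize}
\item Any child weight with $w_c^2/n_c\to 0$ works, and this condition forces $w_c\to 0$. Do not lean on the optimality claim, since those weights ignore $\boldsymbol{\delta}_c$.
\item The term $w_c\boldsymbol{\delta}_c$ vanishes simply because $w_c\to 0$ and $\boldsymbol{\delta}_c$ is a fixed vector. Its zero mean and independence play no role.
\item Your fallback $w_L=0$ is admissible in SMI-W, since $w_\ell\ge 0$, and makes the conclusion hold trivially. However, it gives every child of $p$ the same embedding. Say so, because it shows the proposition delivers only non-parent separability.
\end{itemize}

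With these adjustments, your argument is a correct proof of the modified statement. The Chebyshev bound handles the child noise, and the Slutsky/continuous-mapping step is valid because $\boldsymbol{\epsilon}_p-\boldsymbol{\epsilon}_j$ is a nondegenerate Gaussian and hence almost surely nonzero. That is more than the paper's argument delivers.
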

Proposition~\ref{prop:smi_nonparent_separable} shows that SMI embeddings are separable from non-parent concepts even for infrequent fine-grained concepts, whereas Theorem~\ref{thm:separability_collapse} indicates that hierarchy-agnostic LM embeddings are inseparable. Although both approaches estimate the same latent semantics, SMI improves representations by reducing embedding variance rather than increasing semantic margins. By aggregating higher-frequency ancestor embeddings, SMI reduces noise in infrequent fine-grained concepts. This leads to the following theorem, which establishes that SMI achieves strictly higher SNR than hierarchy-agnostic LM embeddings by reducing estimation noise variance.

% \begin{theorem}[SMI Improves SNR via Variance Reduction]
% \label{thm:smi_snr_gain}
% Under Assumption~\ref{ass:embedding_est} and a long-tailed frequency distribution, SMI strictly reduces estimation noise:
% \[
% \mathbb{E}\,\|\boldsymbol{\epsilon}_i^{\mathrm{SMI}}\|^2
% <
% \mathbb{E}\,\|\boldsymbol{\epsilon}_i^{\mathrm{LM}}\|^2.
% \]
% Consequently, for any pair $(i,j)$ with fixed semantic margin $\|\boldsymbol{\mu}_i - \boldsymbol{\mu}_j\|$, the pairwise SNR satisfies
% \[
% \mathrm{SNR}_{\mathrm{SMI}}(i,j)
% >
% \mathrm{SNR}_{\mathrm{LM}}(i,j).
% \]
% \end{theorem}

\begin{theorem}[Weighted SMI Improves SNR via Variance Reduction]
\label{thm:smi_snr_gain}
Under Assumption~\ref{ass:embedding_est}, suppose the level-wise estimation noises are independent and at least one ancestor level is more frequent than the leaf concept, i.e., $n_i^{(\ell)} > n_i^{(L)}$ for some $\ell<L$. After normalizing the hierarchy-level weights, weighted SMI strictly reduces  estimation noise variance of infrequent fine-grained concepts:
\[
\mathbb{E}\,\|\boldsymbol{\epsilon}_i^{\mathrm{SMI\text{-}W}}\|^2
<
\mathbb{E}\,\|\boldsymbol{\epsilon}_i^{\mathrm{LM}}\|^2.
\]
Consequently, for any pair $(i,j)$ with fixed semantic margin $\|\boldsymbol{\mu}_i - \boldsymbol{\mu}_j\|$, the pairwise SNR satisfies
\[
\mathrm{SNR}_{\mathrm{SMI\text{-}W}}(i,j)
>
\mathrm{SNR}_{\mathrm{LM}}(i,j).
\]
\end{theorem}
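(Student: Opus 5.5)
The plan is to make ``after normalizing the hierarchy-level weights'' precise, compute the SMI-W noise variance explicitly as a weighted sum of level-wise variances, and compare it with the leaf-only LM variance. Concretely, I would rescale the weights of \eqref{eq:smiw} to $\tilde w_\ell = w_\ell/L$, so that $\sum_\ell \tilde w_\ell = 1$ and $\mathbf{e}_i^{\mathrm{SMI\text{-}W}}$ is a convex combination of level embeddings. Under Assumption~\ref{ass:hier-decom} it then estimates the same latent semantics as $\mathbf{e}_i^{\mathrm{LM}}$ (up to zero-mean offsets), and its noise is $\boldsymbol{\epsilon}_i^{\mathrm{SMI\text{-}W}}=\sum_\ell \tilde w_\ell \boldsymbol{\epsilon}_i^{(\ell)}$. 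By Assumption~\ref{ass:embedding_est} and the assumed independence across levels, the cross terms vanish and
\[
\mathbb{E}\|\boldsymbol{\epsilon}_i^{\mathrm{SMI\text{-}W}}\|^2=\sigma^2 d\sum_{\ell=1}^L \frac{\tilde w_\ell^2}{n_i^{(\ell)}},\qquad \mathbb{E}\|\boldsymbol{\epsilon}_i^{\mathrm{LM}}\|^2=\frac{\sigma^2 d}{n_i^{(L)}}.
\]

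The core step is showing that the first quantity is strictly smaller than the second. Since the inequality fails for degenerate weights (all mass on the leaf gives equality), it cannot hold for every admissible weighting. I would therefore prove it for the uniform base SMI and for the variance-optimal choice, and note that validation-selected weights can always be taken to contain these choices.

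\emph{Uniform weights.} In the ontology an ancestor's frequency aggregates its descendants' occurrences, so $n_i^{(\ell)}\ge n_i^{(L)}$ for all $\ell$, with strict inequality for some $\ell<L$ by hypothesis. Hence
\[
\frac{1}{L^2}\sum_\ell \frac{1}{n_i^{(\ell)}}<\frac{1}{L^2}\cdot\frac{L}{n_i^{(L)}}\le\frac{1}{n_i^{(L)}}.
\]

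\emph{Optimal weights.} Minimizing $\sum_\ell \tilde w_\ell^2/n^{(\ell)}$ over the simplex by Cauchy--Schwarz (or a Lagrange multiplier) gives $\tilde w_\ell\propto n_i^{(\ell)}$. The minimum value is $1/\sum_\ell n_i^{(\ell)}$, which is strictly below $1/n_i^{(L)}$ as soon as some $n_i^{(\ell)}>0$ for $\ell<L$.

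For the SNR consequence, I would expand $\mathbb{E}\|\boldsymbol{\epsilon}_i-\boldsymbol{\epsilon}_j\|^2$ as in Lemma~\ref{lem:noise-diff} into $\mathbb{E}\|\boldsymbol{\epsilon}_i\|^2+\mathbb{E}\|\boldsymbol{\epsilon}_j\|^2-2\,\mathbb{E}\langle\boldsymbol{\epsilon}_i,\boldsymbol{\epsilon}_j\rangle$. The subtle point, and the main obstacle I anticipate, is that $i$ and $j$ may share ancestors, so their SMI-W noises are correlated. I would handle this by splitting the levels into shared levels, whose noise contributions cancel exactly in the difference, and non-shared levels, whose noises are independent. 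Shared levels therefore only decrease the expected squared difference, and the remaining independent part is bounded termwise by the previous step. This gives $\mathbb{E}\|\boldsymbol{\epsilon}^{\mathrm{SMI\text{-}W}}_i-\boldsymbol{\epsilon}^{\mathrm{SMI\text{-}W}}_j\|^2<\mathbb{E}\|\boldsymbol{\epsilon}^{\mathrm{LM}}_i-\boldsymbol{\epsilon}^{\mathrm{LM}}_j\|^2$. With the semantic margin held fixed as in the statement, Definition~\ref{def:pairwise-snr} then yields $\mathrm{SNR}_{\mathrm{SMI\text{-}W}}(i,j)>\mathrm{SNR}_{\mathrm{LM}}(i,j)$.
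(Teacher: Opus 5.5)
Your variance formula and your uniform-weight computation are correct. Your observation that strict reduction fails when all normalized mass sits on the leaf is also right. The gap is in how you respond to that observation.

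\textbf{The weights you cover are not the weights in the theorem.} The theorem concerns SMI-W, i.e.\ an arbitrary global weight vector selected on validation data. You prove strict reduction only in two cases.
\begin{itemize}
\item Uniform weights: this is just Corollary~\ref{cor:uniform_smi_snr}, not the theorem.
\item $\tilde w_\ell\propto n_i^{(\ell)}$: this is not an SMI-W weighting at all. It depends on the concept $i$ through its ancestor frequencies, whereas SMI-W uses one weight vector shared by all concepts.
\end{itemize}
The sentence ``validation-selected weights can always be taken to contain these choices'' does no work. Validation selection optimizes downstream performance, so nothing forces the selected weights to be uniform or variance-optimal, and your argument says nothing about whatever weights are actually chosen. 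As written, the proposal establishes the corollary, not the theorem.

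\textbf{The missing step.} Your counterexample is essentially the only one, so you can exclude it rather than retreat to special cases. For any $\tilde w_\ell\ge 0$ with $\sum_\ell\tilde w_\ell=1$ and $n_i^{(\ell)}\ge n_i^{(L)}$ for all $\ell$,
\[
\sum_{\ell=1}^{L}\frac{\tilde w_\ell^{2}}{n_i^{(\ell)}}\le\frac{1}{n_i^{(L)}}\sum_{\ell=1}^{L}\tilde w_\ell^{2}\le\frac{1}{n_i^{(L)}}\Bigl(\sum_{\ell=1}^{L}\tilde w_\ell\Bigr)^{2}=\frac{1}{n_i^{(L)}}.
\]
Equality throughout forces all the mass onto a single level whose frequency equals the leaf's. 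So any global weighting with positive weight on at least two levels gives strict reduction for every concept simultaneously, which is what the theorem needs. This is the paper's route: it takes $w_\ell>0$ in the appendix and uses exactly this chain via $\sum_\ell w_\ell^2\le(\sum_\ell w_\ell)^2$.

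\textbf{Your SNR step is an improvement.} The paper simply adds the two single-concept bounds as though the SMI-W noises of $i$ and $j$ were independent. Your split into shared levels and independent non-shared levels correctly handles common ancestors, so keep it. Note, though, that exact cancellation on shared levels relies on the weights being global. With your concept-specific optimal weights, the shared terms only shrink, by $(a-b)^2\le a^2+b^2$ for $a,b\ge 0$.
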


\begin{corollary}[Uniform SMI improves SNR]
\label{cor:uniform_smi_snr}
Uniform SMI is a special case of weighted SMI with equal level weights. Therefore, under the same conditions as Theorem~\ref{thm:smi_snr_gain}, uniform SMI strictly reduces estimation noise variance after normalizing the equal hierarchy-level weights:
\[
\mathbb{E}\,\|\boldsymbol{\epsilon}_i^{\mathrm{SMI}}\|^2
<
\mathbb{E}\,\|\boldsymbol{\epsilon}_i^{\mathrm{LM}}\|^2.
\]
Uniform SMI also achieves higher pairwise SNR than the hierarchy-agnostic LM embedding:
\[
\mathrm{SNR}_{\mathrm{SMI}}(i,j)
>
\mathrm{SNR}_{\mathrm{LM}}(i,j).
\]
\end{corollary}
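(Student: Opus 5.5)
The plan is to read Corollary~\ref{cor:uniform_smi_snr} as the instance $w_1=\dots=w_L=1$ of Theorem~\ref{thm:smi_snr_gain}, and then to check that its hypotheses hold. The conditions of Theorem~\ref{thm:smi_snr_gain} are Assumption~\ref{ass:embedding_est}, independence of the level-wise noises, and $n_i^{(\ell)}>n_i^{(L)}$ for some $\ell<L$. The corollary assumes all three. The uniform weights satisfy $w_\ell=1\ge 0$ and $\sum_\ell w_\ell=L$, so they are admissible for SMI-W. By \eqref{eq:smiw}, setting $w_\ell=1$ gives exactly \eqref{eq:smi}, so $\mathbf{e}_i^{\mathrm{SMI}}=\mathbf{e}_i^{\mathrm{SMI\text{-}W}}$. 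Both the variance inequality and the SNR inequality then carry over directly.

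The one point that needs care is the normalization. I would read ``after normalizing the hierarchy-level weights'' as dividing by $\sum_\ell w_\ell=L$, so the normalized estimator is $\tfrac1L\sum_\ell \mathbf{e}^{(\ell)}_{a_i^{(\ell)}}$. This is also the convention in which the proof of Theorem~\ref{thm:smi_snr_gain} must be stated, since the unnormalized sum inflates the variance by a factor of $L^2$. I would then verify directly that the inherited claim holds. Independence and Assumption~\ref{ass:embedding_est} give
\[
\mathbb{E}\|\boldsymbol{\epsilon}_i^{\mathrm{SMI}}\|^2=\frac{\sigma^2 d}{L^2}\sum_{\ell=1}^{L}\frac{1}{n_i^{(\ell)}} .
\]
Each ancestor of $i$ occurs at least as often as $i$, because every occurrence of $i$ is also an occurrence of each of its ancestors. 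Hence $n_i^{(\ell)}\ge n_i^{(L)}$ for all $\ell$, with strict inequality for at least one $\ell$. The sum is therefore strictly less than $L/n_i^{(L)}$. The whole expression is then strictly less than $\sigma^2 d/(L\,n_i^{(L)})\le \sigma^2 d/n_i^{(L)}=\mathbb{E}\|\boldsymbol{\epsilon}_i^{\mathrm{LM}}\|^2$.

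For the SNR statement, I would write $\mathbb{E}\|\boldsymbol{\epsilon}_i-\boldsymbol{\epsilon}_j\|^2=\mathbb{E}\|\boldsymbol{\epsilon}_i\|^2+\mathbb{E}\|\boldsymbol{\epsilon}_j\|^2$ for independent zero-mean noises, as in Lemma~\ref{lem:noise-diff}. The denominator of Definition~\ref{def:pairwise-snr} then strictly decreases whenever the noise variance of $i$ strictly decreases and that of $j$ does not increase. The latter follows from the same monotonicity applied to $j$. Because the semantic margin is held fixed, as stated in the theorem, the SNR strictly increases.

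I expect the main obstacle to be this convention issue rather than any computation. The margin must be measured consistently, as the theorem does by fixing it, and the normalization by $L$ must be applied to both signal and noise, or treated as a fixed rescaling. If the normalization is dropped, the claimed strict inequality can fail, so the write-up should state explicitly that it inherits the normalization of Theorem~\ref{thm:smi_snr_gain}.
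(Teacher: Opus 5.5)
Your proposal is correct and follows the paper's route: the paper proves the corollary simply as the $w_\ell\equiv 1$ case of Theorem~\ref{thm:smi_snr_gain}, under the $1/L$ normalization and the fixed-margin convention, and your bound $\frac{\sigma^2 d}{L^2}\sum_{\ell}1/n_i^{(\ell)}<\sigma^2 d/n_i^{(L)}$ is that theorem's proof specialized to uniform weights. Your nesting argument for $n_i^{(\ell)}\ge n_i^{(L)}$ at every level is a modeling assumption rather than a consequence of Assumption~\ref{ass:embedding_est}, but the paper's proof relies on the same thing without saying so: it invokes the inequality for all $\ell<L$, whereas the theorem statement only assumes it for some $\ell$.
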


Uniform SMI is a special case of weighted SMI with equal hierarchy-level weights. Therefore, under the same conditions as Theorem~\ref{thm:smi_snr_gain}, uniform SMI also strictly reduces estimation-noise

\begin{table}[t]
\centering
\footnotesize
\setlength{\tabcolsep}{4.5pt}
\caption{Clinical relation prediction results for ICD$\rightarrow$ICD, ATC$\rightarrow$ATC, and ICD$\rightarrow$ATC, comparing SMI and SMI-W against baselines. Best F-scores are highlighted in bold. We reported the optimized result for each method as the decision threshold is selected via deterministic grid search. }
\label{tab:hier_pred_prf}
\begin{tabular}{lccc ccc ccc}
\toprule
\multirow{2}{*}{\textbf{Method}} &
\multicolumn{3}{c}{\textbf{ICD}} &
\multicolumn{3}{c}{\textbf{ATC}} &
\multicolumn{3}{c}{\textbf{ICD$\rightarrow$ATC}} \\
\cmidrule(lr){2-4}\cmidrule(lr){5-7}\cmidrule(lr){8-10}
& \textbf{Precision} & \textbf{Recall} & \textbf{F-score}
& \textbf{Precision} & \textbf{Recall} & \textbf{F-score}
& \textbf{Precision} & \textbf{Recall} & \textbf{F-score} \\
\midrule
\rowcolor{methodgreen}
\textbf{SMI} & 70.6 & 67.9 & 69.2 & 83.8 & 67.6 & 74.9 & 53.2 & 61.4 & 57.0 \\
\rowcolor{methodgreen}
\textbf{SMI-W} & 77.2 & 66.6 & \textbf{71.5} & 86.1 & 70.5 & \textbf{77.5} & 53.6 & 62.2 & \textbf{57.6} \\
\midrule
BioBERT & 30.4 & 75.8 & 44.2 & 50.7 & 45.7 & 48.1 & 49.3 & 57.0 & 50.9\\
SapBERT & 69.9 & 64.5 & 67.1 & 84.0 & 66.9 & 74.5 & 50.2 & 55.7 &52.9 \\
HiTs & 69.3 & 61.7 & 65.3 & 83.0 & 66.2 & 71.5 & 50.5 & 56.3 & 53.2  \\
OnT & 65.9 & 59.1 & 62.3 & 78.5 &62.3  & 69.5  &48.9 & 55.2 & 51.9 \\
BERGAMOT & 66.5 & 70.2 & 55.3 & 79.4 & 69.7 & 74.2 & 47.9 & 56.2 & 51.7 \\
RotatE & 69.3 & 58.2 & 63.3 & 80.3 & 63.9 & 71.2 & 49.2 & 57.2 & 52.9 \\
cui2vec & 68.3 & 60.3 & 64.1 & 77.2 & 61.4 & 68.4 & 45.7 & 54.0 & 49.5 \\\bottomrule
\end{tabular}
\end{table}

\begin{table}[t]
\centering
\small
\caption{Aggregate clustering quality of ICD diagnosis embeddings by major disease category. NMI, ARI, Purity, Recall, Precision, and Silhouette are higher better, while Intra/Inter Ratio is lower better. SMI achieves stronger cluster-label alignment and better geometric separation than BioBERT, indicating more ontology-consistent diagnosis embeddings.}
\label{tab:cluster_aggregate}
\begin{tabular}{lccccccc}
\toprule
\textbf{Method}
& \textbf{NMI $\uparrow$}
& \textbf{ARI $\uparrow$}
& \textbf{Purity $\uparrow$}
& \textbf{Recall $\uparrow$}
& \textbf{Precision $\uparrow$}
& \textbf{Sil. $\uparrow$}
& \textbf{Intra/Inter $\downarrow$} \\
\midrule
SMI
& \textbf{0.77}
& \textbf{0.56}
& \textbf{0.78}
& \textbf{0.80}
& \textbf{0.61}
& \textbf{0.52}
& \textbf{0.29} \\
BioBERT
& 0.39
& 0.25
& 0.51
& 0.41
& 0.35
& 0.04
& 0.82 \\
\bottomrule
\end{tabular}
\end{table}

\section{Experiments and Results}

\subsection{SMI Embeddings Capture Clinical Concept Relations}
\label{sec:relation_pred}
To assess whether SMI embeddings capture clinical relations in medical ontologies, we evaluated multi-hop relation prediction within the ICD and ATC hierarchies by predicting ancestor–descendant links  within each hierarchy. We also evaluated a cross-ontology 1-hop relation prediction between treatments and their corresponding diagnoses (ICD$\rightarrow$ATC). The experimental setup is provided in Appendix~\ref{app:relation_prediction}. 

We compared SMI with three groups of baselines: (1) the hierarchy-agnostic LM backbone BioBERT~\citep{lee2020biobert}; (2) LMs with structure knowledge, including SapBERT~\citep{liu2021learning}, HiTs \citep{he2024language},  OnT~\citep{OnT}, and BERGAMOT~\citep{sakhovskiy-et-al-2024-bergamot}; and (3) structure-aware embedding baselines, including RotatE~\citep{sun2019rotate} and cui2vec~\citep{beam2020clinical}. Implementation details are provided in Appendix~\ref{app: baselines}.

As shown in Table \ref{fig:hier_rel_pred}, SMI and SMI-W achieve the best performance across all three relation types, outperforming all baseline groups. For within-ontology prediction, SMI-W obtains the highest F-scores on ICD (71.5\%) and ATC (77.5\%), substantially improving over the strongest baseline SapBERT on ICD (67.1\%) and ATC (74.5\%). This indicates that weighting hierarchy levels helps capture parent-child relationships more effectively. Cross-ontology prediction (ICD$\rightarrow$ATC) is challenging because methods trained within each ontology may not directly align diagnosis and medication semantics. Despite this, SMI-W achieves the best Precision (53.6\%), Recall (62.2\%), and F-score (57.6\%), outperforming the strongest baseline HiTs with F-score 53.2\%. These results suggest that aggregating frozen biomedical LM embeddings along ontology-grounded Semantic IDs improves medical concept representations for both within-ontology and cross-ontology relation prediction tasks.
% Among hierarchy-agnostic LMs, the encoder-only BioBERT consistently outperforms the decoder-only BioMistral, consistent with prior findings \citep{Chen2025-cv}. This supports our choice of an encoder backbone for SMI. 

\subsection{SMI Embeddings Capture Medical Ontology Structure}

We applied the Uniform Manifold Approximation and Projection (UMAP) to visualize ICD diagnosis embeddings in 2D \citep{mcinnes2018umap-software}. Each point represents a decimal-level ICD code with frequency $\geq 5$ and is colored by its major disease category. As shown in Fig.~\ref{fig:emb_vis}, BioBERT embeddings are highly mixed across major categories, indicating that hierarchy-agnostic LM embeddings do not clearly separate disease groups. In contrast, SMI embeddings form more distinct clusters by major disease category and further split into coherent subgroups that align with subcategories in the medical ontology. This comparison suggests that hierarchy-aware aggregation helps SMI capture clinically meaningful structure that is not evident from hierarchy-agnostic LM embeddings.

% To assess whether SMI-H yields hierarchy-consistent geometry, we visualize embeddings of all descendant concepts under major category 3 (Endocrine disease category) using a Poincaré map (Fig.~\ref{fig:emb_vis}b) \citep{poincare_map}. A high-resolution visualization is available in Appendix~\ref{app:visual}. The visualization shows a  radial structure: coarse-grained concepts (e.g., 3 Endocrine disease and 3.2 Diabetes mellitus without complication) lie near the center, whereas fine-grained ICD codes lie nearer the boundary.
% This structure aligns with the geometry of the Poincaré ball, where distance grows exponentially with depth. This enables SMI-H to allocate greater capacity to fine-grained concepts while preserving the hierarchical structure of the medical ontology.

To quantify whether the embeddings align with the ICD ontology, we applied K-means clustering with the number of clusters set to the number of major disease categories. As shown in Table~\ref{tab:cluster_aggregate}, SMI achieves stronger ontology-consistent clustering than BioBERT, with higher NMI (0.77 vs. 0.39), ARI (0.56 vs. 0.25), and Purity (0.78 vs. 0.51). SMI also improves per-category clustering quality, increasing Recall from 0.41 to 0.80 and Precision from 0.35 to 0.61, indicating more coherent grouping within the same major category. In addition, SMI better separates ICD major categories, with a higher Silhouette score (0.52 vs. 0.04) and a lower Intra/Inter Ratio (0.29 vs. 0.82). Full aggregate and per-category clustering results with detailed analyses are provided in Appendix~\ref{app:clustering}.

\begin{figure}[t]
    \centering
    \includegraphics[width=\textwidth]{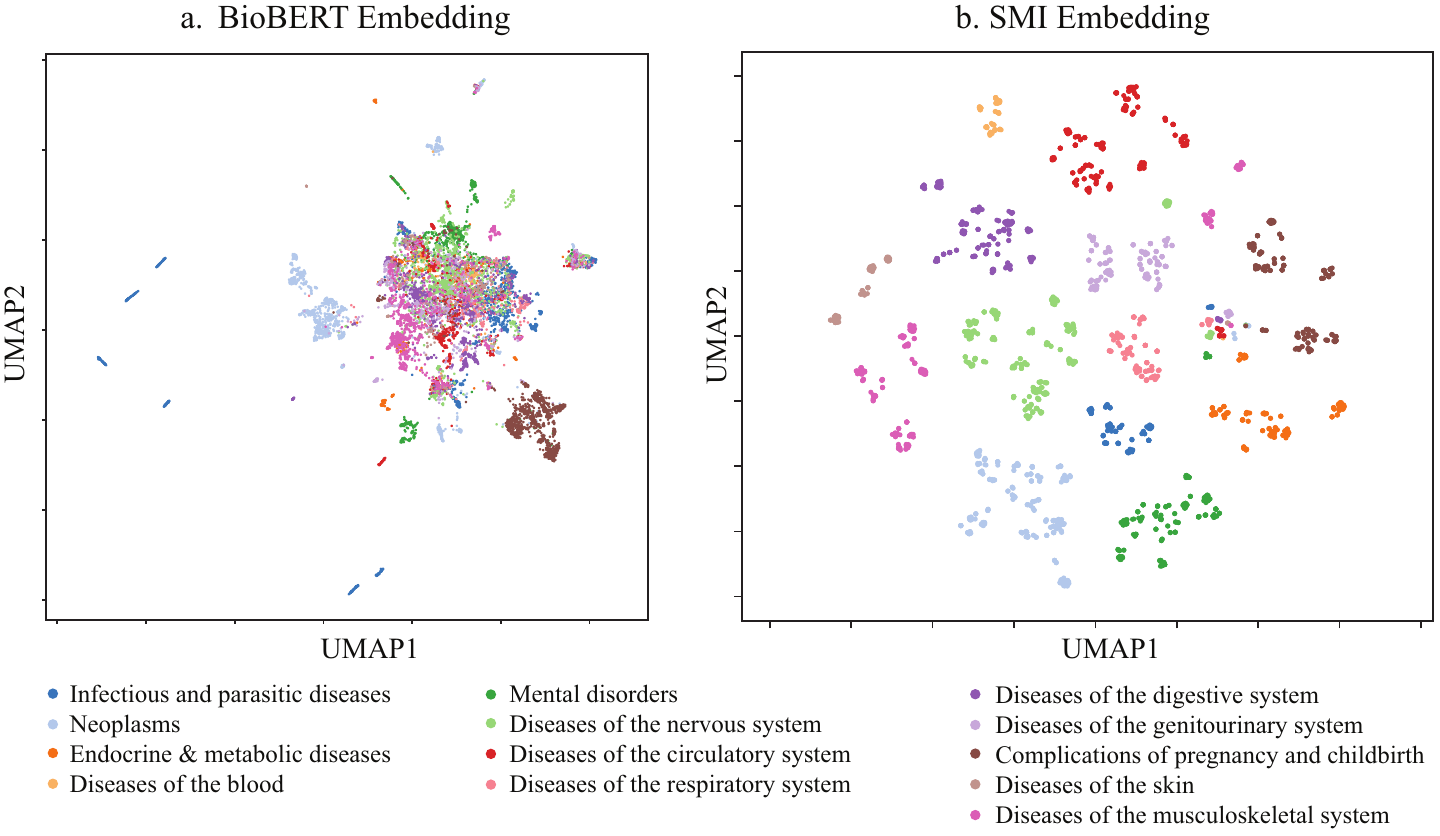} 
    \vspace{-1\baselineskip} 
    \caption{Visualization of ICD diagnosis embeddings across 13 major categories using UMAP. (a) BioBERT embeddings. (b) SMI embeddings. ICD codes embedded by SMI form more distinct category-level clusters that are better aligned with the medical ontology.}
    \label{fig:emb_vis}
    \vspace{-1\baselineskip} 
\end{figure}

\subsection{Evaluation on MIMIC-IV and eICU EHR Datasets}

We evaluated SMI medical concept representations on two EHR datasets, MIMIC-IV and eICU \cite{johnson2023mimic, eicu}. As the first EHR dataset, we used MIMIC‑IV  for clinical prediction tasks, including mortality and readmission prediction as well as drug recommendation. We extracted clinical codes from 123,488 patients and mapped them to their corresponding five-level Semantic IDs.
As the second EHR dataset, we used the eICU dataset to evaluate distribution shifts in patient observations across hospitals using both SMI and BioBERT embedding methods. We mapped diagnoses and medications to 72 unique integer-level ICD-9 codes and 241 unique five-level ATC codes. We selected the 10 hospitals with the largest number of patients, resulting in a total of 24,179 patients and an average of 2,418 patients per hospital. We split each dataset into training, validation, and test sets using a 70\%/10\%/20\% split.
Detailed settings for the MIMIC-IV clinical prediction tasks and the eICU distribution shift analysis are provided in Sections~\ref{sec: pred_results} and~\ref{sec:dist_shift_main}, respectively.

\subsection{SMI Improves Clinical Prediction and Rare Drug Prediction}
\label{sec: pred_results}
For clinical prediction, we extracted diagnosis, procedure, and medication codes from MIMIC-IV and evaluated three common tasks. Mortality prediction assesses whether a patient has died within 90 days after discharge. Readmission prediction assesses whether a patient is readmitted within the next 15 days after discharge. Drug recommendation predicts the set of medications for each visit given the patient’s prior clinical history. Mortality and readmission are binary classification tasks evaluated with Area Under Precision-Recall Curve (AUPRC) and Area Under Receiver Operating Characteristic Curve (AUROC). Drug recommendation is a multi-label prediction task evaluated with AUPRC, F-score, and Jaccard. We also predicted rare drugs using 98 ATC codes with frequency between 10 and 20, using Recall of top 15 predictions as a metric. We mapped clinical codes to their SMI embeddings and used mean pooling over tokens for a patient-level representation. We compared with two groups of baselines: (1) clinical concept embedding methods such as GRAM \citep{choi2017gram}, cui2vec \citep{beam2020clinical}, and RotatE \citep{sun2019rotate}; (2) biomedical LMs including BioBERT \citep{lee2020biobert}, SapBERT \citep{liu2021learning}, G-BERT \citep{shang2019pre}, BEHRT \citep{li2020behrt}, ClinicalBERT \citep{alsentzer2019publicly}, BoxLM~\citep{tan2025boxlm}, and BERGAMOT~\citep{sakhovskiy-et-al-2024-bergamot}. We provided implementation details in Appendix~\ref{app: baselines}.

% We made classification using a linear probing approach with a single linear layer. For binary prediction of mortality and readmission, a linear classifier with a weight $w\in R^{d\times 1}$ and a sigmoid activation was trained using binary cross-entropy loss. All models were implemented in PyTorch and optimized with Adam optimizer 
% (learning rate $1\times10^{-4}$, weight decay  $1\times10^{-5}$). Training used a batch size of 64 for up to 20 epochs with early stopping. An 80/20 train–test split was applied, and performance was evaluated using AUROC and AUPRC.

% \paragraph{SMI Helps Discover Rare Clinical Code}
% \todo[inline]{TO-DO}

\begin{table}[t]
\centering
\footnotesize
\caption{Clinical prediction results on MIMIC-IV for mortality, readmission, and (rare) drug recommendation, comparing  SMI with two groups of baselines. Metrics are reported as mean (standard error) from bootstrap. The best results in each column are highlighted in bold.}
\label{tab:tasks_mimic4}
\begin{tabular}{lcccccccc}
\toprule
\textbf{Method} &
\multicolumn{2}{c}{\textbf{Mortality}} &
\multicolumn{2}{c}{\textbf{Readmission}} &
\multicolumn{3}{c}{\textbf{Drug}} &
\multicolumn{1}{c}{\textbf{Rare Drug}} \\
\cmidrule(lr){2-3}\cmidrule(lr){4-5}\cmidrule(lr){6-8}\cmidrule(lr){9-9}
& \textbf{AUPRC} & \textbf{AUROC}
& \textbf{AUPRC} & \textbf{AUROC}
& \textbf{AUPRC} & \textbf{F-score} & \textbf{Jaccard}
& \textbf{Recall@15} \\
\midrule
\rowcolor{methodgreen}
\textbf{SMI} & 44.5 (0.5) & 87.4 (1.0) & \textbf{48.9 (0.2)} & 72.5 (0.5) & 75.6 (0.1) & 62.4 (0.1) & 48.6 (0.1) & 41.4 (0.2) \\
\rowcolor{methodgreen}
\textbf{SMI-W} & \textbf{45.1 (0.3)} & \textbf{88.0 (0.7)} & 48.5 (0.3) & \textbf{72.9 (0.3)} & \textbf{77.6 (0.1)} & \textbf{65.3 (0.2)} & \textbf{50.5 (0.1)} & \textbf{44.2 (0.3)}\\
% 41.3 (0.8) & 83.7 (1.3) & 45.7 (0.6) & 70.3 (0.6) & \textbf{76.3 (0.1)} & \textbf{64.3 (0.2)} & \textbf{50.7 (0.2)} & \textbf{44.2 (0.3)} \\
\midrule
GRAM & 38.5 (0.7) & 79.1 (0.9) & 43.2 (0.8) & 68.2 (0.4) & 72.0 (0.3) & 61.3 (0.4) & 48.6 (0.4) & 34.5 (0.2) \\
cui2vec  & 39.0 (0.4) & 80.4 (0.8) & 43.7 (0.6) & 68.9 (0.3) & 75.2 (0.1) & 64.3 (0.3) & 49.9  (0.4) & 35.6 (0.1) \\
RotatE  & 39.2 (0.3) & 79.6 (0.6) & 43.6 (0.4) & 69.2 (0.6) & 74.1 (0.2) & 62.7 (0.4) & 48.7 (0.4) & 36.4 (0.1) \\
HiTs & 42.7 (0.6) & 81.5 (0.7) & 45.2 (0.4) & 69.6 (0.8) & 74.8 (0.2)& 61.6 (0.2) & 48.0 (0.1) & 37.5 (0.1) \\
OnT & 43.3 (0.5) & 83.4 (1.1) & 45.9 (0.7) &70.3 (0.8)  & 71.0 (0.1) & 57.3 (0.2) & 44.5 (0.1) &35.8 (0.1) \\
BioBERT & 43.2 (0.9) & 86.1 (1.3) & 46.5 (0.9) & 71.5 (1.2) & 73.9 (0.1)& 60.3 (0.2) & 44.5 (0.2) & 37.6 (0.2) \\
SapBERT  & 42.4 (0.6) & 83.7 (1.2) & 45.3 (0.7) & 70.2 (1.1)  & 74.6 (0.1) & 60.5 (0.2) & 45.7(0.4) & 40.0 (0.3) \\
G-BERT & 42.7 (0.4) & 85.0 (1.3) & 47.3 (0.4) & 70.1 (0.7) & 75.4 (0.1)& 61.4 (0.3) & 45.5 (0.3) & 40.7 (0.3)  \\
BEHRT  & 42.0 (0.3) & 84.3 (0.7) & 44.5 (0.4) &  69.0 (0.8) & 74.3 (0.2) & 60.2 (0.2) & 45.7 (0.3) & 37.5 (0.3)  \\
ClinicalBERT & 43.5 (0.9) & 86.2 (0.8) & 45.7 (0.6) & 70.4 (0.6) & 73.8 (0.2) & 59.4(0.4) & 44.0 (0.4) & 38.4 (0.4) \\
BoxLM & 41.7 (1.4) & 83.5 (1.2) & 43.8 (0.8) & 69.1 (0.9) & 73.6 (0.4) & 59.1 (0.2) & 43.3 (0.5) & 35.9 (0.5)\\
BERGAMOT & 43.8 (0.9) & 85.9 (0.7) & 45.6 (0.6) & 71.0 (1.1) & 75.2 (0.3) & 61.7 (0.6) & 48.1 (0.3) & 38.6 (0.5)  \\
\bottomrule
\end{tabular}
\vspace{-0.5\baselineskip} 
\end{table}

As shown in Table~\ref{tab:tasks_mimic4}, SMI and SMI-W consistently improve clinical prediction on MIMIC-IV. SMI and SMI-W generally achieve stronger mortality and readmission prediction performance than the baselines in terms of AUPRC and AUROC. The gains are especially clear for drug recommendation, where SMI-W achieves the best AUPRC, F-score, and Jaccard scores (77.6\%, 65.3\%, and 50.5\%), significantly outperforming the strongest baseline ($p=0.015$). For rare drug prediction, SMI-W achieves the highest Recall@15 (44.2\%), significantly outperforming the strongest baseline G-BERT ($p=0.0081$), showing that SMI is particularly effective for long-tailed fine-grained codes. These results suggest that aggregating frozen biomedical LM embeddings along ontology-grounded Semantic IDs benefits downstream clinical prediction, particularly rare drug prediction.
Compared with GRAM, which performs hierarchy-aware aggregation within a supervised prediction model, SMI constructs hierarchy-aware medical concept representations from frozen biomedical LM embeddings without task-specific training. 
Compared with structure-aware encoders such as HiTs and OnT, which enforce structural consistency for individual concept embeddings, SMI explicitly aggregates hierarchy-level semantics into a final concept representation.

% \begin{figure*}[!t]
%     \centering
%     \includegraphics[width=\textwidth]{figures/dist_shifts.png} 
%     \caption{Cross-hospital distribution shift analysis based on average absolute log-likelihood differences ($|\Delta|$) between source and target hospitals. Each heatmap illustrates the distribution shift of patient embedding using SMI (left) and BioBERT  (right), both evaluated with a VAE-based density model. SMI embeddings exhibit smaller distribution shifts (average $|\Delta| = 50.77$) compared to  BioBERT ($|\Delta| = 74.06$). 
%     % suggesting better cross-site consistency and generalization.
%     }
%     \label{fig:dist_shifts}
% \end{figure*}

% \begin{figure}[t]
%     \centering
%     \includegraphics[width=\linewidth]{figures/dist_shift_diff_heatmap.png}
%     \caption{Cross-hospital distribution shift comparison between BioBERT and SMI. Each cell shows $|\Delta|_{\text{BioBERT}} - |\Delta|_{\text{SMI}}$; red colors indicate larger shifts under BioBERT, implying that SMI is more cross-site robust. The p-value is computed on off-diagonal hospital pairs and tests whether $|\Delta|_{\text{SMI}} < |\Delta|_{\text{BioBERT}}$, indicating that SMI achieves significantly lower cross-hospital shift.}
%     \label{fig:dist_shift_diff}
% \end{figure}

\begin{table}[t]
\centering
\footnotesize
\setlength{\tabcolsep}{4.5pt}
\caption{Ablations on clinical relation prediction comparing the LM baseline, SMI, and SMI-W.}
\label{tab:ablation_hier_pred}
\begin{tabular}{lccc ccc ccc}
\toprule
\multirow{2}{*}{\textbf{Method}} &
\multicolumn{3}{c}{\textbf{ICD}} &
\multicolumn{3}{c}{\textbf{ATC}} &
\multicolumn{3}{c}{\textbf{ICD$\rightarrow$ATC}} \\
\cmidrule(lr){2-4}\cmidrule(lr){5-7}\cmidrule(lr){8-10}
& \textbf{Precision} & \textbf{Recall} & \textbf{F-score}
& \textbf{Precision} & \textbf{Recall} & \textbf{F-score}
& \textbf{Precision} & \textbf{Recall} & \textbf{F-score} \\
\midrule
% \rowcolor{baselinegray}
BioBERT & 30.4 & 75.8 & 43.4 & 50.7 & 45.7 & 48.1 & 49.3 & 57.0 & 52.9 \\
\rowcolor{methodgreen}
SMI & 70.6 & 67.9 & 69.2 & 83.8 & 67.6 & 74.9 & 53.2 & 61.4 & 57.0 \\
\rowcolor{methodgreen}
SMI-W & 77.2 & 66.6 & 71.5 & 86.1 & 70.5 & 77.5 & 53.6 & 62.2 & 57.6 \\
\bottomrule
\end{tabular}
\vspace{-0.5\baselineskip} 
\end{table}

\subsection{SMI Mitigates Cross-Hospital Distribution Shift}
\label{sec:dist_shift_main}

The multi-center eICU dataset exhibits substantial heterogeneity across hospitals in patient populations, clinical practice patterns, and data collection processes \citep{zhu2025fedweight}. We thus use \textit{hospital} as the patient-group axis and evaluate cross-site distribution shift, as such heterogeneity can degrade the generalizability of learned patient representations. Specifically, we fit a hospital-specific VAE density model on patient embeddings encoded by either SMI or BioBERT \citep{vae}. For a source hospital $k$ and target hospital $t$, we measured the shift by the log-likelihood gap:
\begin{align}
    \Delta = \mathbb{E}_{x_k} \left[ \log p_k(x_k) \right] - \mathbb{E}_{x_t} \left[ \log p_k(x_t) \right]
\end{align}
\clearpage
\begin{wrapfigure}{r}{0.5\textwidth}
    \vspace{-10pt}
    \centering
    \includegraphics[width=\linewidth]{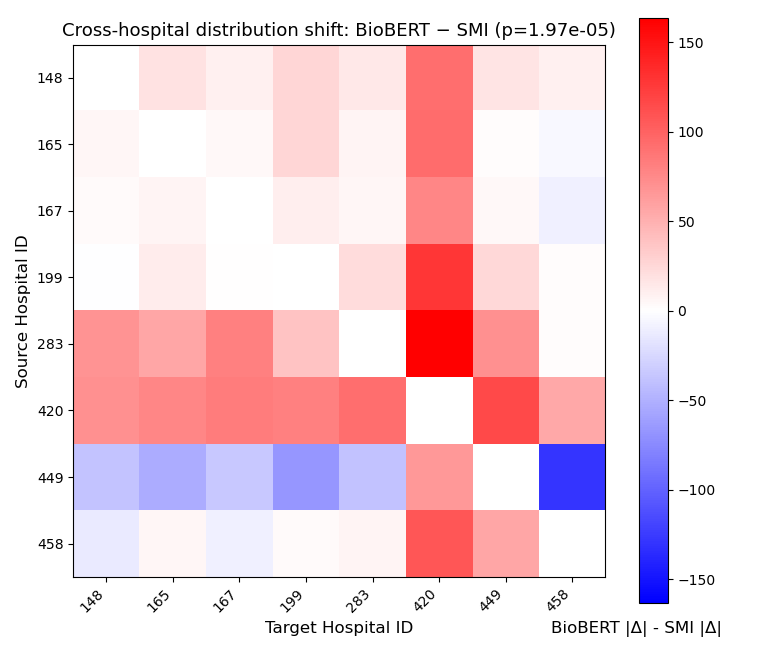}
    \vspace{-10pt}
    \caption{Cross-hospital distribution shift comparison between BioBERT and SMI. Each cell shows $|\Delta|_{\text{BioBERT}} - |\Delta|_{\text{SMI}}$; red colors indicate that SMI is more cross-site robust. P-value indicates that SMI achieves significantly lower distribution shift.}
    \label{fig:dist_shift_diff}
\end{wrapfigure}
where $\log p_k(x_k)$ and $\log p_k(x_t)$ are the log‑likelihoods of in‑domain $x_k$ and out‑of‑domain $x_t$ evaluated by the VAE trained on hospital $k$.  More evaluation details are provided in Appendix~\ref{app:dist_shift}. Fig.~\ref{fig:dist_shifts} visualizes the cross-hospital distribution shift $|\Delta|$ for each source–target hospital pair under a source-trained density model, where larger $|\Delta|$ indicates greater shift between the source and target distributions. SMI yields consistently smaller shift  than BioBERT (50.77 vs.\ 74.06), indicating improved cross-site robustness. Fig.~\ref{fig:dist_shift_diff} shows cross-hospital distribution shift between BioBERT and SMI by the difference ($|\Delta{\text{BioBERT}} - |\Delta|{\text{SMI}}$) across source–target hospital pairs. Most off-diagonal entries are red, indicating larger shifts under BioBERT and thus better cross-site robustness for SMI. We also computed p-values using the Wilcoxon signed-rank paired test over all off-diagonal hospital pairs. SMI achieves significantly smaller cross-hospital shifts ($p=1.97\times 10^{-5}$), showing that SMI improves cross-site robustness of representations.

\subsection{Ablation Studies}

We ablate SMI on clinical relation prediction by comparing BioBERT, SMI, and SMI-W (Table~\ref{tab:ablation_hier_pred}). BioBERT directly encodes each code description without using ontology structure. As shown in Table~\ref{tab:ablation_hier_pred}, SMI consistently outperforms BioBERT across ICD, ATC, and ICD$\rightarrow$ATC relation prediction, showing that simple aggregation of ontology path improves hierarchy-aware concept representation. SMI-W further improves over uniform SMI by selecting global level weights, suggesting that hierarchy levels contribute unequally to the final concept representation.

\section{Conclusion}

Our study makes the following key contributions. First, we propose SMI, a training-free framework that generates hierarchy-aware concept representations by integrating medical ontologies with LM embeddings. Second, we provide a theoretical analysis showing that SMI improves SNR and yields more informative embeddings. Third, qualitative analyses confirm that SMI learns clinically meaningful structures aligned with medical hierarchies. Finally, extensive experiments on clinical relation prediction, downstream clinical prediction, and cross-site distribution shift demonstrate the broad utility of SMI. The current study focuses on expert-curated coding systems such as ICD and ATC. Applying SMI to emerging disease classifications, newly introduced drugs, or local coding standards may require additional ontology validation and code mapping. For example, Canadian healthcare systems use ICD-10-CA or Drug Identification Numbers, which must be carefully mapped before deployment. Future work will extend SMI to clinician-validated ontology harmonization across coding systems, missing-link discovery, and broader biomedical knowledge resources. We will also study how to jointly integrate multiple ontologies within a unified SMI framework.

 % Future work will extend SMI to clinician-validated ontology harmonization, missing-link discovery, and broader biomedical knowledge resources such as UMLS. We will also study how to integrate multiple ontologies within a unified SMI framework.

% Clinician oversight is therefore important to ensure that the selected ontology and code mappings match the target healthcare system. 

\section*{Acknowledgments}

This work was supported in part by the NVIDIA Academic Grant Program through an award to Ziyang Song at Ohio University. 

\clearpage

% \section*{Acknowledgments}

% This work was supported in part by the NVIDIA Academic Grant Program through an award to Ziyang Song at Ohio University. 

\bibliographystyle{unsrtnat}
\bibliography{example_paper}

@article{lee2020biobert,
  title={BioBERT: a pre-trained biomedical language representation model for biomedical text mining},
  author={Lee, Jinhyuk and Yoon, Wonjin and Kim, Sungdong and Kim, Donghyeon and Kim, Sunkyu and So, Chan Ho and Kang, Jaewoo},
  journal={Bioinformatics},
  volume={36},
  number={4},
  pages={1234--1240},
  year={2020},
  publisher={Oxford University Press}
}

@ARTICLE{icd,
  title     = "International classification of diseases, 10th edition, clinical
               modification and procedure coding system: descriptive overview
               of the next generation {HIPAA} code sets",
  author    = "Steindel, Steven J",
  journal   = "J. Am. Med. Inform. Assoc.",
  publisher = "Oxford University Press (OUP)",
  volume    =  17,
  number    =  3,
  pages     = "274--282",
  month     =  may,
  year      =  2010,
  language  = "en"
}

@article{atc,
    doi = {10.1371/journal.pone.0035254},
    author = {Chen, Lei AND Zeng, Wei-Ming AND Cai, Yu-Dong AND Feng, Kai-Yan AND Chou, Kuo-Chen},
    journal = {PLOS ONE},
    publisher = {Public Library of Science},
    title = {Predicting Anatomical Therapeutic Chemical (ATC) Classification of Drugs by Integrating Chemical-Chemical Interactions and Similarities},
    year = {2012},
    month = {04},
    volume = {7},
    url = {https://doi.org/10.1371/journal.pone.0035254},
    pages = {1-7},
    number = {4},

}

@misc{saferxagent,
      title={SafeRx-Agent: A Knowledge-Grounded Multi-Agent Framework for Safe and Explainable Medication Recommendation}, 
      author={Xinyu Wang and Hanwei Wu and Zhenghan Tai and Sicheng Lyu and Qincheng Lu and Ziyu Zhao and Jijun Chi and Jingrui Tian and Xiao-Wen Chang and Ziyang Song},
      year={2026},
      eprint={2605.29146},
      archivePrefix={arXiv},
      primaryClass={cs.CL},
      url={https://arxiv.org/abs/2605.29146}, 
}

@inproceedings{timelygpt,
author = {Song, Ziyang and Lu, Qincheng and Xu, Hao and Zhu, He and Buckeridge, David and Li, Yue},
title = {TimelyGPT: Extrapolatable Transformer Pre-training for Long-term Time-Series Forecasting in Healthcare},
year = {2024},
isbn = {9798400713026},
publisher = {Association for Computing Machinery},
address = {New York, NY, USA},
url = {https://doi.org/10.1145/3698587.3701364},
doi = {10.1145/3698587.3701364},
booktitle = {Proceedings of the 15th ACM International Conference on Bioinformatics, Computational Biology and Health Informatics},
articleno = {16},
numpages = {10},
location = {Shenzhen, China},
series = {BCB '24}
}

@ARTICLE{trajgpt,
  title     = "{TrajGPT}: Irregular time-series representation learning of
               health trajectory",
  author    = "Song, Ziyang and Lu, Qincheng and Zhu, He and Buckeridge, David
               and Li, Yue",
  journal   = "IEEE J. Biomed. Health Inform.",
  publisher = "Institute of Electrical and Electronics Engineers (IEEE)",
  volume    = "PP",
  pages     = "1--14",
  month     =  oct,
  year      =  2025,
  copyright = "https://ieeexplore.ieee.org/Xplorehelp/downloads/license-information/IEEE.html",
  language  = "en"
}

@article{eicu,
author = {Pollard, Tom and Johnson, Alistair and Raffa, Jesse and Celi, Leo and Mark, Roger and Badawi, Omar},
year = {2018},
month = {09},
pages = {180178},
title = {The eICU Collaborative Research Database, a freely available multi-center database for critical care research},
volume = {5},
journal = {Scientific Data},
doi = {10.1038/sdata.2018.178}
}

@article{vae,
  title={Auto-encoding variational bayes},
  author={Kingma, Diederik P and Welling, Max},
  journal={arXiv preprint arXiv:1312.6114},
  year={2013}
}

@article{burda2015importance,
  title={Importance weighted autoencoders},
  author={Burda, Yuri and Grosse, Roger and Salakhutdinov, Ruslan},
  journal={arXiv preprint arXiv:1509.00519},
  year={2015}
}

@article{mcinnes2018umap-software,
  title={UMAP: Uniform Manifold Approximation and Projection},
  author={McInnes, Leland and Healy, John and Saul, Nathaniel and Grossberger, Lukas},
  journal={The Journal of Open Source Software},
  volume={3},
  number={29},
  pages={861},
  year={2018}
}

@inproceedings{skip_gram,
author = {Mikolov, Tomas and Sutskever, Ilya and Chen, Kai and Corrado, Greg and Dean, Jeffrey},
title = {Distributed representations of words and phrases and their compositionality},
year = {2013},
publisher = {Curran Associates Inc.},
address = {Red Hook, NY, USA},
booktitle = {Proceedings of the 27th International Conference on Neural Information Processing Systems - Volume 2},
pages = {3111–3119},
numpages = {9},
location = {Lake Tahoe, Nevada},
series = {NIPS'13}
}

@article{zhu2025fedweight,
  title={FedWeight: mitigating covariate shift of federated learning on electronic health records data through patients re-weighting},
  author={Zhu, He and Bai, Jun and Li, Na and Li, Xiaoxiao and Liu, Dianbo and Buckeridge, David L and Li, Yue},
  journal={npj Digital Medicine},
  volume={8},
  number={1},
  pages={286},
  year={2025},
  publisher={Nature Publishing Group UK London}
}

@InProceedings{pmlr-v80-ganea18a,
  title = 	 {Hyperbolic Entailment Cones for Learning Hierarchical Embeddings},
  author =       {Ganea, Octavian and Becigneul, Gary and Hofmann, Thomas},
  booktitle = 	 {Proceedings of the 35th International Conference on Machine Learning},
  pages = 	 {1646--1655},
  year = 	 {2018},
  editor = 	 {Dy, Jennifer and Krause, Andreas},
  volume = 	 {80},
  series = 	 {Proceedings of Machine Learning Research},
  month = 	 {10--15 Jul},
  publisher =    {PMLR},
  url = 	 {https://proceedings.mlr.press/v80/ganea18a.html}
}

@ARTICLE{Zou2022-te,
  title    = "Modeling electronic health record data using an end-to-end
              knowledge-graph-informed topic model",
  author   = "Zou, Yuesong and Pesaranghader, Ahmad and Song, Ziyang and Verma,
              Aman and Buckeridge, David L and Li, Yue",
  journal  = "Sci. Rep.",
  volume   =  12,
  number   =  1,
  pages    = "17868",
  month    =  oct,
  year     =  2022,
  language = "en"
}

@article{he2024language,
  title={Language models as hierarchy encoders},
  author={He, Yuan and Yuan, Moy and Chen, Jiaoyan and Horrocks, Ian},
  journal={Advances in Neural Information Processing Systems},
  volume={37},
  pages={14690--14711},
  year={2024}
}

@article{johnson2023mimic,
  title={MIMIC-IV, a freely accessible electronic health record dataset},
  author={Johnson, Alistair EW and Bulgarelli, Lucas and Shen, Lu and Gayles, Alvin and Shammout, Ayad and Horng, Steven and Pollard, Tom J and Hao, Sicheng and Moody, Benjamin and Gow, Brian and others},
  journal={Scientific data},
  volume={10},
  number={1},
  pages={1},
  year={2023},
  publisher={Nature Publishing Group UK London}
}

@article{zhang_2019,
	author = {Zhang, Yijia and Chen, Qingyu and Yang, Zhihao and Lin, Hongfei and Lu, Zhiyong},
	journal = {Scientific Data},
	title = {{BioWordVec, improving biomedical word embeddings with subword information and MeSH}},
	year = {2019}
}

@software{kury_atcd_2024,
  author       = {Kury, Fabr{\'\i}cio},
  title        = {Scrape Anatomical-Therapeutic-Chemical (ATC) classes from the WHO Collaborating Centre for Drug Statistics Methodology website},
  year         = {2024},
  publisher    = {GitHub},
  url          = {https://github.com/fabkury/atcd}
}

@inproceedings{remy-etal-2022-biolord,
    title = "{B}io{LORD}: Learning Ontological Representations from Definitions for Biomedical Concepts and their Textual Descriptions",
    author = "Remy, Fran{\c{c}}ois  and
      Demuynck, Kris  and
      Demeester, Thomas",
    editor = "Goldberg, Yoav  and
      Kozareva, Zornitsa  and
      Zhang, Yue",
    booktitle = "Findings of the Association for Computational Linguistics: EMNLP 2022",
    month = dec,
    year = "2022",
    address = "Abu Dhabi, United Arab Emirates",
    publisher = "Association for Computational Linguistics",
    url = "https://aclanthology.org/2022.findings-emnlp.104/",
    doi = "10.18653/v1/2022.findings-emnlp.104",
    pages = "1454--1465"
}

@ARTICLE{Yuan2022-fj,
  title     = "{CODER}: Knowledge-infused cross-lingual medical term embedding
               for term normalization",
  author    = "Yuan, Zheng and Zhao, Zhengyun and Sun, Haixia and Li, Jiao and
               Wang, Fei and Yu, Sheng",
  journal   = "J. Biomed. Inform.",
  publisher = "Elsevier BV",
  volume    =  126,
  number    =  103983,
  pages     = "103983",
  month     =  feb,
  year      =  2022,
  copyright = "http://www.elsevier.com/open-access/userlicense/1.0/",
  language  = "en"
}

@inproceedings{
gao2018representation,
title={Representation Degeneration Problem in Training Natural Language Generation Models},
author={Jun Gao and Di He and Xu Tan and Tao Qin and Liwei Wang and Tieyan Liu},
booktitle={International Conference on Learning Representations},
year={2019},
url={https://openreview.net/forum?id=SkEYojRqtm},
}

@inproceedings{choi2017gram,
  title={GRAM: graph-based attention model for healthcare representation learning},
  author={Choi, Edward and Bahadori, Mohammad Taha and Song, Le and Stewart, Walter F and Sun, Jimeng},
  booktitle={Proceedings of the 23rd ACM SIGKDD international conference on knowledge discovery and data mining},
  pages={787--795},
  year={2017}
}

@article{shang2019pre,
  title={Pre-training of graph augmented transformers for medication recommendation},
  author={Shang, Junyuan and Ma, Tengfei and Xiao, Cao and Sun, Jimeng},
  journal={arXiv preprint arXiv:1906.00346},
  year={2019}
}

@inproceedings{beam2020clinical,
  title={Clinical concept embeddings learned from massive sources of multimodal medical data},
  author={Beam, Andrew L and Kompa, Benjamin and Schmaltz, Allen and Fried, Inbar and Weber, Griffin and Palmer, Nathan and Shi, Xu and Cai, Tianxi and Kohane, Isaac S},
  booktitle={Pacific Symposium on Biocomputing. Pacific Symposium on Biocomputing},
  volume={25},
  pages={295},
  year={2020}
}

@article{sun2019rotate,
  title={Rotate: Knowledge graph embedding by relational rotation in complex space},
  author={Sun, Zhiqing and Deng, Zhi-Hong and Nie, Jian-Yun and Tang, Jian},
  journal={arXiv preprint arXiv:1902.10197},
  year={2019}
}

@article{li2020behrt,
  title={BEHRT: transformer for electronic health records},
  author={Li, Yikuan and Rao, Shishir and Solares, Jos{\'e} Roberto Ayala and Hassaine, Abdelaali and Ramakrishnan, Rema and Canoy, Dexter and Zhu, Yajie and Rahimi, Kazem and Salimi-Khorshidi, Gholamreza},
  journal={Scientific reports},
  volume={10},
  number={1},
  pages={7155},
  year={2020},
  publisher={Nature Publishing Group UK London}
}

@article{rasmy2021med,
  title={Med-BERT: pretrained contextualized embeddings on large-scale structured electronic health records for disease prediction},
  author={Rasmy, Laila and Xiang, Yang and Xie, Ziqian and Tao, Cui and Zhi, Degui},
  journal={NPJ digital medicine},
  volume={4},
  number={1},
  pages={86},
  year={2021},
  publisher={Nature Publishing Group UK London}
}

@inproceedings{alsentzer2019publicly,
  title={Publicly available clinical BERT embeddings},
  author={Alsentzer, Emily and Murphy, John and Boag, William and Weng, Wei-Hung and Jindi, Di and Naumann, Tristan and McDermott, Matthew},
  booktitle={Proceedings of the 2nd clinical natural language processing workshop},
  pages={72--78},
  year={2019}
}

@inproceedings{sung2020biomedical,
    title = "Biomedical Entity Representations with Synonym Marginalization",
    author = "Sung, Mujeen  and
      Jeon, Hwisang  and
      Lee, Jinhyuk  and
      Kang, Jaewoo",
    editor = "Jurafsky, Dan  and
      Chai, Joyce  and
      Schluter, Natalie  and
      Tetreault, Joel",
    booktitle = "Proceedings of the 58th Annual Meeting of the Association for Computational Linguistics",
    month = jul,
    year = "2020",
    address = "Online",
    publisher = "Association for Computational Linguistics",
    url = "https://aclanthology.org/2020.acl-main.335/",
    doi = "10.18653/v1/2020.acl-main.335",
    pages = "3641--3650"
}

@inproceedings{liu2021learning,
	title={Learning Domain-Specialised Representations for Cross-Lingual Biomedical Entity Linking},
	author={Liu, Fangyu and Vuli{\'c}, Ivan and Korhonen, Anna and Collier, Nigel},
	booktitle={Proceedings of ACL-IJCNLP 2021},
	pages = {565--574},
	month = aug,
	year={2021}
}

@inproceedings{sakhovskiy-et-al-2024-bergamot,
    title = "Biomedical Entity Representation with Graph-Augmented Multi-Objective Transformer",
    author = "Sakhovskiy, Andrey and Semenova, Natalia and Kadurin, Artur and Tutubalina, Elena",
    booktitle = "Findings of the Association for Computational Linguistics: NAACL 2024",
    month = jun,
    year = "2024",
    address = "Mexico City, Mexico",
    publisher = "Association for Computational Linguistics",
  }

@ARTICLE{Remy2024-hb,
  title     = "{BioLORD-2023}: semantic textual representations fusing large
               language models and clinical knowledge graph insights",
  author    = "Remy, Fran{\c c}ois and Demuynck, Kris and Demeester, Thomas",
  journal   = "J. Am. Med. Inform. Assoc.",
  publisher = "Oxford University Press (OUP)",
  volume    =  31,
  number    =  9,
  pages     = "1844--1855",
  month     =  sep,
  year      =  2024,
  copyright = "https://creativecommons.org/licenses/by/4.0/",
  language  = "en"
}

@inproceedings{OnT,
author = {Yang, Hui and Chen, Jiaoyan and He, Yuan and Gao, Yongsheng and Horrocks, Ian},
title = {Language Models as Ontology Encoders},
year = {2025},
isbn = {978-3-032-09526-8},
publisher = {Springer-Verlag},
address = {Berlin, Heidelberg},
url = {https://doi.org/10.1007/978-3-032-09527-5_24},
doi = {10.1007/978-3-032-09527-5_24},
booktitle = {The Semantic Web – ISWC 2025: 24th International Semantic Web Conference, Nara, Japan, November 2–6, 2025, Proceedings, Part I},
pages = {443–461},
numpages = {19},
location = {Nara, Japan}
}

@inproceedings{
tan2025boxlm,
title={Box{LM}: Unifying Structures and Semantics of Medical Concepts for Diagnosis Prediction in Healthcare},
author={Yanchao Tan and Hang Lv and Yunfei Zhan and Guofang Ma and Bo Xiong and Carl Yang},
booktitle={Forty-second International Conference on Machine Learning},
year={2025},
url={https://openreview.net/forum?id=jIci8mGVeh}
}

@article{schick2020rare, 
title={Rare Words: A Major Problem for Contextualized Embeddings and How to Fix it by Attentive Mimicking}, 
volume={34}, url={https://ojs.aaai.org/index.php/AAAI/article/view/6403}, 
DOI={10.1609/aaai.v34i05.6403}, 
abstractNote={&lt;p&gt;Pretraining deep neural network architectures with a language modeling objective has brought large improvements for many natural language processing tasks. Exemplified by BERT, a recently proposed such architecture, we demonstrate that despite being trained on huge amounts of data, deep language models still struggle to understand rare words. To fix this problem, we adapt Attentive Mimicking, a method that was designed to explicitly learn embeddings for rare words, to deep language models. In order to make this possible, we introduce &lt;em&gt;one-token approximation&lt;/em&gt;, a procedure that enables us to use Attentive Mimicking even when the underlying language model uses subword-based tokenization, i.e., it does not assign embeddings to all words. To evaluate our method, we create a novel dataset that tests the ability of language models to capture semantic properties of words without any task-specific fine-tuning. Using this dataset, we show that adding our adapted version of Attentive Mimicking to BERT does substantially improve its understanding of rare words.&lt;/p&gt;}, number={05}, journal={Proceedings of the AAAI Conference on Artificial Intelligence}, 
author={Schick, Timo and Sch{\"u}tze, Hinrich},
year={2020}, 
month={Apr.}, 
pages={8766-8774} }

@inproceedings{schick2020bertram,
    title = "{BERTRAM}: Improved Word Embeddings Have Big Impact on Contextualized Model Performance",
    author={Schick, Timo and Sch{\"u}tze, Hinrich},

    editor = "Jurafsky, Dan  and
      Chai, Joyce  and
      Schluter, Natalie  and
      Tetreault, Joel",
    booktitle = "Proceedings of the 58th Annual Meeting of the Association for Computational Linguistics",
    month = jul,
    year = "2020",
    address = "Online",
    publisher = "Association for Computational Linguistics",
    url = "https://aclanthology.org/2020.acl-main.368/",
    doi = "10.18653/v1/2020.acl-main.368",
    pages = "3996--4007"
}

@inproceedings{
TIGER,
title={Recommender Systems with Generative Retrieval},
author={Shashank Rajput and Nikhil Mehta and Anima Singh and Raghunandan Hulikal Keshavan and Trung Vu and Lukasz Heldt and Lichan Hong and Yi Tay and Vinh Q. Tran and Jonah Samost and Maciej Kula and Ed H. Chi and Maheswaran Sathiamoorthy},
booktitle={Thirty-seventh Conference on Neural Information Processing Systems},
year={2023},
url={https://openreview.net/forum?id=BJ0fQUU32w}
}

@INPROCEEDINGS{LC-Rec,
  title           = "Adapting large language models by integrating
                     collaborative semantics for recommendation",
  booktitle       = "2024 {IEEE} 40th International Conference on Data
                     Engineering ({ICDE})",
  author          = "Zheng, Bowen and Hou, Yupeng and Lu, Hongyu and Chen, Yu
                     and Zhao, Wayne Xin and Chen, Ming and Wen, Ji-Rong",
  publisher       = "IEEE",
  month           =  may,
  year            =  2024,
  conference      = "2024 IEEE 40th International Conference on Data
                     Engineering (ICDE)",
  location        = "Utrecht, Netherlands"
}

@inproceedings{LETTER,
author = {Wang, Wenjie and Bao, Honghui and Lin, Xinyu and Zhang, Jizhi and Li, Yongqi and Feng, Fuli and Ng, See-Kiong and Chua, Tat-Seng},
title = {Learnable Item Tokenization for Generative Recommendation},
year = {2024},
isbn = {9798400704369},
publisher = {Association for Computing Machinery},
address = {New York, NY, USA},
url = {https://doi.org/10.1145/3627673.3679569},
doi = {10.1145/3627673.3679569},
booktitle = {Proceedings of the 33rd ACM International Conference on Information and Knowledge Management},
pages = {2400–2409},
numpages = {10},
location = {Boise, ID, USA},
series = {CIKM '24}
}

@inproceedings{ETEGRec,
author = {Liu, Enze and Zheng, Bowen and Ling, Cheng and Hu, Lantao and Li, Han and Zhao, Wayne Xin},
title = {Generative Recommender with End-to-End Learnable Item Tokenization},
year = {2025},
isbn = {9798400715921},
publisher = {Association for Computing Machinery},
address = {New York, NY, USA},
url = {https://doi.org/10.1145/3726302.3729989},
doi = {10.1145/3726302.3729989},
booktitle = {Proceedings of the 48th International ACM SIGIR Conference on Research and Development in Information Retrieval},
pages = {729–739},
numpages = {11},
location = {Padua, Italy},
series = {SIGIR '25}
}

@inproceedings{song2022seed,
author = {Song, Ziyang and Hu, Yuanyi and Verma, Aman and Buckeridge, David L. and Li, Yue},
title = {Automatic Phenotyping by a Seed-guided Topic Model},
year = {2022},
isbn = {9781450393850},
publisher = {Association for Computing Machinery},
address = {New York, NY, USA},
url = {https://doi.org/10.1145/3534678.3542675},
doi = {10.1145/3534678.3542675},
booktitle = {Proceedings of the 28th ACM SIGKDD Conference on Knowledge Discovery and Data Mining},
pages = {4713–4723},
numpages = {11},
location = {Washington DC, USA},
series = {KDD '22}
}

@ARTICLE{Yang2026-mb,
  title     = "{PheCode-guided} multi-modal topic modeling of electronic health
               records improves disease incidence prediction and {GWAS}
               discovery from {UK} Biobank",
  author    = "Yang, Ziqi and Song, Ziyang and Zabad, Shadi and Legault,
               Marc-Andr{\'e} and Li, Yue",
  journal   = "Brief. Bioinform.",
  publisher = "Oxford University Press (OUP)",
  volume    =  27,
  number    =  1,
  month     =  jan,
  year      =  2026,
  copyright = "https://creativecommons.org/licenses/by/4.0/",
  language  = "en"
}

\clearpage

%%%%%%%%%%%%%%%%%%%%%%%%%%%%%%%%%%%%%%%%%%%%%%%%%%%%%%%%%%%%

\appendix

\section{Notation Table}

\begin{table}[h]
\centering
\caption{Summary of Notations and Variables used in SMI}
\label{tab:notation}
\begin{tabular}{l|l}
\toprule
\textbf{Notation} & \textbf{Description} \\
\midrule
\multicolumn{2}{c}{\textit{SMI and Embedding}} \\
\midrule
$L$ & Total number of hierarchy levels \\
$l$ & Index of a specific hierarchy level \\
$C^{(\ell)}$ & Set of clinical codes at hierarchy level $\ell$ \\
$c_{j}^{(\ell)}$ & The $j$-th clinical code at hierarchy level $\ell$ \\
$\mathcal{P}(i)$ & Semantic ID of concept $i$ \\
$a_{i}^{(l)}$ & Ancestor code of concept $i$ at level $l$ \\
$\mathbf{e}^{(\ell)}_{j}$ & Embedding of the $j$-th code at level $\ell$\\
$\mathbf{E}^{(\ell)} $ & Embedding matrix for level $\ell$  \\
$\text{Enc}(\cdot)$ & Frozen biomedical LM encoder \\
$\mathbf{e}^{\text{SMI}}_{i}$ & SMI embedding  \\
$w_\ell$ & Global weight for hierarchy level $\ell$ \\
$\mathbf{e}_i^{\text{SMI-W}}$ & Weighted SMI embedding \\
\midrule
% \multicolumn{2}{c}{\textit{Hyperbolic Geometry and Contrastive Learning}} \\
% \midrule
% $c$ & Negative curvature of the hyperbolic space ($c > 0$) \\
% $\mathbb{B}_{c}^{d}$ & $d$-dimensional Poincaré ball with curvature $-c$ \\
% $\mathcal{L}_{HCL}$ & Hyperbolic Contrastive Learning loss \\
% $(c, p, n)$ & Triplet: anchor, positive, negative samples \\
% $\alpha$ & Margin hyperparameter for triplet loss \\
% \midrule
\multicolumn{2}{c}{\textit{Theoretical Analysis}} \\
\midrule
$\mathbf{e}_i^{\mathrm{LM}}$ & Hierarchy-agnostic LM embedding  \\
$\boldsymbol{\mu}_i$ & True semantic vector for concept $i$ \\
$\boldsymbol{\epsilon}_i$ & Embedding estimation noise vector \\
$n_{i}$ & Frequency of concept $i$ in the training corpus \\
$\mathrm{SNR}(i,j)$ & Pairwise Signal-to-Noise Ratio between concepts $i$ and $j$ \\
$\delta_{c}^{(l+1)}$ & Semantic offset of child concept $c$ at level $\ell+1$ with respect to its parent \\
\bottomrule
\end{tabular}
\end{table}

\clearpage

\section{Theoretical Proofs and Empirical Support for Assumptions}

\label{sec:proof}

\subsection{Estimation Variance under Long-Tailed Concept Distribution (Assumption~\ref{ass:embedding_est})}
\label{app:variance-freq}

We model the learned embedding of a clinical concept $i$, denoted by $\mathbf{e}_i^{\mathrm{LM}} \in \mathbb{R}^d$, as a noisy estimate of its underlying true semantic vector $\boldsymbol{\mu}_i$:
$$
\mathbf{e}_i^{\mathrm{LM}} = \boldsymbol{\mu}_i + \boldsymbol{\epsilon}_i.
$$

\begin{proposition}[Estimator variance decreases with frequency $1/n_i$]\label{pro:noise-var}
Let $\mathbf{e}_i^{\mathrm{LM}}$ denote the empirical estimated embedding of concept $i$ computed from $n_i$ independent samples. We thus have:
\[
\mathbb{E}[\mathbf{e}_i^{\mathrm{LM}}] = \boldsymbol{\mu}_i,
\qquad
\mathbb{E}\left[\left\|\mathbf{e}_i^{\mathrm{LM}} - \boldsymbol{\mu}_i\right\|^2\right]= \frac{\sigma^2 d}{n_i}.
\]

As the frequency $n_i$ decreases, the embedding estimation becomes more unstable
and its variance increases.
\end{proposition}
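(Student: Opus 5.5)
The plan is to model $\mathbf{e}_i^{\mathrm{LM}}$ as an empirical mean of $n_i$ independent per-occurrence contextual representations. Let $\mathbf{x}_{i,1},\dots,\mathbf{x}_{i,n_i}$ be the representations of concept $i$ drawn from its $n_i$ occurrences in the training corpus. I will assume these are i.i.d. with mean $\boldsymbol{\mu}_i$ and isotropic covariance $\sigma^2\mathbf{I}_d$, and set
\[
\mathbf{e}_i^{\mathrm{LM}} = \frac{1}{n_i}\sum_{k=1}^{n_i}\mathbf{x}_{i,k},
\qquad
\boldsymbol{\epsilon}_i = \frac{1}{n_i}\sum_{k=1}^{n_i}(\mathbf{x}_{i,k}-\boldsymbol{\mu}_i).
\]
This is exactly the sense in which Assumption~\ref{ass:embedding_est} holds. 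Gaussianity is not needed for the two moment statements; it only serves to justify the $\mathcal{N}(\mathbf{0},\frac{\sigma^2}{n_i}\mathbf{I}_d)$ form, which follows by the CLT or exactly if each $\mathbf{x}_{i,k}$ is Gaussian.

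Unbiasedness follows by linearity of expectation: $\mathbb{E}[\mathbf{e}_i^{\mathrm{LM}}]=\frac{1}{n_i}\sum_k \mathbb{E}[\mathbf{x}_{i,k}]=\boldsymbol{\mu}_i$.

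For the second moment, I will write
\[
\mathbb{E}\|\boldsymbol{\epsilon}_i\|^2=\operatorname{tr}\operatorname{Cov}(\boldsymbol{\epsilon}_i).
\]
By independence, the cross terms $\mathbb{E}[(\mathbf{x}_{i,k}-\boldsymbol{\mu}_i)^\top(\mathbf{x}_{i,m}-\boldsymbol{\mu}_i)]$ vanish for $k\neq m$. Hence
\[
\operatorname{Cov}(\boldsymbol{\epsilon}_i)=\frac{1}{n_i^2}\sum_{k=1}^{n_i}\sigma^2\mathbf{I}_d=\frac{\sigma^2}{n_i}\mathbf{I}_d,
\]
and taking the trace gives $\sigma^2 d/n_i$. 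The monotonicity claim is then immediate, since $n_i\mapsto \sigma^2 d/n_i$ is strictly decreasing.

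The only delicate step is not computational. It is making explicit the modeling choice that the frozen LM embedding behaves like an average of independent occurrence-level signals with common isotropic variance. I will state this as the interpretive premise behind Assumption~\ref{ass:embedding_est}, and note that dependence between occurrences would only rescale $\sigma^2$ to an effective sample size, leaving the $1/n_i$ trend intact.
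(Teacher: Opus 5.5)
Your proposal is correct and follows essentially the same route as the paper, which also treats $\mathbf{e}_i^{\mathrm{LM}}$ as the mean of $n_i$ independent samples, gets unbiasedness from linearity and $\operatorname{Cov}(\mathbf{e}_i^{\mathrm{LM}})=\Sigma_i/n_i$ from independence, and then specializes to $\Sigma_i=\sigma^2\mathbf{I}_d$ (you assume isotropy up front and make the trace step explicit, which the paper leaves implicit). One caution: your closing aside that dependence between occurrences merely rescales to an effective sample size is not true in general. Equicorrelated samples with correlation $\rho>0$ give per-coordinate variance $\sigma^2\bigl[\rho+(1-\rho)/n_i\bigr]$, which plateaus at $\rho\sigma^2$ rather than decaying like $1/n_i$, so drop the aside or restrict it to weakly dependent occurrences.
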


\paragraph{Proof.}
Let the empirical embedding of concept $i$ be
\[
\mathbf{e}_i^{\mathrm{LM}} = \frac{1}{n_i} \sum_{k=1}^{n_i} \mathbf{x}_k,
\]
where each sample $\mathbf{x}_k \in \mathbb{R}^d$ satisfies
$\mathbb{E}[\mathbf{x}_k] = \boldsymbol{\mu}_i$ and
$\operatorname{Cov}(\mathbf{x}_k) = \Sigma_i$, independently across $k$.

We first show that the empirical mean is an unbiased estimator of the true semantic vector
$\boldsymbol{\mu}_i$, i.e., $\mathbb{E}[\mathbf{e}_i^{\mathrm{LM}}] = \boldsymbol{\mu}_i$, which follows from the linearity of expectation:
\[
\begin{aligned}
    \mathbb{E}[\mathbf{e}_i^{\mathrm{LM}}] &= \mathbb{E}\!\left[\frac{1}{n_i}\sum_{i=1}^{n_i} \mathbf{x}_i \right] \\
    &= \frac{1}{n_i} \sum_{k=1}^{n_i} \mathbb{E}[x_k] \\
    &= \boldsymbol{\mu}_i .
\end{aligned}
\]

Since the samples are independent, cross-covariance terms vanish, and the covariance is given by
\[
\begin{aligned}
    \operatorname{Cov}(\mathbf{e}_i^{\mathrm{LM}}) 
    &= \operatorname{Cov}\!\left(\frac{1}{n_i}\sum_{k=1}^{n_i} \mathbf{x}_k\right) \\
    &= \frac{1}{n_i^2} \sum_{k=1}^{n_i} \operatorname{Cov}(x_k) \\
    &= \frac{1}{n_i}\Sigma_i .
\end{aligned}
\]

Thus the estimator is unbiased, and its variance decreases at rate $1/n_i$. For simplicity and without loss of generality, we assume isotropic noise $\Sigma_i=\sigma^2 I_d$, which matches Assumption~\ref{ass:embedding_est}, establishing Proposition~\ref{pro:noise-var}.
\hfill$\square$

This derivation motivates the formal noise assumption introduced in Assumption~\ref{ass:embedding_est}, which models the frequency-dependent uncertainty of LM embeddings.
\[
\mathbf{e}_i^{\mathrm{LM}} = \boldsymbol{\mu}_i + \boldsymbol{\epsilon}_i,
\qquad
\boldsymbol{\epsilon}_i \sim \mathcal{N}\left( \mathbf{0}, \frac{\sigma^2}{n_i} \mathbf{I}_d \right)
\]

\subsection{Quantifying Noise Difference Between Two Concepts (Definition~\ref{def:pairwise-snr})}
\label{app:noise-diff}

In this subsection, we consider two independent clinical concepts and quantify the expected variability between their noisy embeddings. This result serves to define the pairwise noise scale, as introduced in Definition~\ref{def:pairwise-snr}.

\begin{lemma}[Noise Difference Between Two Concepts]\label{lem:noise-diff}

For two concepts $i$ and $j$, we assume their embedding noise vectors
$\boldsymbol{\epsilon}_i$ and $\boldsymbol{\epsilon}_j$ are independent and satisfy
\[
    \boldsymbol{\epsilon}_c \sim \mathcal{N} \left(\mathbf{0}, \frac{\sigma^2}{n_c}\mathbf{I}_d \right),
    \qquad c\in\{i,j\}.
\]

Then the covariance of the noise difference $\boldsymbol{\epsilon}_i - \boldsymbol{\epsilon}_j$ is
\[
\operatorname{Cov}(\boldsymbol{\epsilon}_i - \boldsymbol{\epsilon}_j)
= \sigma^2\!\left(\frac{1}{n_i} + \frac{1}{n_j}\right)\mathbf{I}_d.
\]

Consequently, the expected squared magnitude of the noise difference satisfies

\[
\mathbb{E}\,\bigl\|\boldsymbol{\epsilon}_i - \boldsymbol{\epsilon}_j\bigr\|^2
= \sigma^2 d\!\left(\frac{1}{n_i} + \frac{1}{n_j}\right),
\]

and the Root Mean Square (RMS) magnitude of the noise difference satisfies
\[
\bigl\|\boldsymbol{\epsilon}_i - \boldsymbol{\epsilon}_j\bigr\|_{\mathrm{RMS}}
:= \sqrt{\mathbb{E}\bigl\|\boldsymbol{\epsilon}_i - \boldsymbol{\epsilon}_j\bigr\|^2}
= \sigma \sqrt{\,d\!\left(\frac{1}{n_i} + \frac{1}{n_j}\right)} \, .
\]
\end{lemma}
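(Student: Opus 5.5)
The plan is to compute the covariance of the noise difference directly from independence and the Gaussian assumption in Assumption~\ref{ass:embedding_est}, and then read off the second moment as a trace.

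\textbf{Covariance.} Since $\boldsymbol{\epsilon}_i$ and $\boldsymbol{\epsilon}_j$ are independent and both have mean zero, $\boldsymbol{\epsilon}_i-\boldsymbol{\epsilon}_j$ also has mean zero. Expanding
\[
\mathbb{E}\bigl[(\boldsymbol{\epsilon}_i-\boldsymbol{\epsilon}_j)(\boldsymbol{\epsilon}_i-\boldsymbol{\epsilon}_j)^\top\bigr]
\]
produces four terms. The two cross terms $\mathbb{E}[\boldsymbol{\epsilon}_i\boldsymbol{\epsilon}_j^\top]=\mathbb{E}[\boldsymbol{\epsilon}_i]\,\mathbb{E}[\boldsymbol{\epsilon}_j]^\top=\mathbf{0}$ vanish by independence. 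The two diagonal terms give $\frac{\sigma^2}{n_i}\mathbf{I}_d+\frac{\sigma^2}{n_j}\mathbf{I}_d$. This yields the stated covariance. As a byproduct, the difference is Gaussian, being a linear map of the jointly Gaussian vector $(\boldsymbol{\epsilon}_i,\boldsymbol{\epsilon}_j)$, although we do not need this.

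\textbf{Second moment and RMS.} Use the identity $\mathbb{E}\|\mathbf{z}\|^2=\operatorname{tr}\operatorname{Cov}(\mathbf{z})+\|\mathbb{E}\mathbf{z}\|^2$ with $\mathbf{z}=\boldsymbol{\epsilon}_i-\boldsymbol{\epsilon}_j$. The mean term is zero, and the trace of $\sigma^2(1/n_i+1/n_j)\mathbf{I}_d$ is $\sigma^2 d(1/n_i+1/n_j)$. The RMS formula then follows by taking square roots, which is just the definition given in the statement. This also justifies the simplified form of the pairwise SNR in Definition~\ref{def:pairwise-snr}.

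\textbf{Main obstacle.} The only substantive point is making sure independence is actually available. Assumption~\ref{ass:embedding_est} specifies only the marginal distributions, so independence of $\boldsymbol{\epsilon}_i$ and $\boldsymbol{\epsilon}_j$ has to be taken from the lemma's own hypothesis. That hypothesis is justified when $i\neq j$ are estimated from disjoint samples, as in the empirical-mean model of Proposition~\ref{pro:noise-var}. If that hypothesis were dropped, a cross-covariance term $-2\operatorname{tr}\operatorname{Cov}(\boldsymbol{\epsilon}_i,\boldsymbol{\epsilon}_j)$ would remain in the second moment, and the formula would need to be corrected accordingly.
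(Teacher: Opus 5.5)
Your proof is correct and follows essentially the same route as the paper: independence removes the cross-covariance terms, so the two covariances add. Taking the trace of $\sigma^2(1/n_i+1/n_j)\mathbf{I}_d$ (with the zero-mean term made explicit, as you do) then gives the second moment, and the RMS follows by a square root. Your remark that independence must come from the lemma's own hypothesis is also accurate: the paper's proof attributes it to Assumption~\ref{ass:embedding_est}, which only specifies the marginals.
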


\begin{proof} Under Assumption~\ref{ass:embedding_est}, the noise vectors are independent and satisfy:
\[
\operatorname{Cov}(\boldsymbol{\epsilon}_i) = \frac{\sigma^2}{n_i} \mathbf{I}_d,
\qquad
\operatorname{Cov}(\boldsymbol{\epsilon}_j) = \frac{\sigma^2}{n_j} \mathbf{I}_d.
\]

By independence, the covariance of their noise difference is:
\[
\operatorname{Cov}(\boldsymbol{\epsilon}_i - \boldsymbol{\epsilon}_j)
= \operatorname{Cov}(\boldsymbol{\epsilon}_i) + \operatorname{Cov}(\boldsymbol{\epsilon}_j)
= \sigma^2 \left( \frac{1}{n_i} + \frac{1}{n_j} \right) \mathbf{I}_d.
\]

Taking the trace gives the expected squared norm:
\[
\mathbb{E}\,\|\boldsymbol{\epsilon}_i - \boldsymbol{\epsilon}_j\|^2
= \operatorname{Tr}\left( \operatorname{Cov}(\boldsymbol{\epsilon}_i - \boldsymbol{\epsilon}_j) \right)
= \sigma^2 d \left( \frac{1}{n_i} + \frac{1}{n_j} \right).
\]

Finally, taking the square root yields the RMS magnitude:
\[
\|\boldsymbol{\epsilon}_i - \boldsymbol{\epsilon}_j\|_{\mathrm{RMS}}
= \sqrt{\mathbb{E}\,\|\boldsymbol{\epsilon}_i - \boldsymbol{\epsilon}_j\|^2}
= \sigma \sqrt{d \left( \frac{1}{n_i} + \frac{1}{n_j} \right)}.
\]
\end{proof}

\begin{lemma}[Noise Concentration in High Dimensions]\label{lem:noise-conc}
Under Assumption \ref{ass:embedding_est}, for two concepts $i$ and $j$, let
$\boldsymbol{\epsilon}_i \sim \mathcal{N}\!\left(0,\frac{\sigma^2}{n_i}\mathbf{I}_d\right)$ and
$\boldsymbol{\epsilon}_j \sim \mathcal{N}\!\left(0,\frac{\sigma^2}{n_j}\mathbf{I}_d\right)$ be independent.
Then the noise difference norm concentrates around its RMS scale:
$$
\frac{\bigl\|\boldsymbol{\epsilon}_i-\boldsymbol{\epsilon}_j\bigr\|}
{\sigma \sqrt{d\!\left(\frac{1}{n_i}+\frac{1}{n_j}\right)}}
\xrightarrow{p} 1 \qquad (d\to\infty),
$$
and equivalently,
$$
\frac{\sigma \sqrt{d\!\left(\frac{1}{n_i}+\frac{1}{n_j}\right)}}
{\bigl\|\boldsymbol{\epsilon}_i-\boldsymbol{\epsilon}_j\bigr\|}
\xrightarrow{p} 1 \qquad (d\to\infty).
$$
\end{lemma}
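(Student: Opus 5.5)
The plan is to reduce the statement to a weak law of large numbers for a sum of i.i.d.\ squared Gaussians. Since $\boldsymbol{\epsilon}_i$ and $\boldsymbol{\epsilon}_j$ are independent isotropic Gaussians, Lemma~\ref{lem:noise-diff} (more precisely, its covariance computation together with closure of Gaussians under linear combinations) gives
\[
\boldsymbol{\epsilon}_i-\boldsymbol{\epsilon}_j \sim \mathcal{N}\!\left(\mathbf{0}, s^2\mathbf{I}_d\right), \qquad s^2 := \sigma^2\!\left(\tfrac{1}{n_i}+\tfrac{1}{n_j}\right).
\]
Writing $\boldsymbol{\epsilon}_i-\boldsymbol{\epsilon}_j = s\,\mathbf{z}$ with $\mathbf{z}\sim\mathcal{N}(\mathbf{0},\mathbf{I}_d)$, the ratio in the statement equals $\|\mathbf{z}\|/\sqrt{d}$. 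This quantity no longer depends on $\sigma$, $n_i$, or $n_j$, so the frequencies play no role in the limit $d\to\infty$.

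Next, $\|\mathbf{z}\|^2/d = \frac{1}{d}\sum_{k=1}^d z_k^2$ is an average of i.i.d.\ $\chi^2_1$ variables with mean $1$ and variance $2$. Chebyshev's inequality gives
\[
\Pr\!\left(\left|\|\mathbf{z}\|^2/d - 1\right|>\eta\right) \le \frac{2}{d\eta^2} \to 0,
\]
so $\|\mathbf{z}\|^2/d \xrightarrow{p} 1$. If a quantitative rate is desired, the Laurent--Massart $\chi^2$ tail bound gives exponential concentration instead; Chebyshev is enough for convergence in probability.

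Finally, the continuous mapping theorem applies: $x\mapsto\sqrt{x}$ is continuous on $[0,\infty)$, so $\|\mathbf{z}\|/\sqrt{d}\xrightarrow{p}1$. The map $x\mapsto 1/x$ is continuous at $1$, so the reciprocal ratio also converges to $1$ in probability, which is the second display. Convergence in probability to a nonzero constant is preserved under reciprocals, and the event $\|\mathbf{z}\|=0$ has probability zero, so the ratio is almost surely well defined.

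There is no real obstacle. The only point needing care is that $n_i$ and $n_j$ are held fixed (or may vary with $d$) while $d\to\infty$. The normalization by $s$ removes them exactly, so the convergence is uniform in the frequencies.
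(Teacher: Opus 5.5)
Your proposal is correct and follows essentially the same route as the paper. Both arguments use Lemma~\ref{lem:noise-diff} to write the difference as an isotropic Gaussian, observe that the normalized squared norm is distributed as $\chi^2_d/d$ and hence converges in probability to $1$, and then apply continuous mapping for the square root and the reciprocal; your Chebyshev bound makes explicit the concentration step that the paper only asserts.
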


\begin{proof}
By Lemma \ref{lem:noise-diff},
\[
\boldsymbol{\epsilon}_i-\boldsymbol{\epsilon}_j
\sim \mathcal{N}\!\left(
0,\;
\sigma^2\Bigl(\tfrac{1}{n_i}+\tfrac{1}{n_j}\Bigr)\mathbf{I}_d
\right),
\]
and
\[
\mathbb{E}\,\|\boldsymbol{\epsilon}_i-\boldsymbol{\epsilon}_j\|^2
=
\sigma^2 d\!\left(\tfrac{1}{n_i}+\tfrac{1}{n_j}\right).
\]

Since the noise difference is an isotropic Gaussian vector, its squared norm divided by its expectation follows a normalized $\chi^2_d$ distribution. Therefore,
\[
\frac{\|\boldsymbol{\epsilon}_i-\boldsymbol{\epsilon}_j\|^2}
{\sigma^2 d\!\left(\tfrac{1}{n_i}+\tfrac{1}{n_j}\right)}
\xrightarrow{p} 1
\qquad (d\to\infty).
\]
Taking square roots yields
\[
\frac{\|\boldsymbol{\epsilon}_i-\boldsymbol{\epsilon}_j\|}
{\sigma \sqrt{d\!\left(\tfrac{1}{n_i}+\tfrac{1}{n_j}\right)}}
\xrightarrow{p} 1.
\]
The reciprocal convergence follows by continuous mapping.
\end{proof}

\subsection{Noise Concentration in High Dimensions (Theorem~\ref{thm:separability_collapse})}
\label{app:noise-concentration}

\begin{proof}[Proof of Theorem~\ref{thm:separability_collapse}]
Let $i$ and $j$ be two low-frequency, fine-grained concepts considered in
Theorem~\ref{thm:separability_collapse}. Under Assumption~\ref{ass:embedding_est},
their hierarchy-agnostic LM embeddings admit the decomposition
\[
\mathbf e_i^{\mathrm{LM}} = \boldsymbol\mu_i + \boldsymbol\epsilon_i,
\qquad
\mathbf e_j^{\mathrm{LM}} = \boldsymbol\mu_j + \boldsymbol\epsilon_j,
\]
where $\boldsymbol\epsilon_i \sim \mathcal N(\mathbf 0, \sigma^2 n_i^{-1}\mathbf I_d)$
and $\boldsymbol\epsilon_j \sim \mathcal N(\mathbf 0, \sigma^2 n_j^{-1}\mathbf I_d)$
are independent.

Recall from Definition~\ref{def:pairwise-snr} that the pairwise signal-to-noise ratio
is defined as
\[
\operatorname{SNR}(i,j)
=
\frac{\|\boldsymbol{\mu}_i - \boldsymbol{\mu}_j\|}
{\sigma \sqrt{d\!\left(\frac{1}{n_i}+\frac{1}{n_j}\right)}} .
\]
By the low-SNR assumption in Theorem~\ref{thm:separability_collapse}, this quantity
satisfies $\operatorname{SNR}(i,j) \ll 1$.

By Lemma~\ref{lem:noise-diff}, the noise difference
$\boldsymbol\epsilon_i - \boldsymbol\epsilon_j$ is an isotropic Gaussian vector with
covariance
$\sigma^2\!\left(\frac{1}{n_i}+\frac{1}{n_j}\right)\mathbf I_d$.
Lemma~\ref{lem:noise-conc} then implies the concentration result
\[
\frac{\|\boldsymbol\epsilon_i - \boldsymbol\epsilon_j\|}
{\sigma \sqrt{d\!\left(\frac{1}{n_i}+\frac{1}{n_j}\right)}}
\xrightarrow{p} 1 ,
\]
and equivalently,
\[
\frac{\sigma \sqrt{d\!\left(\frac{1}{n_i}+\frac{1}{n_j}\right)}}
{\|\boldsymbol\epsilon_i - \boldsymbol\epsilon_j\|}
\xrightarrow{p} 1 .
\]

Combining the above, we write
\[
\frac{\|\boldsymbol\mu_i - \boldsymbol\mu_j\|}
{\|\boldsymbol\epsilon_i - \boldsymbol\epsilon_j\|}
=
\underbrace{
\frac{\|\boldsymbol\mu_i - \boldsymbol\mu_j\|}
{\sigma \sqrt{d\!\left(\frac{1}{n_i}+\frac{1}{n_j}\right)}}
}_{\operatorname{SNR}(i,j)}
\cdot
\underbrace{
\frac{\sigma \sqrt{d\!\left(\frac{1}{n_i}+\frac{1}{n_j}\right)}}
{\|\boldsymbol\epsilon_i - \boldsymbol\epsilon_j\|}
}_{\xrightarrow{p} 1}.
\]

The first factor converges to $0$ by the low-SNR assumption, while the second
factor converges in probability to $1$. By Slutsky's theorem,
\[
\frac{\|\boldsymbol\mu_i - \boldsymbol\mu_j\|}
{\|\boldsymbol\epsilon_i - \boldsymbol\epsilon_j\|}
\xrightarrow{p} 0 .
\]

Therefore, for rare fine-grained concepts, the semantic separation is asymptotically
dominated by estimation noise, and hierarchy-agnostic LM embeddings become
geometrically inseparable, completing the proof.
\end{proof}

\subsection{Separability Against Non-Parent Concepts under SMI (Proof of Proposition~\ref{prop:smi_nonparent_separable})}
\label{app:smi-nonparent-proof}

Recall Assumption~\ref{ass:hier-decom}, which states that for a leaf concept $c$ with parent $p$,
the true semantic representation decomposes as
\[
\boldsymbol{\mu}_c
=
\boldsymbol{\mu}_p
+
\boldsymbol{\delta}_c,
\]
where $\boldsymbol{\delta}_c$ is a child-specific semantic offset.
Under SMI, the embedding of $c$ aggregates representations along its hierarchical path.
Since $p$ is an ancestor of $c$ and receives nonzero weight,
the SMI semantic signal for $c$ satisfies
\[
\boldsymbol{\mu}_c^{\mathrm{SMI}}
=
\boldsymbol{\mu}_p
+
\boldsymbol{\delta}_c^{\mathrm{res}},
\]
where $\boldsymbol{\delta}_c^{\mathrm{res}}$ denotes the residual child-specific component.

Let $j \neq p$ be any other concept such that
$\|\boldsymbol{\mu}_p - \boldsymbol{\mu}_j\| > 0$.
Then the semantic difference between $c$ and $j$ under SMI satisfies
\[
\boldsymbol{\mu}_c^{\mathrm{SMI}} - \boldsymbol{\mu}_j
=
(\boldsymbol{\mu}_p - \boldsymbol{\mu}_j)
+
\boldsymbol{\delta}_c^{\mathrm{res}}.
\]
As $n_c \to 0$, the residual term $\boldsymbol{\delta}_c^{\mathrm{res}}$ remains bounded,
while the dominant semantic separation is given by $\boldsymbol{\mu}_p - \boldsymbol{\mu}_j$.
Hence,
\[
\|\boldsymbol{\mu}_c^{\mathrm{SMI}} - \boldsymbol{\mu}_j\|
\;\longrightarrow\;
\|\boldsymbol{\mu}_p - \boldsymbol{\mu}_j\|.
\]

Under Assumption~\ref{ass:embedding_est}, the SMI noise difference
$\boldsymbol{\epsilon}_c^{\mathrm{SMI}} - \boldsymbol{\epsilon}_j$
has bounded variance because the parent-level noise does not scale with $n_c$.
Therefore,
\[
\|\boldsymbol{\epsilon}_c^{\mathrm{SMI}} - \boldsymbol{\epsilon}_j\|
\;\xrightarrow{p}\;
\|\boldsymbol{\epsilon}_p - \boldsymbol{\epsilon}_j\|.
\]

Combining the above limits yields
\[
\frac{
\|\boldsymbol{\mu}_c^{\mathrm{SMI}} - \boldsymbol{\mu}_j\|
}{
\|\boldsymbol{\epsilon}_c^{\mathrm{SMI}} - \boldsymbol{\epsilon}_j\|
}
\;\xrightarrow{p}\;
\frac{
\|\boldsymbol{\mu}_p - \boldsymbol{\mu}_j\|
}{
\|\boldsymbol{\epsilon}_p - \boldsymbol{\epsilon}_j\|
}
\;>\; 0,
\]
which establishes that SMI embeddings remain separable between $c$ and any non-parent concept $j$ in the long-tail regime.

\hfill$\square$

\subsection{SMI Improves SNR via Variance Reduction (Theorem~\ref{thm:smi_snr_gain} and Corollary~\ref{cor:uniform_smi_snr})}
\label{app:variance-reduction-smi}

For a clinical code $i$ at any hierarchy level $\ell \in \{1, \dots, L\}$, we model its empirical embedding as a noisy estimate of a true semantic vector $\mu_i \in \mathbb{R}^d$:
\[
\mathbf{e}_i^{(\ell)} = \boldsymbol{\mu}_i^{(\ell)} + \boldsymbol{\epsilon}_i^{(\ell)},
\qquad
\boldsymbol{\epsilon}_i^{(\ell)} \sim
\mathcal{N}\!\left(\mathbf{0}, \frac{\sigma^2}{n_i^{(\ell)}} \mathbf{I}_d\right),
\]
where $n_i^{(\ell)}$ is the number of samples and $\sigma^2$ is a constant noise parameter. We assume a long-tail distribution where coarser concepts are observed more frequently than fine-grained ones. Consequently, the noise variance decreases at coarser levels:
\[
n_i^{(\ell)} > n_i^{(L)}
\;\Longrightarrow\;
\frac{\sigma^2}{n_i^{(\ell)}} \mathbf{I}_d
  < \frac{\sigma^2}{n_i^{(L)}} \mathbf{I}_d,
\qquad \forall\,\ell < L.
\]

A hierarchy-agnostic LM encoder maps a fine-grained concept $i$ (at the leaf level $L$) directly to an embedding, which we model as a noisy estimate of its true semantic vector:
\[
\mathbf{e}_i^{\mathrm{LM}} = \boldsymbol{\mu}_i + \boldsymbol{\epsilon}_i,
\qquad
\boldsymbol{\epsilon}_i \sim \mathcal{N}\!\left(\mathbf{0}, \frac{\sigma^2}{n_i}\mathbf{I}_d\right).
\]

Defining the semantic signal margin as $\Delta_{\mathrm{sig}}(i,j)=\|\boldsymbol{\mu}_i-\boldsymbol{\mu}_j\|$, the corresponding pairwise SNR under hierarchy-agnostic LM embeddings is
\[
    \mathrm{SNR}_{\mathrm{LM}}(i,j)
    := \frac{\Delta_{\mathrm{sig}}(i,j)}{\sqrt{\mathbb{E}\,\bigl\|\boldsymbol{\epsilon}_i^{\mathrm{LM}} - \boldsymbol{\epsilon}_j^{\mathrm{LM}}\bigr\|^2}}
= \frac{\|\boldsymbol{\mu}_i - \boldsymbol{\mu}_j\|}
{\sigma \sqrt{\,d\!\left(\frac{1}{n_i} + \frac{1}{n_j}\right)}}.
\]

We first prove the result for the more general weighted SMI formulation, from which uniform SMI follows as a special case. For a code $i$, weighted SMI constructs a hierarchy-aware embedding by aggregating the level-specific LM embeddings of its ancestors along its hierarchical path:
\[
\mathbf{e}_i^{\mathrm{SMI\text{-}W}}
=
\sum_{\ell=1}^{L} w_\ell \, \mathbf{e}^{(\ell)}_{a_i^{(\ell)}},
\qquad
w_{\ell}>0,\quad \sum_{\ell=1}^{L} w_{\ell}=L,
\]
where $w_{\ell}$ is the aggregation weight at level $\ell$. Uniform SMI corresponds to $w_\ell=1$ for all $\ell$.
For notational simplicity, we write 
$\mathbf{e}_i^{(\ell)}$, $\boldsymbol{\epsilon}_i^{(\ell)}$, and $n_i^{(\ell)}$
for the embedding, noise, and frequency of the level-$\ell$ ancestor $a_i^{(\ell)}$, respectively.

The corresponding raw noise term of the weighted sum is
\[
\boldsymbol{\epsilon}_i^{\mathrm{SMI\text{-}W}}
=
\sum_{\ell=1}^{L} 
w_\ell\, \boldsymbol{\epsilon}^{(\ell)}_{i}.
\]

The expected squared norm of this raw noise term is
\[
\mathbb{E}\,\bigl\|
\boldsymbol{\epsilon}_i^{\mathrm{SMI\text{-}W}}
\bigr\|^2
=
\sum_{\ell=1}^{L} 
w_\ell^2\,
\mathbb{E}\,\bigl\|
\boldsymbol{\epsilon}^{(\ell)}_{i}
\bigr\|^2
=
d\sigma^2
\sum_{\ell=1}^{L}
\frac{w_\ell^2}{n_i^{(\ell)}} .
\]

For two codes $i$ and $j$, the pairwise SNR of the weighted SMI embeddings is
\[
\mathrm{SNR}_{\mathrm{SMI\text{-}W}}(i,j)
=
\frac{\Delta_{\mathrm{sig}}(i,j)}
{\sigma \sqrt{
d \left(
\sum_{\ell=1}^{L} \frac{w_\ell^2}{n_i^{(\ell)}}
+
\sum_{\ell=1}^{L} \frac{w_\ell^2}{n_j^{(\ell)}}
\right)
}}.
\]
Since the semantic margin $\Delta_{\text{sig}}(i, j)$ measures the difference of true semantic representations and thus is fixed across estimators. Consequently, comparing the SNRs of hierarchy-aware (weighted) SMI embeddings and hierarchy-agnostic LM embeddings reduces entirely to comparing their noise variances.

Since pairwise SNR is scale-invariant, i.e., $\mathrm{SNR}(\mathbf{e}) = \mathrm{SNR}(c\,\mathbf{e})$ for any scalar $c > 0$, we can analyze a normalized version of SMI-W without changing its SNR. 
For weights satisfying $\sum_{\ell=1}^{L} w_\ell=L$, define $\bar{w}_\ell=w_\ell/L$, so that $\sum_{\ell=1}^{L}\bar{w}_\ell=1$. The normalized embedding $\bar{\mathbf{e}}_i^{\mathrm{SMI\text{-}W}}=\mathbf{e}_i^{\mathrm{SMI\text{-}W}}/L$ has the same pairwise SNR as $\mathbf{e}_i^{\mathrm{SMI\text{-}W}}$, so it suffices to compare its noise with that of the LM embedding.

With this normalization, we continue to use $w_\ell$ to denote the normalized weights, and we use the same SMI-W notation for the normalized embedding in the following variance calculation. For the normalized SMI-W embedding, the noise variance is
\[
\mathbb{E}\bigl\|\boldsymbol{\epsilon}_i^{\mathrm{SMI\text{-}W}}\bigr\|^2
= d\sigma^2 \sum_{\ell=1}^{L} \frac{w_\ell^2}{n_i^{(\ell)}}.
\]

Since $w_\ell \ge 0$, expanding the square gives
\[
\left(\sum_{\ell=1}^{L} w_\ell\right)^2
=
\sum_{\ell=1}^{L} w_\ell^2
+
2\sum_{1\le k<\ell\le L}w_k w_\ell
\ge
\sum_{\ell=1}^{L} w_\ell^2 .
\]
Hence, under the normalized weights $\sum_{\ell=1}^{L} w_\ell=1$,
\[
\sum_{\ell=1}^{L} w_\ell^2
\le
\left(\sum_{\ell=1}^{L}w_\ell\right)^2
=
1,
\]
with strict inequality when positive weights are assigned to more than one hierarchy level.
Combining this with the long-tail assumption $n_i^{(\ell)} > n_i^{(L)}$ for all $\ell<L$, we obtain
\[
\sum_{\ell=1}^{L}
\frac{w_\ell^2}{n_i^{(\ell)}}
<
\frac{\sum_{\ell=1}^{L}w_\ell^2}{n_i^{(L)}}
<
\frac{1}{n_i^{(L)}}.
\]
The strict inequalities hold because normalized SMI-W assigns positive weight to multiple hierarchy levels, including at least one ancestor level that is more frequent than the leaf concept.

so that
\[
\mathbb{E}\bigl\|\boldsymbol{\epsilon}_i^{\mathrm{SMI\text{-}W}}\bigr\|^2
=
d\sigma^2
\sum_{\ell=1}^{L}
\frac{w_\ell^2}{n_i^{(\ell)}}
<
\frac{d\sigma^2}{n_i^{(L)}}
=
\mathbb{E}\bigl\|\boldsymbol{\epsilon}_i^{\mathrm{LM}}\bigr\|^2.
\]

Applying this to both codes $i$ and $j$ gives
    \[
\underbrace{\left(
\sum_{\ell=1}^{L} \frac{w_\ell^2}{n_i^{(\ell)}} 
+
\sum_{\ell=1}^{L} \frac{w_\ell^2}{n_j^{(\ell)}}
\right)}_{\text{normalized SMI-W noise scale}}
<
\underbrace{\left(
\frac{1}{n_i^{(L)}} 
+
\frac{1}{n_j^{(L)}}
\right)}_{\text{LM noise scale}}.
\]

This shows that the normalized SMI-W embedding has a smaller pairwise noise scale than the hierarchy-agnostic LM embedding. Since pairwise SNR is scale-invariant, the unnormalized and normalized SMI-W embeddings have the same SNR. Therefore,the unnormalized SMI-W embeddings achieve higher pairwise SNR than the hierarchy-agnostic LM embedding:
\[
\mathrm{SNR}_{\mathrm{SMI\text{-}W}}(i,j) 
> 
\mathrm{SNR}_{\mathrm{LM}}(i,j).
\]

Uniform SMI is a special case of SMI-W with $w_\ell=1$ for all $\ell$. Therefore, uniform SMI embeddings also achieve higher pairwise SNR than the hierarchy-agnostic LM embedding:
\[
\mathrm{SNR}_{\mathrm{SMI}}(i,j)
>
\mathrm{SNR}_{\mathrm{LM}}(i,j).
\]

% \subsection{LM Outputs Lie in a Poincaré Ball (Proposition~\ref{prop:lm_poincare_ball})}

% \begin{proof} 
% The \texttt{tanh} activation bounds each output dimension to $(-1, 1)$, implying that $\|\mathbf{x}\|_\infty < 1$. Hence,
% \[
%     \|\mathbf{x}\|_2^2 = \sum_{i=1}^d x_i^2 < \sum_{i=1}^d 1 = d.
% \]
% A vector $\mathbf{x}$ lies in $\mathbb{B}_c^d$ if $\|\mathbf{x}\|_2^2 < \frac{1}{c}$. Substituting $c = \frac{1}{d}$, we ensure that $\|\mathbf{x}\|_2^2 < d$ and every embedding naturally resides within the Poincaré ball $\mathbb{B}_c^d$.
% \end{proof}

\subsection{Offset Consistency Analysis for Hierarchical Semantic Decomposition}
\label{app:offset_consistency}

Assumption~\ref{ass:hier-decom} models a child clinical concept as the composition of its parent concept semantics and a child-specific semantic offset, such as a complication, manifestation, or severity. This provides an approximate representation-level decomposition of fine-grained clinical concepts.  To empirically assess this assumption, 
We empirically assess this assumption by testing whether ICD-9 parent-child concept pairs sharing the same clinical modifier produce more consistent embedding offsets than parent-child pairs from different modifier groups. we use the clinical modifier expressed in each ICD-9 child description relative to its parent description as the child-specific semantic offset.  For example, in the pair ``diabetes mellitus'' $\rightarrow$ ``diabetes mellitus with renal manifestations,'' the modifier is ``renal manifestations.'' We then test whether parent-child concept pairs sharing the same modifier produce more consistent embedding offsets than pairs from different modifier groups.

We construct the offset consistency analysis using the  ICD-9 parent-child concept pairs. We define each decimal-level ICD-9 code as a child concept and use its integer-level ICD-9 code as the parent concept. Each ICD-9 code description is encoded using BioBERT with mean pooling over token embeddings. For each valid parent-child pair $(p,c)$, we define the offset:
\[
    \Delta^{\mathrm{LM}}_{p,c}
    = \mathbf{e}^{\mathrm{LM}}_c - \mathbf{e}^{\mathrm{LM}}_p,
\]
where $\mathbf{e}^{\mathrm{LM}}_p$ and $\mathbf{e}^{\mathrm{LM}}_c$ denote the BioBERT embeddings of the parent and child descriptions, respectively. This offset represents the semantic change from a coarser parent concept to a more specific child concept in the LM embedding space.

We group offsets by clinical modifiers extracted from the child description relative to the parent description. Modifier groups are extracted from the ICD-9 child description relative to its parent description using rule-based lexical patterns.  We first identify explicit modifier cues, including \emph{with}, \emph{without}, \emph{due to}, \emph{secondary to}, and \emph{associated with}; when these cues are absent, we extract residual child-description terms after removing parent-description tokens. Extracted phrases are normalized by removing generic terms and canonicalizing common variants such as \emph{manifestation}/\emph{manifestations} and \emph{complication}/\emph{complications}. We further refine broad phrases into clinically specific modifier groups, including infection status, obstruction subtype, hepatic coma status, ophthalmic manifestations, retinal disorders, postpartum complications, delivery-related complications, and loss-of-consciousness status. Vague ICD descriptions containing terms such as \emph{unspecified}, \emph{other}, \emph{NEC}, \emph{NOS}, or \emph{not elsewhere classified} are removed, and unresolved coarse buckets such as \texttt{kidney/renal}, \texttt{circulatory/peripheral}, and \texttt{neurological} are omitted from the strict analysis because they merge multiple clinically distinct semantic changes. For each selected modifier group, we normalize the offset vectors and compute all pairwise cosine similarities among offsets within that group. These pairs serve as positive pairs, since they compare parent-child offsets sharing the same clinical modifier. For the negative pairs, we sample offset pairs from different selected modifier groups. This comparison tests whether parent-child offsets with the same clinical modifier are more directionally consistent than randomly paired offsets with different modifiers in the same embedding space.

After applying these selection criteria, the offset consistency analysis retains 418 modifier groups and 1,414 ICD-9 parent--child offsets, yielding 3,606 positive pairs with the same clinical modifier. The negative set consists of 50,000 randomly sampled offset pairs from different modifier groups. As shown in Table~\ref{tab:offset_consistency_strict}, positive pairs with the same clinical modifier have substantially higher cosine similarity than negative pairs from different modifier groups. The mean cosine similarity is $0.655$ versus $0.080$, with a median of $0.730$ versus $0.068$ and a mean difference of $0.575$ ($p=2.0\times10^{-4}$). Fig.~\ref{fig:offset_consistency_strict} shows that the same-modifier distribution is substantially shifted to the right relative to the different-modifier baseline. The distribution also contains both high-similarity and moderate-similarity regions, suggesting that some modifier families induce highly stable offsets while others remain more heterogeneous. This pattern is expected because ICD-9 modifiers vary in specificity and standardization. Overall, these results show that ICD-9 parent-child concept pairs sharing the same clinical modifier produce more aligned embedding offsets than pairs from different modifier groups. This supports the hierarchical semantic decomposition in Assumption~\ref{ass:hier-decom} that fine-grained clinical concepts can be represented through parent-level semantics plus a child-specific semantic offset.

\begin{table}[t]
\centering
\small
\caption{Strict ICD-9 offset consistency analysis. Same-modifier offsets are substantially more aligned than different-modifier clinical offsets, with a mean cosine difference of $0.575$ and a one-sided $p$-value of $2.0\times10^{-4}$, supporting the hierarchical semantic decomposition assumption.}
\label{tab:offset_consistency_strict}
\begin{tabular}{lcc}
\toprule
\textbf{Pair type} & \textbf{Mean cosine} & \textbf{Median cosine} \\
\midrule
Same modifier & 0.655 & 0.730 \\
Different modifier & 0.080 & 0.068 \\
\bottomrule
\end{tabular}
\end{table}
\begin{figure}[t]
    \centering
    \includegraphics[width=0.75\linewidth]{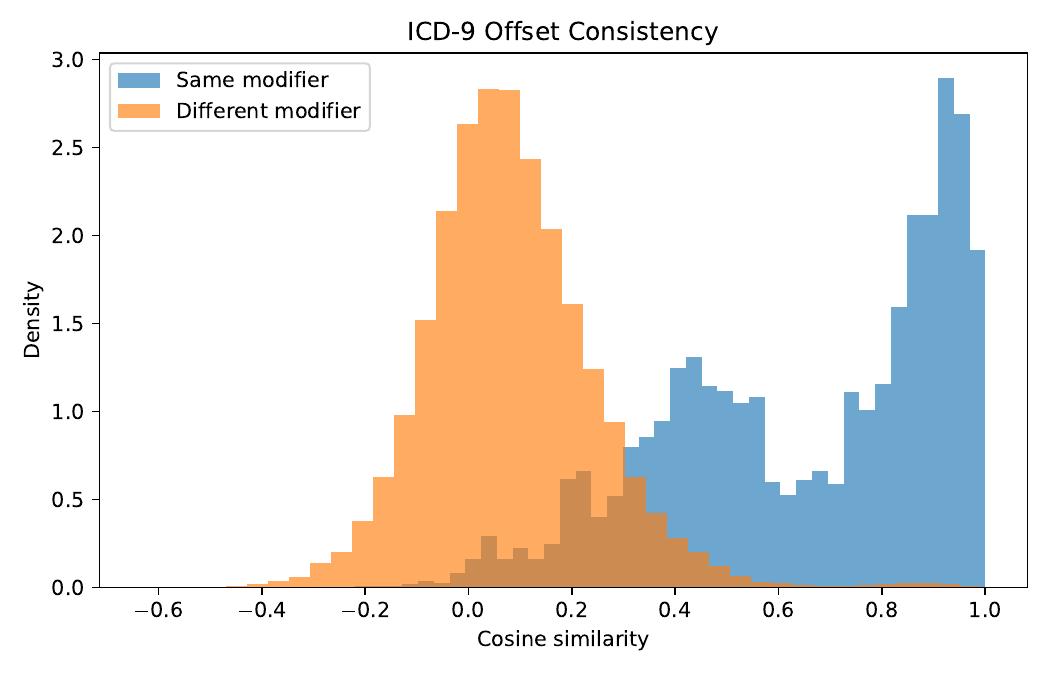}
    \caption{
    Distribution of cosine similarities between ICD-9 parent-child offset vectors in the strict modifier analysis. Same-modifier pairs compare offsets sharing the same extracted clinical modifier, while different-modifier pairs compare offsets from different modifier groups.The same-modifier distribution is shifted strongly to the right, indicating stronger offset consistency than random clinical offsets and supporting the hierarchical semantic decomposition assumption.
    }
    \label{fig:offset_consistency_strict}
\end{figure}

\section{Additional Information for Medical Ontologies}
\label{app: med_onto}
\subsection{Construction of Medical Ontologies}

We explicitly incorporate domain knowledge to represent diagnoses, procedures, and medications using expert-defined, tree-structured medical ontologies. Each clinical concept is associated with a unique hierarchical path, capturing its semantic lineage from coarse clinical categories to fine-grained codes. This one-to-one mapping ensures that each clinical code has a distinct and interpretable position in the taxonomy. 
These hierarchies define the structural relationships used to construct SMIs, allowing embeddings to encode clinically meaningful relationships consistent with expert knowledge.

\paragraph{Diagnosis. } Diagnostic concepts are organized using the CCS developed by the Agency for Healthcare Research and Quality (AHRQ). CCS provides an expert-curated framework that groups ICD-9-CM and ICD-10-CM diagnosis codes into clinically coherent categories at multiple levels of granularity. Based on this structure, we construct a five-level hierarchy comprising major categories, sub-categories, CCS codes, integer-level ICD codes, and decimal-level ICD codes. Each ICD code is uniquely assigned to a single hierarchical path, ensuring an interpretable and unambiguous disease representation.

\paragraph{Procedure.}
Procedure concepts follow the CCS procedure hierarchy, which organizes ICD-9-PCS and ICD-10-PCS codes into a multi-level taxonomy. Specifically, procedures are structured from major categories and sub-categories to CCS procedure codes, followed by three-digit ICD-PCS codes and full ICD-PCS codes. This hierarchy captures progressively finer procedural distinctions while maintaining a unique parent–child relationship at each level, resulting in a well-defined tree structure suitable for embedding learning.

\paragraph{Medication.}
Medication concepts are organized using the ATC classification system, which naturally defines a five-level hierarchy ranging from anatomical main groups to specific chemical substances. This expert-defined structure encodes pharmacological relationships among drugs and provides a clinically grounded hierarchy for representing medication concepts within the SMI framework.

Across diagnoses, procedures, and medications, each clinical code is mapped to a unique hierarchical path, yielding a unified ontology that supports effective and interpretable representation learning with knowledge-based guidance.

\subsection{Statistics of Medical Ontologies}

We adopt expert-defined clinical ontologies to represent diagnoses, procedures, and medications using multi-level hierarchical taxonomies. The summary statistics of clinical hierarchies are reported in Table~\ref{tab:hierarchy_stat}.

\textbf{Diagnosis: } Diagnosis concepts follow the CCS hierarchy, which organizes ICD-9-CM and ICD-10-CM codes into a five-level taxonomy. In total, CCS comprises 18 major disease categories, 134 subcategories, and 265 CCS codes, covering all ICD-9-CM and ICD-10-CM diagnostic concepts. At the fourth level, the hierarchy contains includes 2,175 integer-level ICD codes, comprising 937 ICD-9-CM codes and 1,278 ICD-10-CM codes. At the fifth level, the hierarchy further expands to decimal-level ICD diagnostic codes, including 13,193 ICD-9-CM and 73,066 ICD-10-CM codes, thereby providing the finest granularity.

\textbf{Procedure: } Procedure concepts follow the CCS hierarchy, which organizes ICD-9-PCS and ICD-10-PCS codes into a five-level taxonomy. Specifically, the hierarchy comprises 16 major categories, 200 subcategories, and 224 CCS codes, covering all ICD-9-PCS and ICD-10-PCS procedure codes. At the fourth level, procedure concepts are represented by three-digit ICD-PCS codes, comprising 876 ICD-9-PCS codes and 880 ICD-10-PCS codes. The fifth level corresponds to the full ICD-PCS codes, including 3,674 ICD-9-PCS codes and 80,492 ICD-10-PCS codes.

\textbf{Medication: } Medication concepts follow the ATC classification system, which organizes drug codes into a five-level taxonomy. Specifically, the hierarchy comprises 14 anatomical main groups, 90 therapeutic subgroups, 248 pharmacological subgroups, 840 chemical subgroups, and 5,495 chemical substances (i.e., ATC codes).

\begin{table}[t]
\centering
\caption{Hierarchical taxonomy of diagnosis, procedure, and medication concepts used in this study. Each level contains two attributes: \emph{Name}, which specifies the semantic grouping at that level, and \emph{\# Concepts}, the number of unique codes at that level. Diagnosis categories are derived from ICD-CM codes mapped through the CCS diagnostic hierarchy; procedure categories use ICD-PCS codes organized by the CCS procedure hierarchy; and medications follow the five-level ATC ontology. For ICD-CM and ICD-PCS levels, the \emph{\# Concepts} is reported as the total number of ICD-9 and ICD-10 codes shown in parentheses, respectively.}
\label{tab:hierarchy_stat}
\begin{tabular}{c|c|ccc}
\toprule
\textbf{Level} 
& \textbf{Description} 
& \textbf{Diagnosis} 
& \textbf{Procedure} 
& \textbf{Medication} \\
\midrule
\multirow{2}{*}{\textbf{Level 1}} 
& Name & Major category & Major category  & Anatomical main group \\
& \# Concepts & 18 & 16 & 14 \\
\midrule
\multirow{2}{*}{\textbf{Level 2}} 
& Name & Sub-category & Sub-category & Therapeutic subgroup \\
& \# Concepts & 34 & 200 & 90 \\
\midrule
\multirow{2}{*}{\textbf{Level 3}} 
& Name & CCS code & CCS code & Pharmacological subgroup \\
& \# Concepts & 265 & 224 & 248 \\
\midrule
\multirow{2}{*}{\textbf{Level 4}} 
& Name & ICD code (integer) & ICD code (three-digit)  & Chemical subgroup \\
& \# Concepts &  (937, 1278) &  (876, 880) & 840 \\
\midrule
\multirow{2}{*}{\textbf{Level 5}} 
& Name & ICD code  & ICD code (decimal)  & Chemical substance \\
& \# Concepts & (13193, 73066) & (3674, 80492) & 5495 \\
\bottomrule
\end{tabular}
\end{table}

\section{Additional Experiment Details}

\subsection{MIMIC Dataset Processing}

MIMIC-IV is a publicly available, de-identified EHR database derived from the Beth Israel Deaconess Medical Center (BIDMC) and released through PhysioNet under the PhysioNet Credentialed Health Data License \citep{johnson2023mimic}. The database contains comprehensive longitudinal hospital and intensive care data collected from emergency department and inpatient admissions between 2008 and 2022. In this study, we use the hospital module of MIMIC-IV, which provides structured information on inpatient hospitalizations, including diagnoses, procedures, medication prescriptions, and admission and discharge timestamps. To ensure patient privacy, all identifiers are replaced with de-identified integer keys, and all timestamps are shifted by a patient-specific offset. This transformation preserves within-patient temporal ordering while preventing alignment of absolute calendar time across patients. After preprocessing, our cohort comprises 223,452 unique patients and 546,028 hospital admissions. On average, each patient has 2.44 hospital visits, forming longitudinal trajectories suitable for sequential modeling.

\begin{table}[h]
\centering
\caption{Basic statistics of the pre-processed MIMIC-IV EHR dataset.
``\#'' denotes counts and ``/patient'' denotes per-patient averages.}
\label{tab:mimic4_basic_stats}
\setlength{\tabcolsep}{5pt}
\begin{tabular}{lcccccc}
\toprule
Dataset
& \#patients
& \#visits
& \#visits/patient
& \#conditions/patient
& \#procedures/patient
& \#drugs/patient \\
\midrule
MIMIC-IV
& 223{,}452
& 546{,}028
& 2.44
& 28.48
& 3.85
& 48.32 \\
\bottomrule
\end{tabular}
\end{table}

We represent the EHR data using a visit-based formulation, where each visit corresponds to one inpatient hospitalization and each patient is modeled as a temporally ordered sequence of visits. Visits are ordered chronologically by admission time, while clinical concepts within a visit are treated as unordered sets, reflecting the episodic nature of hospital care. For each visit, we extract diagnostic and procedural information from the MIMIC-IV diagnoses and procedures tables. Diagnostic concepts are represented using the original ICD-9-CM and ICD-10-CM codes, with the coding system preserved via the recorded ICD version. After preprocessing, the dataset contains 9,143 unique ICD-9 diagnosis codes and 19,440 unique ICD-10 diagnosis codes. Procedural concepts are represented using ICD-9 and ICD-10-PCS procedure codes, yielding 2,557 unique ICD-9 procedure codes and 12,354 unique ICD-10 procedure codes in total. Within each visit, diagnoses and procedures are treated as unordered sets, while visit sequences for each patient preserve temporal order across admissions. Across the cohort, patients have on average 28.48 diagnosis codes and 3.85 procedure codes, aggregated over all visits.

Medication information is extracted from the MIMIC-IV prescriptions table, where drugs are originally recorded using National Drug Codes (NDC). Because NDC identifiers are highly granular and introduce substantial sparsity, we standardize medication concepts through a two-stage mapping process. Each NDC is first mapped to a corresponding RxNorm concept identifier (RxCUI), and each RxCUI is subsequently mapped to an Anatomical Therapeutic Chemical (ATC) classification code using a curated RxNorm–ATC reference. Unlike common practice that aggregates medications at higher ATC levels, we retain ATC level-5 codes, which correspond to specific chemical substances and preserve fine-grained pharmacological information. Not all NDC entries can be reliably mapped through this pipeline, for example due to non-drug supplies, institution-specific packaging codes, or missing references in external vocabularies. Following a strict standardization strategy consistent with prior work, unmappable NDCs are excluded from the medication representation. As a result, each visit’s medication set consists only of successfully mapped ATC level-5 codes. After preprocessing, the final medication vocabulary contains 1,286 unique ATC level-5 codes. Based on the processed dataset, approximately 85\% of hospital admissions contain at least one mapped ATC-5 medication, computed as the fraction of visits with a non-empty ATC-5 medication set. Across all patients, the average number of medication codes per patient is 48.32, aggregated over their longitudinal visit histories.

Based on the visit-based EHR representation, we construct training samples for multiple clinical prediction tasks using longitudinal sequences of inpatient hospitalizations. For each patient with a chronologically ordered visit sequence $\left\{x_1, x_2, \ldots, x_t\right\}$, we generate a set of samples $\left\{\left(x_1\right),\left(x_1, x_2\right), \ldots,\left(x_1, x_2, \ldots, x_t\right)\right\}$, where each sample corresponds to a prefix of consecutive visits. Each visit $x_i$ is represented as an unordered set of diagnosis, procedure, and medication codes, while temporal dependencies are captured exclusively through the ordering of visits within the sequence. This unified formulation enables different clinical prediction tasks to share a consistent input representation, differing only in their target definitions and the specific visit used for supervision.

We consider four clinical prediction tasks. Mortality prediction (MT) aims to predict the in-hospital mortality outcome of the subsequent visit. Formally, given an input sequence $\left(x_1, x_2, \ldots, x_{t-1}\right)$, the model predicts a binary label $y\left[x_t\right] \in\{0,1\}$, where $y\left[x_t\right]=1$ indicates that the patient died during visit $x_i$; the final visit of each patient is excluded from training since no future outcome is available. Readmission prediction (RA) predicts whether the patient will be readmitted within a predefined time window of $\sigma$ days. Using the same input sequence $\left(x_1, x_2, \ldots, x_{t-1}\right)$, the target label is defined as $y\left[\tau\left(x_t\right)-\tau\left(x_{t-1}\right)\right] \in\{0,1\}$, where $\tau\left(x_i\right)$ denotes the admission time of visit $x_i$, and $y=1$ if $\tau\left(x_t\right)-\tau\left(x_{t-1}\right) \leq \sigma$, and $0$ otherwise. We set $\sigma = 15$ days. The evaluation metrics is AUPRC and AUROC. Drug recommendation is formulated as a multi-label classification task, where the input sequence $\left(x_1, x_2, \ldots, x_t\right)$ is used to predict the set of medications prescribed during visit $x_t$. The target is represented as a multi-hot vector over the ATC level-5 medication vocabulary, whose dimensionality equals the number of unique drug types.

\subsection{eICU Dataset Processing}

The eICU Collaborative Research Database is a multi-center intensive care unit (ICU) database containing over 200,000 admissions from ICUs monitored by eICU programs in the United States \citep{eicu}. It offers de-identified EHR data, encompassing demographics, diagnoses, treatments, and interventions. We use the eICU dataset to evaluate distribution shifts in patient observations across hospitals using both SMI and BioBERT embedding methods. We map diagnoses and medications to 72 unique integer-level ICD-9 codes and 241 unique five-level ATC codes. We treat each patient's medical records as unordered observations without considering temporal order. We select the 10 hospitals with the largest number of patients, resulting in 2,418 patients per hospital and a total of 24,179 patients for analysis.

In eICU, approximately one-third of drug names are not recorded across the 208 hospitals (Fig.~\ref{fig:drug_imputation}a). This results in an extremely sparse input and thus hampers the model training due to the lack of informative features. To address this issue, we imputed the missing drug names using the Hierarchical Ingredient Code List (HICL) codes, a standardized coding system for healthcare products. Specifically, for entries without a drug name but with a HICL code, we retrieved the corresponding name by matching to other records that share the same HICL code and have a documented name.  Using the HICL-based imputation strategy, the proportion of unrecorded drugs in our dataset was dropped to around 20\% (Fig.~\ref{fig:drug_imputation}b), providing denser medication inputs for downstream modeling.

\begin{figure}[ht]
    \centering
    \includegraphics[width=0.9\linewidth]{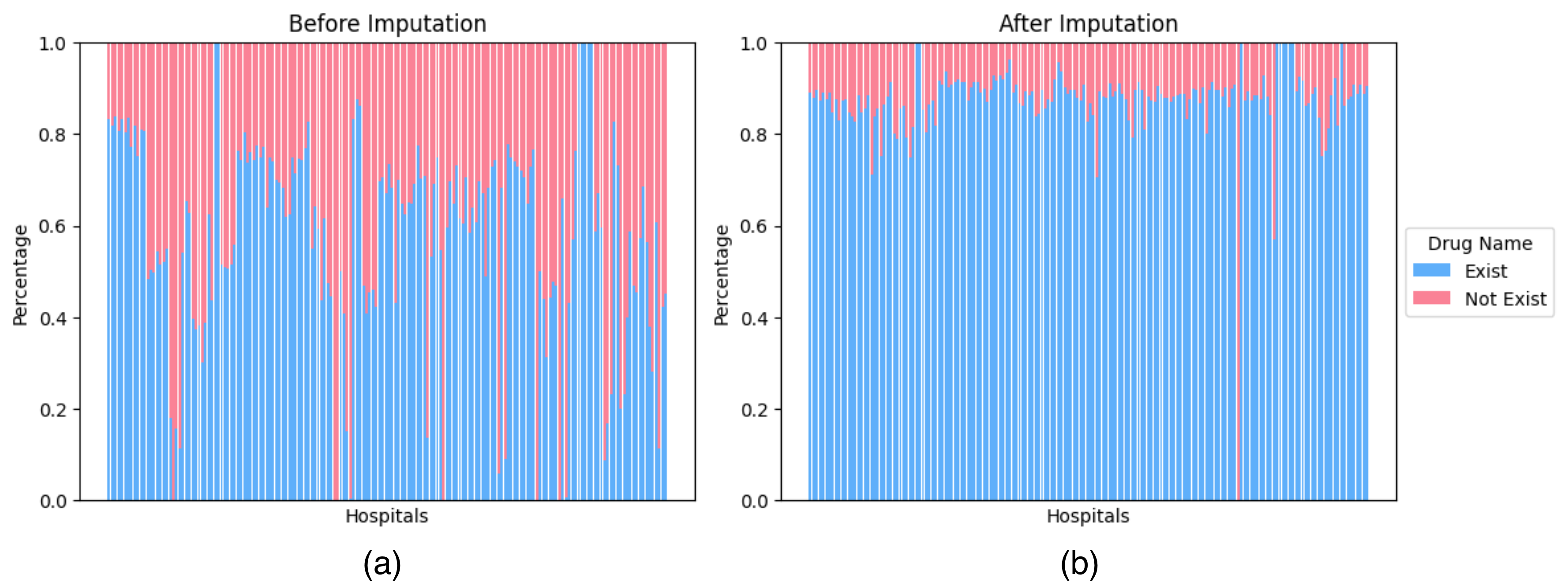}
    \caption{
    The drug rates before and after imputation of the eICU data. Since some drugs may lack recorded names, drug rates indicate the proportion of administered drugs with recorded names over all unique drugs.
    }
    \label{fig:drug_imputation}
\end{figure}

Drug names in eICU are recorded as free-text strings with dosage information, units, and other administration-related details. Consequently, the same active ingredient can appear in many lexical variants, potentially introducing irrelevant noise for training. To address this issue, we applied a data cleaning procedure to the drug-name strings. Specifically, we retained the tokens corresponding to the core medications and removed administration-specific details such as dosages, units, and numeric values. The resulting drug texts more closely reflect the medication's active-ingredient terms, supporting more stable and transferable medication representations.

We also found that the same medication can appear under different names in the eICU dataset (e.g., trade/brand names vs. ingredient names), creating multiple lexical variants of the same drug entity. This variability inflates the effective vocabulary and introduces spurious distinctions that reduce model robustness. To mitigate this issue, we harmonized eICU drug strings by mapping them to standardized medication concepts using the WHO Anatomical Therapeutic Chemical (ATC) classification system. Specifically, we first collected ATC classes from the WHO Collaborating Centre for Drug Statistics Methodology website \citep{kury_atcd_2024}. We then leveraged the five-level ATC hierarchy and extracted the 5th-level ATC substance codes as well as their descriptions. For each eICU canonical drug name, we computed BioWordVec embeddings using for both the eICU string and each reference ATC substance name \citep{zhang_2019}. We then measured the cosine similarity between these two word embeddings. After comparing with all reference drug names, the reference ATC substance name with the highest cosine similarity is selected. We also assigned the corresponding ATC code to each medication record. This procedure effectively matches drugs with similar ATC codes, improves the consistency of medication concepts across records, and supports standardized analyses across ATC levels.

\subsection{Prompt for Biomedical LMs}

Specifically, we use a frozen Biomedical LM model as an encoder. For each medical concept, we first generate the description of the clinical code at every level and explicitly concatenate its full hierarchical path, incorporating the descriptions of all ancestor codes in the hierarchy. We find that the hierarchy-aware input is more effective in capturing medical semantics. Table~\ref{tab:lm_prompt_inputs} presents the BioBERT input prompt, in which the text passed to the LM encoder explicitly encodes the hierarchical context of a clinical concept.

\begin{table}[t]
\centering
\caption{Level-specific textual descriptions used by the biomedical LM. The example includes an ICD-10 diagnosis (\textit{I15.1 Hypertension secondary to other renal disorders}), an ICD-10 procedure (\textit{79658 Magnetic Resonance Imaging (MRI) of Lumbar Spine}), and an ATC medication (\textit{A10AE04 Insulin glargine}). For each hierarchy level, we encode only the textual description of the code at that level, rather than concatenating descriptions along the full ontology path.}
\label{tab:lm_prompt_inputs}
\footnotesize
\begin{tabular}{
>{\raggedright\arraybackslash}p{1.3cm}|
>{\raggedright\arraybackslash}p{3.4cm}
>{\raggedright\arraybackslash}p{4.0cm}
>{\raggedright\arraybackslash}p{3.8cm}
}
\toprule
\textbf{Level} & \textbf{Diagnosis} & \textbf{Procedure} & \textbf{Medication}  \\
\midrule
\textbf{Level 1} & 
Diseases of circulatory system 
& Miscellaneous diagnostic and therapeutic procedures
& Alimentary tract and metabolism \\
\midrule
\textbf{Level 2} & 
Hypertension 
& Magnetic resonance imaging
& Drugs used in diabetes \\
\midrule
\textbf{Level 3} & 
Hypertension with complications and secondary hypertension 
& Magnetic resonance imaging 
& Insulins and analogues \\
\midrule
\textbf{Level 4} & 
Secondary hypertension  
& Imaging, Axial Skel, Exc Skull Facial, MRI Imaging, Axial Skeleton, Except Skull and Facial Bones, Magnetic Resonance Imaging (MRI)
& Insulins and analogues for injection, long-acting \\
\midrule
\textbf{Level 5} & 
Hypertension secondary to other renal disorders 
& Magnetic Resonance Imaging (MRI) of Lumbar Spine
& Insulin glargine\\
\bottomrule
\end{tabular}
\end{table}

\subsection{Baseline Details}
\label{app: baselines}

We compare SMI against representative baseline methods for learning medical concept embeddings. These baselines differ in how they incorporate textual information, ontology structure, and embedding geometry, providing complementary points of comparison for evaluating hierarchy awareness and robustness under long-tailed concept distributions. We include both concept-level and patient-level representation baselines, as several prior clinical models derive concept embeddings implicitly through task-driven training.

\paragraph{BioBERT.}
BioBERT encodes clinical concepts by embedding their textual descriptions using a pretrained biomedical language model \citep{lee2020biobert}, yielding hierarchy-agnostic Euclidean representations. While effective for frequent concepts, BioBERT does not explicitly model hierarchical relationships among medical codes and tends to degrade for sparse, fine-grained concepts.

% \paragraph{BioMistral.}
% BioMistral is a biomedical large language model adapted from the Mistral architecture and trained on large-scale biomedical corpora to improve clinical and scientific language understanding \citep{labrak2024BioMistral}. Similar to other LM-based approaches, BioMistral produces hierarchy-agnostic Euclidean embeddings and does not explicitly encode structured relations among medical concepts.

\paragraph{SapBERT.}
SapBERT learns biomedical concept representations by aligning synonymous terms in UMLS using contrastive learning over pretrained biomedical language models \citep{liu2021learning}. By enforcing proximity among synonym sets, SapBERT improves semantic consistency and entity normalization performance. However, its supervision is limited to local synonym relations and does not explicitly model hierarchical or multi-level ontology structure, making it insufficient for preserving coarse-to-fine semantics across medical taxonomies.

\paragraph{G-BERT.}
G-BERT incorporates medical ontology information into a Transformer model by augmenting patient records with graph-based contextual representations and pretraining objectives \citep{shang2019pre}. While effective for medication recommendation, G-BERT focuses on patient-level sequence modeling and does not produce standalone hierarchy-aware concept embeddings.

\paragraph{HiTs.}
HiTs treats pretrained language models as implicit hierarchy encoders by supervising them to produce representations that respect hierarchical relations among concepts \citep{he2024language}. While effective at aligning LM embeddings with taxonomic structure through structured objectives, HiTs relies on learning hierarchy consistency during training and does not explicitly aggregate multi-level ancestor context to form a single coarse-to-fine concept embedding.

\paragraph{OnT.}
OnT treats pretrained language models as ontology encoders by fine-tuning them under geometric constraints in hyperbolic space to preserve logical relations defined in OWL ontologies \citep{OnT}. By jointly incorporating textual labels and formal ontology axioms, OnT achieves strong hierarchy consistency and supports ontology-level reasoning tasks. In contrast to SMI, OnT focuses on enforcing global logical structure through geometric modeling, rather than aggregating multi-level semantic signals along ontology paths for downstream clinical representation learning.

\paragraph{GRAM.}
GRAM learns medical concept representations by recursively aggregating ancestor information in a medical ontology using an attention mechanism \citep{choi2017gram}. By leveraging hierarchical structure, GRAM improves embeddings for rare concepts, but it is trained in a supervised, task-specific manner rather than producing general-purpose concept embeddings.

% \paragraph{HiTANet.}
% HiTANet models longitudinal EHR data using hierarchical time-aware attention to capture both visit-level and code-level importance patterns \citep{luo2020hitanet}. Although it improves clinical prediction performance, HiTANet is designed for temporal patient modeling rather than reusable medical concept representation learning.

\paragraph{cui2vec.}
cui2vec learns distributed embeddings of medical concepts by applying word embedding methods to large-scale clinical notes, claims data, and biomedical literature \citep{beam2020clinical}. These embeddings capture statistical co-occurrence patterns but do not explicitly encode hierarchical or relational structure from medical ontologies.

\paragraph{RotatE.}
RotatE embeds entities in complex space and models relations as rotations, enabling it to capture symmetry, antisymmetry, and inversion patterns in knowledge graphs \citep{sun2019rotate}. While more expressive than TransE, RotatE does not explicitly encode hierarchical depth or ontology-defined parent--child semantics.

% \paragraph{Transformer.}
% The Transformer baseline models patient records using a standard self-attention architecture without domain-specific pretraining or ontology integration \citep{vaswani2017attention}. This baseline serves as a generic sequence modeling reference and does not incorporate biomedical language knowledge or hierarchical structure.

\paragraph{BEHRT.}
BEHRT is a Transformer-based model pretrained on longitudinal EHR sequences using diagnosis codes, age, and temporal embeddings to learn patient representations \citep{li2020behrt}. While effective for clinical risk prediction, BEHRT is designed for patient-level modeling rather than reusable medical concept embeddings.

\paragraph{Med-BERT.}
Med-BERT pretrains a BERT-style model on structured EHR data using masked code prediction and visit-level objectives to capture medical context \citep{rasmy2021med}. Although Med-BERT improves downstream clinical tasks, it focuses on contextual patient representations rather than explicit hierarchy-aware concept encoding.

\paragraph{ClinicalBERT.}
ClinicalBERT adapts BERT to the clinical domain by pretraining on MIMIC clinical notes to learn contextualized representations of medical language \citep{alsentzer2019publicly}. These embeddings capture rich clinical semantics from text but do not explicitly encode medical ontologies or hierarchical relations among concepts.

\paragraph{BERGAMOT.}
BERGAMOT learns biomedical entity representations by combining LM-based concept encoding with graph-augmented representation learning \citep{sakhovskiy-et-al-2024-bergamot}. It jointly leverages textual concept information and biomedical graph structure to improve entity representations. This requires additional graph-based representation learning and does not explicitly construct a coarse-to-fine concept representation by aggregating level-specific embeddings along an expert-defined ontology path.

\paragraph{BoxLM.}
BoxLM is a recent ontology-aware clinical prediction model that combines LM-based semantic representations with box embeddings to capture hierarchical structure for diagnosis prediction \citep{tan2025boxlm}. It represents medical concepts using box geometry and incorporates ontology-driven structure into a task-specific prediction framework.  BoxLM is optimized for supervised prediction and does not provide a training-free, task-agnostic concept representation constructed from frozen LM embeddings along ontology paths.

\subsection{Computational resources}

All experiments were conducted on an NVIDIA RTX PRO 6000 GPU with 96GB memory. SMI is training-free and only requires a one-time computation of frozen LM embeddings, after which the embeddings are reused in downstream tasks. Therefore, SMI introduces minimal additional hardware or computational overhead beyond the baseline embedding extraction and downstream evaluation.

\section{Additional Results}

\subsection{Long Tail Distribution of Clinical Concepts}
\label{app:long_tail}

\begin{figure}[!htbp]
    \centering
    \includegraphics[width=0.8\linewidth]{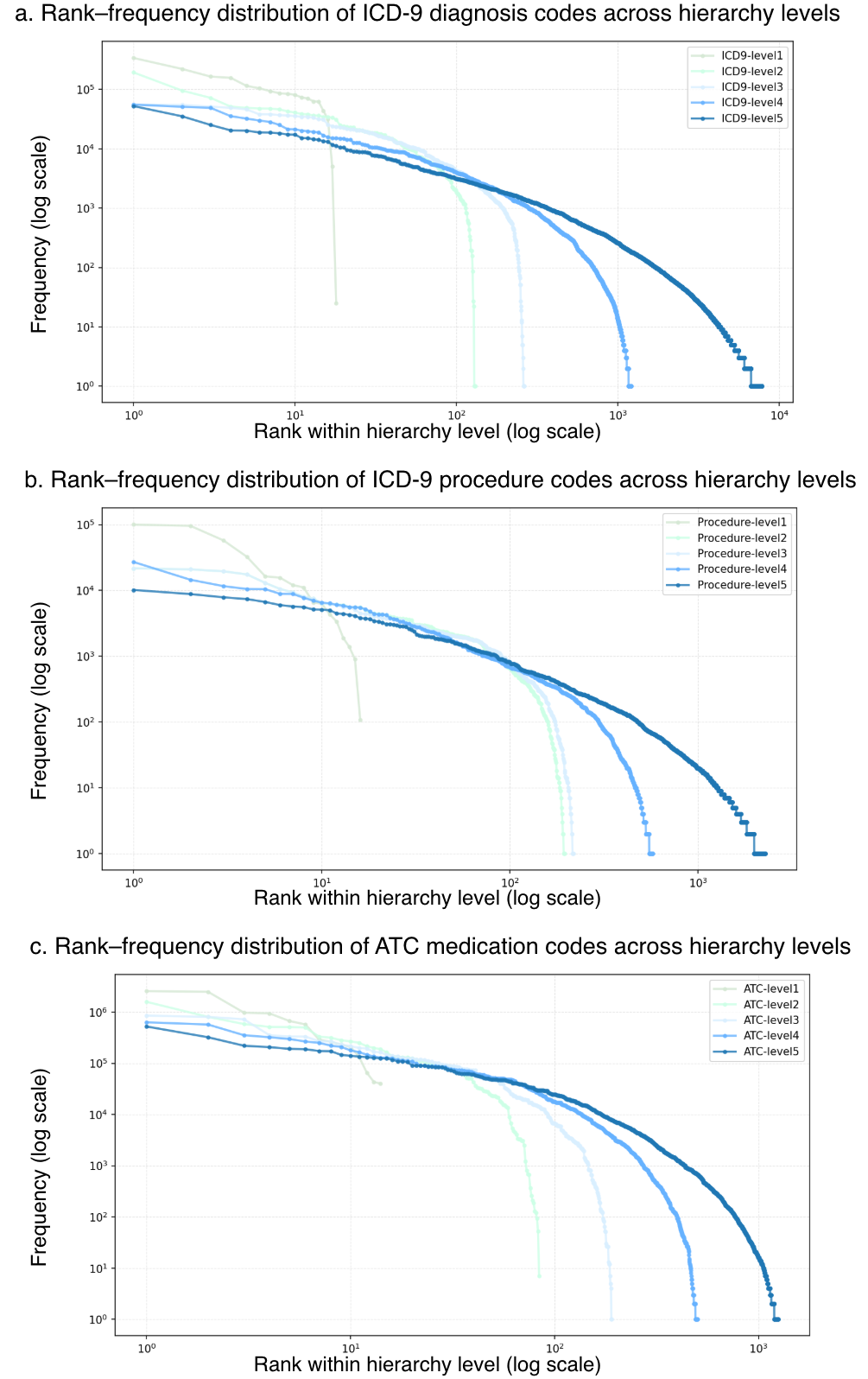}
    \caption{
    Rank--frequency distributions (log--log scale) of clinical concepts in MIMIC-IV across five hierarchy levels:
    (a) ICD-9 diagnosis codes, (b) ICD-9 procedure codes, and (c) ATC medication codes. 
    For each hierarchy level, concepts are ranked by patient frequency within that level.
    }
    \label{fig:long_tail_mimic4}
\end{figure}

Clinical concepts in EHRs typically exhibit highly skewed, long-tailed distributions: a small subset of clinical codes appear frequently, while most codes are observed with low frequencies. To quantify this pattern in MIMIC-IV dataset, we compute patient-level frequencies for three coding systems: ICD-9 diagnoses, ICD-9 procedures, and ATC medications. For each hierarchy, we measure the number of unique patients associated with each concept at all five levels, rank concepts by frequency, and plot the resulting rank–frequency curves on log–log scales.

Fig.~\ref{fig:long_tail_mimic4} summarizes the rank--frequency distributions across hierarchy levels for ICD-9 diagnoses, ICD-9 procedures, ATC medications. A consistent long-tail pattern emerges in all three hierarchies: coarse-grained levels include fewer concepts with higher patient frequencies, whereas fine-grained levels contain many more concepts whose frequencies drop sharply, yielding a large fraction of rare codes. This significant sparsity at fine granularity makes it difficult for hierarchy-agnostic LM embeddings to learn meaningful representations from limited observations. These results motivate hierarchy-aware SMI approach that integrates expert-defined hierarchical semantics to improve representations for infrequent, fine-grained clinical concepts.

\subsection{Details about Clinical Relation Prediction}
\label{app:relation_prediction}

We design our multi-hop prediction task within the ICD and ATC hierarchies following the setup of prior work~\citep{pmlr-v80-ganea18a}. The goal is to evaluate whether the model can infer indirect, multi-hop subsumption relations from direct, one-hop subsumptions. For each positive pair where one concept subsumes another, we sample 10 negative parent candidates that are not connected to the child concept via either direct or indirect subsumption, leading to 10 training triplets per positive example
We consider two negative-sampling strategies: (1) hard negatives, chosen from sibling concepts (i.e., concepts sharing the same parent); (2) random negatives, uniformly from the remaining concepts.  When a concept has fewer than 10 siblings, we fill in with random negatives to maintain 1-to-10 ratio of positives to negatives.

For cross-ontology relation prediction~\citep{Zou2022-te}, we evaluate a one-hop relation prediction using  ICD$\rightarrow$ATC-10 pair. Given an ATC medication code, we predict whether a candidate ICD-10 diagnosis code is linked to it. Since each ATC maps to a single ICD-10 label in our dataset, we restrict evaluation to 1-hop prediction and sample one negative ICD-10 code per positive pair.

For both multi-hop prediction within the ICD and ATC hierarchies and the cross-ontology one-hop ICD$\rightarrow$ATC task, we report Precision, Recall, and F-score as the evaluation metrics. We do not use accuracy metric because class imbalance can inflate it; as the large number of negatives can cause misleadingly high scores.

\subsection{Details about Clustering Analysis}
\label{app:clustering}

We evaluate how well the embeddings group clinical codes into highest-level clinical group (i.e., major categories for ICD diagnostic and procedure codes, and Anatomical main groups for ATC medication codes). To do so, we apply K-means clustering  to both BioBERT and SMI embeddings, setting the number of clusters $K$ to match the number of ground-truth categories.  We evaluate clustering quality using three groups of metrics: (1) Normalized Mutual Information (NMI), Adjusted Rand Index (ARI), and Purity measure cluster-label alignment with ICD major categories; (2) Per-category Recall and Precision measure how coherently each major category is grouped and how pure its dominant cluster is; (3)  Silhouette and Intra/Inter Ratio measure whether embeddings from the same major category are compact and separated from other categories.
For all three metric groups, we report the aggregate clustering results in Table~\ref{tab:cluster_aggregate}. For per-category Recall and Precision across the 18 ICD major categories, we report detailed results in Table~\ref{tab:cluster_per_major}.

\paragraph{\textbf{(1) Cluster-label alignment metrics.}}
We first report global clustering metrics that evaluate how well the unsupervised clusters align with ICD major categories. Specifically, we use NMI, ARI, and cluster Purity. NMI measures the mutual information agreement between cluster assignments and major-category labels. ARI measures pairwise agreement between the clustering result and the ground-truth labels after correcting for chance. Purity measures the fraction of samples in each cluster that belong to its dominant major category. Higher values indicate stronger ontology-consistent clustering.

\paragraph{\textbf{(2) Aggregate and per-category Recall and Precision.}}
We next report Recall and Precision to evaluate category-level clustering quality. For each ICD major category, we compute per-category Recall and Precision, and then report their macro-averages as aggregate metrics in Table~\ref{tab:cluster_aggregate}. Detailed per-category results are reported in Table~\ref{tab:cluster_per_major}. Let $c$ index the resulting clusters, and let $c^\ast(m)$ denote the dominant cluster for category $m$, defined as the cluster containing the largest number of codes from $m$.

Per-category Recall measures how well a diagnostic major $m$ is captured by a single cluster. It is , it defined as the proportion of ICD codes from major $m$ that are assigned to its dominant cluster $c^\ast(m)$:
\[
    \mathrm{recall}(m) = P(\text{cluster}=c^*(m) \mid \text{major}=m)
\] 
Higher Recall indicates that codes from the same major category are concentrated in one coherent cluster, while lower Recall indicates that the category is scattered across multiple clusters.

Per-category Precision measures the purity of the dominant cluster corresponding to a diagnostic major $m$. It is defined as the proportion of ICD codes within the dominant cluster $c^*(m)$ that belong to major $m$:
\[
    \mathrm{precision}(m) \;=\; P\bigl(\text{major}=m \mid \text{cluster}=c^\ast(m)\bigr)
\]
Higher Precision indicates that the dominant cluster contains few codes from other major categories, while lower Precision indicates stronger mixing across categories.

\paragraph{\textbf{(3) Embedding-space separation metrics.}}
We also report Silhouette score and Intra/Inter Ratio as global embedding-space separation metrics to assess whether ICD major categories are geometrically separated. Silhouette score measures whether each code is closer to codes in its own cluster than to codes in other clusters, where higher values indicate better separation. Intra/Inter Ratio is defined as the average within-major-category distance divided by the average between-major-category distance. Lower values indicate more compact embeddings within the same major category and stronger separation across different major categories.

Table~\ref{tab:cluster_aggregate} shows that SMI improves cluster-label alignment over BioBERT, with higher NMI (0.77 vs. 0.39), ARI (0.56 vs. 0.25), and Purity (0.78 vs. 0.51). This indicates that the clusters induced by SMI embeddings are more consistent with ICD major disease categories. SMI also improves per-category clustering quality, achieving higher average Recall (0.80 vs. 0.41) and Precision (0.61 vs. 0.35), showing that codes from the same major category are grouped more coherently and that the dominant clusters are less mixed with codes from other categories. Table~\ref{tab:cluster_per_major} further shows that this improvement holds across most ICD major categories. Finally, SMI improves embedding-space separation, with a higher Silhouette score (0.52 vs. 0.04) and a lower Intra/Inter Ratio (0.29 vs. 0.82), indicating that codes within the same major category are more compact while different major categories are better separated. Overall, these results support that SMI produces diagnosis embeddings that are more aligned with the ICD ontology than BioBERT embeddings.

\begin{table}[t]
\centering
\small
\caption{Aggregate clustering quality of ICD diagnosis embeddings by major disease category. NMI, ARI, Purity, Recall, Precision, and Silhouette are higher better, while Intra/Inter Ratio is lower better. SMI achieves stronger cluster-label alignment and better geometric separation than BioBERT, indicating more ontology-consistent diagnosis embeddings.}
\label{tab:cluster_aggregate}
\begin{tabular}{lccccccc}
\toprule
\textbf{Method}
& \textbf{NMI $\uparrow$}
& \textbf{ARI $\uparrow$}
& \textbf{Purity $\uparrow$}
& \textbf{Recall $\uparrow$}
& \textbf{Precision $\uparrow$}
& \textbf{Sil. $\uparrow$}
& \textbf{Intra/Inter $\downarrow$} \\
\midrule
SMI
& \textbf{0.77}
& \textbf{0.56}
& \textbf{0.78}
& \textbf{0.80}
& \textbf{0.61}
& \textbf{0.52}
& \textbf{0.29} \\
BioBERT
& 0.39
& 0.25
& 0.51
& 0.41
& 0.35
& 0.04
& 0.82 \\
\bottomrule
\end{tabular}
\end{table}

\begin{table}[h]
\centering
\small
\caption{Per-major clustering recall and precision by ICD major disease category. Recall measures whether codes from the same major category are concentrated in a dominant cluster, while precision measures the purity of that dominant cluster. SMI achieves higher recall and precision than BioBERT for most ICD major categories, indicating more coherent and less mixed ontology-aligned clusters.}
\label{tab:cluster_per_major}
\begin{tabular}{llcccc}
\toprule
\textbf{Major} & \textbf{Major Category} & \multicolumn{2}{c}{\textbf{Recall (\%)}} & \multicolumn{2}{c}{\textbf{Precision (\%)}} \\
\cmidrule(lr){3-4}\cmidrule(lr){5-6}
& & \textbf{SMI} & \textbf{BioBERT} & \textbf{SMI} & \textbf{BioBERT} \\
\midrule
1 & Infectious \& parasitic & \textbf{73.3} & 33.1 & \textbf{100.0} & 45.4 \\
2 & Neoplasms & \textbf{62.8} & 53.7 & \textbf{100.0} & 91.5 \\
3 & Endocrine \& metabolic & \textbf{85.3} & 19.8 & \textbf{21.8} & 8.6 \\
4 & Blood diseases & \textbf{91.7} & 41.7 & \textbf{10.6} & 4.8 \\
5 & Mental disorders & \textbf{89.9} & 58.1 & \textbf{100.0} & 58.7 \\
6 & Nervous system \& sense organs & \textbf{89.5} & 28.4 & \textbf{78.2} & 36.5 \\
7 & Circulatory system & \textbf{85.6} & 28.6 & \textbf{41.7} & 17.1 \\
8 & Respiratory system & \textbf{80.9} & 25.8 & \textbf{24.5} & 11.3 \\
9 & Digestive system & \textbf{91.7} & 29.8 & \textbf{53.2} & 23.3 \\
10 & Genitourinary system & \textbf{85.2} & 20.1 & \textbf{33.2} & 12.6 \\
11 & Pregnancy \& childbirth & 51.2 & \textbf{90.2} & \textbf{85.3} & 79.4 \\
12 & Skin & \textbf{87.6} & 17.7 & \textbf{13.4} & 2.9 \\
13 & Musculoskeletal & \textbf{96.3} & 55.4 & \textbf{100.0} & 63.4 \\
14 & Congenital anomalies & \textbf{93.5} & 44.6 & \textbf{100.0} & 24.4 \\
15 & Perinatal conditions & \textbf{63.0} & 53.5 & \textbf{18.1} & 15.9 \\
16 & Symptoms \& ill-defined & \textbf{45.1} & 31.2 & \textbf{100.0} & 89.5 \\
17 & Injury \& poisoning & \textbf{77.1} & 31.3 & \textbf{18.2} & 5.4 \\
18 & External causes (E-codes) & \textbf{82.8} & 78.5 & \textbf{100.0} & 42.6 \\
\bottomrule
\end{tabular}
\end{table}

Across most majors, SMI achieves high recall, suggesting that codes within each major are concentrated in a dominant cluster rather than scattered across multiple clusters. In particular, SMI reaches recall above 80\% for most ICD major categories, including Endocrine \& metabolic, Blood diseases, Mental disorders, Nervous system \& sense organs, Circulatory system, Respiratory system, Digestive system, Genitourinary system, Skin, Musculoskeletal, Congenital anomalies, and External causes. Moreover, SMI also achieves high precision for many majors, including 100\% precision for Infectious \& parasitic, Neoplasms, Mental disorders, Musculoskeletal, Congenital anomalies, Symptoms \& ill-defined, and External causes. This indicates that the dominant clusters for these majors are highly pure and contain few codes from other majors.

In contrast, BioBERT exhibits lower precision across most majors, implying stronger mixing of majors within the dominant clusters even when recall is moderate. We also observe a few challenging categories where precision remains low despite high recall, such as Blood diseases, Skin, and Injury \& poisoning. This suggests that K-means tends to group these categories with clinically or semantically related categories, leading to mixed dominant clusters. Despite this difficulty, SMI still achieves higher recall and precision than BioBERT for these categories. Overall, the results indicate that hierarchy-aware concept representations form clusters that are more consistent with medical ontologies.

\begin{figure*}[t]
    \centering
    \includegraphics[width=\textwidth]{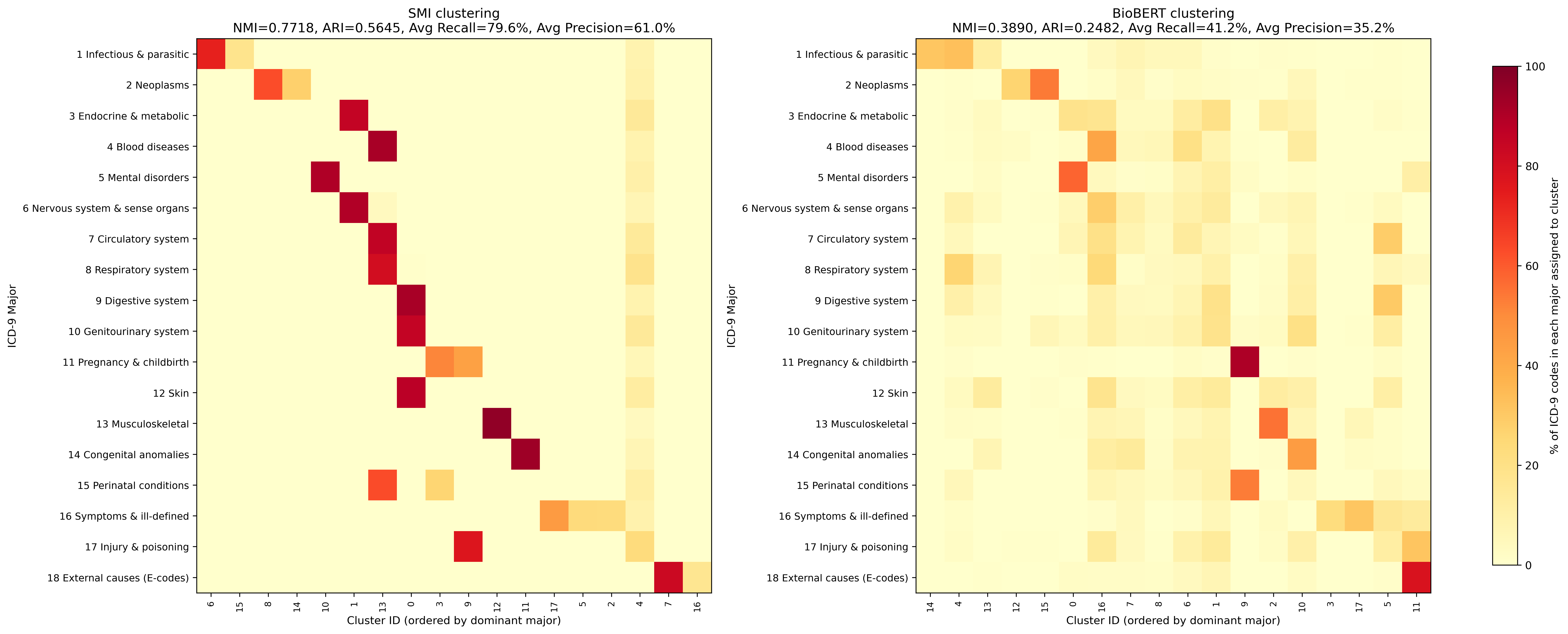}
    \caption{
    Cluster assignment heatmaps for ICD diagnosis embeddings. Rows denote ICD major disease categories and columns denote K-means clusters ordered by their dominant major category. Each cell shows the percentage of codes from a major category assigned to a cluster. Compared with BioBERT, SMI exhibits a stronger diagonal pattern, indicating better alignment between embedding structure and ICD major categories.
    }
    \label{fig:cluster_assignment_heatmap}
\end{figure*}

\subsection{Details about Rare Code Prediction}

To benchmark drug recommendation on rare ATC codes, we construct a new MIMIC-IV subset. We select five-level ATC codes with 10–20 occurrences, yielding 98 unique long-tailed medications, and restrict drug recommendation to this label space. Because rare targets are extremely sparse, the previous threshold-based metrics (AUPRC, F-score, and Jaccard) are less informative; instead, we evaluate retrieval quality using Recall@15, i.e., whether the true medication labels appear within the top-15 prediction set.

\subsection{Details about Distribution shift}
\label{app:dist_shift}

We assess cross-hospital distribution shift following prior work by learning a hospital-specific density model on patient embeddings and measuring the log-likelihood gap between in-domain samples and samples from other hospitals \citep{zhu2025fedweight}. We trained a Variational Autoencoder (VAE) as a density estimator on patient embeddings from the eICU dataset \citep{vae}. For each source hospital $k$, we encoded  diagnoses and medications using either SMI or BioBERT, and then trained a VAE density estimator  $p_k(x_k)$ on the embeddings $x_k$ from hospital $k$. We quantified the distribution shift from source hospital $k$ to target hospital $t$ by computing their difference of log-likelihoods:
\begin{align}
    \Delta = \mathbb{E}_{x_k} \left[ \log p_k(x_k) \right] - \mathbb{E}_{x_t} \left[ \log p_k(x_t) \right]
\end{align}
where $\log p_k(x_k)$ and $\log p_k(x_t)$ are the log‑likelihoods of in‑domain samples $x_k$ and out‑of‑domain samples $x_t$ evaluated under the VAE  trained on $k$-th hospital, respectively. A large $\Delta$ implies greater cross-hospital shift between source and target distributions. We approximate the log-likelihood $\log p_k(x)$ using an Importance Weighted Autoencoder (IWAE) objective by drawing $K=64$ importance samples $z_{i} \sim q(z \mid x)$ \citep{burda2015importance}:
\begin{align}
    \log p_k(x) \approx \log \left( \frac{1}{K} \sum_{i=1}^{K} \frac{p(x, z^{(i)})}{q(z^{(i)} \mid x)} \right)
\end{align}
Fig.~\ref{fig:dist_shifts} visualizes cross-hospital distribution shift as the absolute log-likelihood difference ($|\Delta|$) between each source-target hospital pair using a hospital specific density model trained on the source. SMI yields consistently lower $|\Delta|$ across source–target hospital pairs than BioBERT, indicating better cross-site stability and reduced domain shift. This improvement likely comes from injecting expert-curated medical hierarchies, which place semantically related concepts closer in the embedding space. In contrast, BioBERT relies mainly on textual similarity and does not explicitly encode clinical ontology relationships, leading to larger cross-hospital variability. Overall, SMI effectively mitigates cross-hospital distribution shift and is more suitable for multi-institutional settings with heterogeneous data.

\begin{figure*}[!t]
    \centering
    \includegraphics[width=\textwidth]{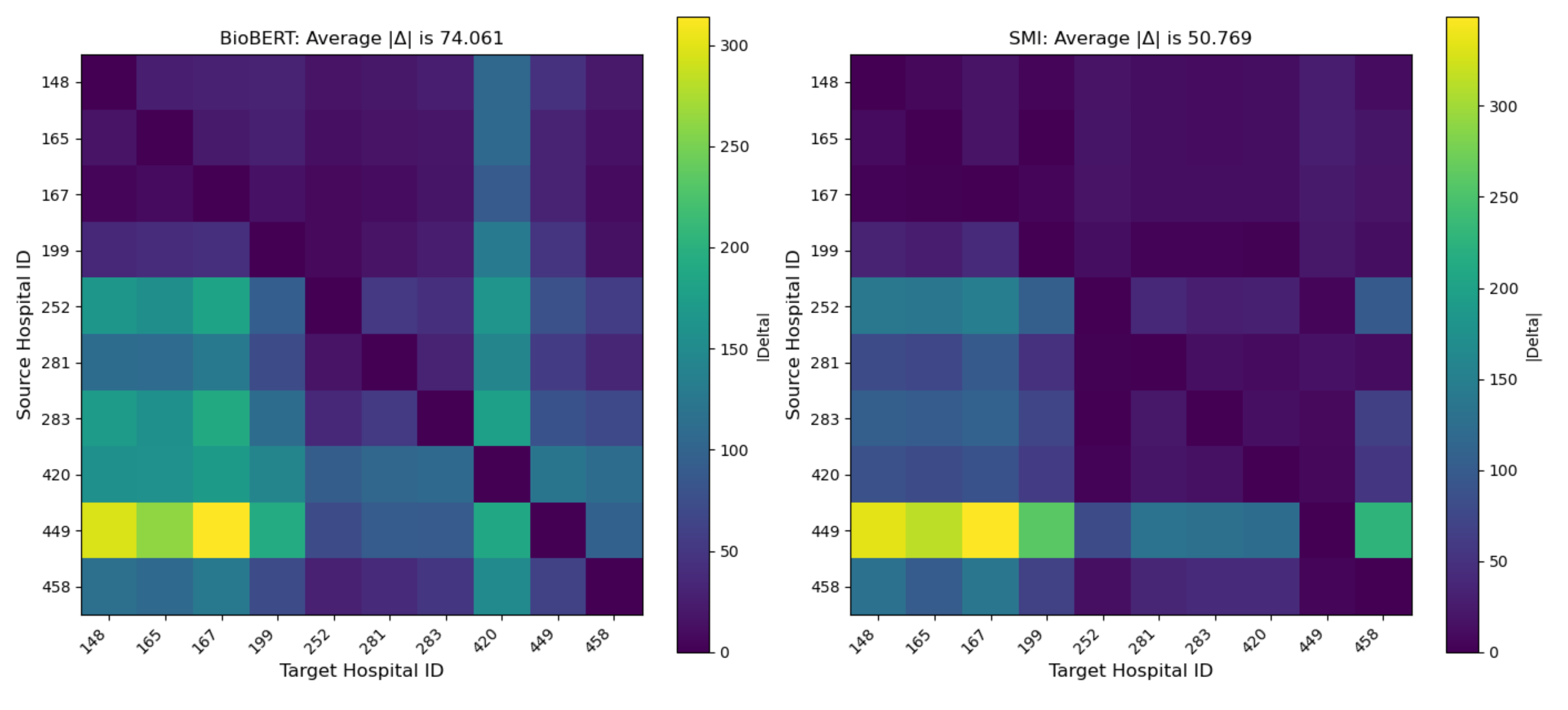} 
    \caption{Cross-hospital distribution shift analysis based on average absolute log-likelihood differences ($|\Delta|$) between source and target hospitals. Each heatmap illustrates the distribution shift of patient embedding using SMI (left) and BioBERT  (right), both evaluated with a VAE-based density model. SMI embeddings exhibit smaller distribution shifts (average $|\Delta| = 50.77$) compared to BioBERT ($|\Delta| = 74.06$), indicating that SMI is more robust to distribution shift. 
    }
    \label{fig:dist_shifts}
\end{figure*}

\clearpage

\newpage
\section*{NeurIPS Paper Checklist}

%%% BEGIN INSTRUCTIONS %%%
The checklist is designed to encourage best practices for responsible machine learning research, addressing issues of reproducibility, transparency, research ethics, and societal impact. Do not remove the checklist: {\bf The papers not including the checklist will be desk rejected.} The checklist should follow the references and follow the (optional) supplemental material.  The checklist does NOT count towards the page
limit. 

Please read the checklist guidelines carefully for information on how to answer these questions. For each question in the checklist:
\begin{itemize}
    \item You should answer \answerYes{}, \answerNo{}, or \answerNA{}.
    \item \answerNA{} means either that the question is Not Applicable for that particular paper or the relevant information is Not Available.
    \item Please provide a short (1--2 sentence) justification right after your answer (even for \answerNA). 
   % \item {\bf The papers not including the checklist will be desk rejected.}
\end{itemize}

{\bf The checklist answers are an integral part of your paper submission.} They are visible to the reviewers, area chairs, senior area chairs, and ethics reviewers. You will also be asked to include it (after eventual revisions) with the final version of your paper, and its final version will be published with the paper.

The reviewers of your paper will be asked to use the checklist as one of the factors in their evaluation. While \answerYes{} is generally preferable to \answerNo{}, it is perfectly acceptable to answer \answerNo{} provided a proper justification is given (e.g., error bars are not reported because it would be too computationally expensive'' or ``we were unable to find the license for the dataset we used''). In general, answering \answerNo{} or \answerNA{} is not grounds for rejection. While the questions are phrased in a binary way, we acknowledge that the true answer is often more nuanced, so please just use your best judgment and write a justification to elaborate. All supporting evidence can appear either in the main paper or the supplemental material, provided in appendix. If you answer \answerYes{} to a question, in the justification please point to the section(s) where related material for the question can be found.

IMPORTANT, please:
\begin{itemize}
    \item {\bf Delete this instruction block, but keep the section heading ``NeurIPS Paper Checklist"},
    \item  {\bf Keep the checklist subsection headings, questions/answers and guidelines below.}
    \item {\bf Do not modify the questions and only use the provided macros for your answers}.
\end{itemize}

%%% END INSTRUCTIONS %%%

\begin{enumerate}

\item {\bf Claims}
    \item[] Question: Do the main claims made in the abstract and introduction accurately reflect the paper's contributions and scope?
    \item[] Answer: \answerYes{} % Replace by \answerYes{}, \answerNo{}, or \answerNA{}.
    \item[] Justification: The abstract and introduction state the main scope and contributions of SMI, including training-free hierarchy-aware concept representation, theoretical SNR analysis, experimental contributions including relation prediction, downstream clinical prediction, and cross-hospital robustness. These claims are supported by the methodology, theory, and experiments reported in the paper.    
    \item[] Guidelines:
    \begin{itemize}
        \item The answer \answerNA{} means that the abstract and introduction do not include the claims made in the paper.
        \item The abstract and/or introduction should clearly state the claims made, including the contributions made in the paper and important assumptions and limitations. A \answerNo{} or \answerNA{} answer to this question will not be perceived well by the reviewers. 
        \item The claims made should match theoretical and experimental results, and reflect how much the results can be expected to generalize to other settings. 
        \item It is fine to include aspirational goals as motivation as long as it is clear that these goals are not attained by the paper. 
    \end{itemize}

\item {\bf Limitations}
    \item[] Question: Does the paper discuss the limitations of the work performed by the authors?
    \item[] Answer: \answerYes{} % Replace by \answerYes{}, \answerNo{}, or \answerNA{}.
    \item[] Justification: The conclusion discusses the scope and limitations of SMI, including its focus on expert-curated coding systems such as ICD and ATC, the need for additional ontology validation and code mapping for emerging disease classifications, newly introduced drugs, and local coding standards.
    \item[] Guidelines:
    \begin{itemize}
        \item The answer \answerNA{} means that the paper has no limitation while the answer \answerNo{} means that the paper has limitations, but those are not discussed in the paper. 
        \item The authors are encouraged to create a separate ``Limitations'' section in their paper.
        \item The paper should point out any strong assumptions and how robust the results are to violations of these assumptions (e.g., independence assumptions, noiseless settings, model well-specification, asymptotic approximations only holding locally). The authors should reflect on how these assumptions might be violated in practice and what the implications would be.
        \item The authors should reflect on the scope of the claims made, e.g., if the approach was only tested on a few datasets or with a few runs. In general, empirical results often depend on implicit assumptions, which should be articulated.
        \item The authors should reflect on the factors that influence the performance of the approach. For example, a facial recognition algorithm may perform poorly when image resolution is low or images are taken in low lighting. Or a speech-to-text system might not be used reliably to provide closed captions for online lectures because it fails to handle technical jargon.
        \item The authors should discuss the computational efficiency of the proposed algorithms and how they scale with dataset size.
        \item If applicable, the authors should discuss possible limitations of their approach to address problems of privacy and fairness.
        \item While the authors might fear that complete honesty about limitations might be used by reviewers as grounds for rejection, a worse outcome might be that reviewers discover limitations that aren't acknowledged in the paper. The authors should use their best judgment and recognize that individual actions in favor of transparency play an important role in developing norms that preserve the integrity of the community. Reviewers will be specifically instructed to not penalize honesty concerning limitations.
    \end{itemize}

\item {\bf Theory assumptions and proofs}
    \item[] Question: For each theoretical result, does the paper provide the full set of assumptions and a complete (and correct) proof?
    \item[] Answer: \answerYes{}
    \item[] Justification: The paper states the assumptions and formal theoretical results in Section~\ref{sec: theory}. Complete proofs and supporting derivations are provided in Appendix~\ref{sec:proof}.
    \item[] Guidelines:
    \begin{itemize}
        \item The answer \answerNA{} means that the paper does not include theoretical results. 
        \item All the theorems, formulas, and proofs in the paper should be numbered and cross-referenced.
        \item All assumptions should be clearly stated or referenced in the statement of any theorems.
        \item The proofs can either appear in the main paper or the supplemental material, but if they appear in the supplemental material, the authors are encouraged to provide a short proof sketch to provide intuition. 
        \item Inversely, any informal proof provided in the core of the paper should be complemented by formal proofs provided in appendix or supplemental material.
        \item Theorems and Lemmas that the proof relies upon should be properly referenced. 
    \end{itemize}

    \item {\bf Experimental result reproducibility}
    \item[] Question: Does the paper fully disclose all the information needed to reproduce the main experimental results of the paper to the extent that it affects the main claims and/or conclusions of the paper (regardless of whether the code and data are provided or not)?
    \item[] Answer: \answerYes{}
    \item[] Justification: The paper describes the datasets, ontology construction, data splits, tasks, metrics, baselines, and evaluation protocols in the main text and appendix. These details provide sufficient information to reproduce the main empirical results.
    \item[] Guidelines:
    \begin{itemize}
        \item The answer \answerNA{} means that the paper does not include experiments.
        \item If the paper includes experiments, a \answerNo{} answer to this question will not be perceived well by the reviewers: Making the paper reproducible is important, regardless of whether the code and data are provided or not.
        \item If the contribution is a dataset and\slash or model, the authors should describe the steps taken to make their results reproducible or verifiable. 
        \item Depending on the contribution, reproducibility can be accomplished in various ways. For example, if the contribution is a novel architecture, describing the architecture fully might suffice, or if the contribution is a specific model and empirical evaluation, it may be necessary to either make it possible for others to replicate the model with the same dataset, or provide access to the model. In general. releasing code and data is often one good way to accomplish this, but reproducibility can also be provided via detailed instructions for how to replicate the results, access to a hosted model (e.g., in the case of a large language model), releasing of a model checkpoint, or other means that are appropriate to the research performed.
        \item While NeurIPS does not require releasing code, the conference does require all submissions to provide some reasonable avenue for reproducibility, which may depend on the nature of the contribution. For example
        \begin{enumerate}
            \item If the contribution is primarily a new algorithm, the paper should make it clear how to reproduce that algorithm.
            \item If the contribution is primarily a new model architecture, the paper should describe the architecture clearly and fully.
            \item If the contribution is a new model (e.g., a large language model), then there should either be a way to access this model for reproducing the results or a way to reproduce the model (e.g., with an open-source dataset or instructions for how to construct the dataset).
            \item We recognize that reproducibility may be tricky in some cases, in which case authors are welcome to describe the particular way they provide for reproducibility. In the case of closed-source models, it may be that access to the model is limited in some way (e.g., to registered users), but it should be possible for other researchers to have some path to reproducing or verifying the results.
        \end{enumerate}
    \end{itemize}

\item {\bf Open access to data and code}
    \item[] Question: Does the paper provide open access to the data and code, with sufficient instructions to faithfully reproduce the main experimental results, as described in supplemental material?
    \item[] Answer: \answerNo{}
    \item[] Justification: The paper uses public but credentialed-access clinical datasets, including MIMIC-IV and eICU, whose redistribution is restricted by data-use agreements. We provide implementation details in the appendix, but the current submission does not include an anonymized code release.
    \item[] Guidelines:
    \begin{itemize}
        \item The answer \answerNA{} means that paper does not include experiments requiring code.
        \item Please see the NeurIPS code and data submission guidelines (\url{https://neurips.cc/public/guides/CodeSubmissionPolicy}) for more details.
        \item While we encourage the release of code and data, we understand that this might not be possible, so \answerNo{} is an acceptable answer. Papers cannot be rejected simply for not including code, unless this is central to the contribution (e.g., for a new open-source benchmark).
        \item The instructions should contain the exact command and environment needed to run to reproduce the results. See the NeurIPS code and data submission guidelines (\url{https://neurips.cc/public/guides/CodeSubmissionPolicy}) for more details.
        \item The authors should provide instructions on data access and preparation, including how to access the raw data, preprocessed data, intermediate data, and generated data, etc.
        \item The authors should provide scripts to reproduce all experimental results for the new proposed method and baselines. If only a subset of experiments are reproducible, they should state which ones are omitted from the script and why.
        \item At submission time, to preserve anonymity, the authors should release anonymized versions (if applicable).
        \item Providing as much information as possible in supplemental material (appended to the paper) is recommended, but including URLs to data and code is permitted.
    \end{itemize}

\item {\bf Experimental setting/details}
    \item[] Question: Does the paper specify all the training and test details (e.g., data splits, hyperparameters, how they were chosen, type of optimizer) necessary to understand the results?
    \item[] Answer: \answerYes{}
    \item[] Justification: The paper specifies the clinical prediction tasks, evaluation metrics, data splits, baselines, and implementation details in Section~\ref{sec:relation_pred}, ~\ref{sec: pred_results} and~\ref{sec:dist_shift_main}, with additional baseline and experimental details in Appendices~\ref{app: baselines}, \ref{app:relation_prediction}, and~\ref{app:dist_shift}.    
    \item[] Guidelines:
        \begin{itemize}
        \item The answer \answerNA{} means that the paper does not include experiments.
        \item The experimental setting should be presented in the core of the paper to a level of detail that is necessary to appreciate the results and make sense of them.
        \item The full details can be provided either with the code, in appendix, or as supplemental material.
    \end{itemize}

\item {\bf Experiment statistical significance}
    \item[] Question: Does the paper report error bars suitably and correctly defined or other appropriate information about the statistical significance of the experiments?
    \item[] Answer: \answerYes{}
    \item[] Justification: The clinical prediction results report mean and standard error from bootstrap, while the deterministic relation prediction experiments use threshold selection via grid search and are reported without variance. The paper reports statistical significance tests for drug recommendation, rare drug prediction, and cross-hospital distribution shift. 
    \item[] Guidelines:
        \begin{itemize}
        \item The answer \answerNA{} means that the paper does not include experiments.
        \item The authors should answer \answerYes{} if the results are accompanied by error bars, confidence intervals, or statistical significance tests, at least for the experiments that support the main claims of the paper.
        \item The factors of variability that the error bars are capturing should be clearly stated (for example, train/test split, initialization, random drawing of some parameter, or overall run with given experimental conditions).
        \item The method for calculating the error bars should be explained (closed form formula, call to a library function, bootstrap, etc.)
        \item The assumptions made should be given (e.g., Normally distributed errors).
        \item It should be clear whether the error bar is the standard deviation or the standard error of the mean.
        \item It is OK to report 1-sigma error bars, but one should state it. The authors should preferably report a 2-sigma error bar than state that they have a 96\% CI, if the hypothesis of Normality of errors is not verified.
        \item For asymmetric distributions, the authors should be careful not to show in tables or figures symmetric error bars that would yield results that are out of range (e.g., negative error rates).
        \item If error bars are reported in tables or plots, the authors should explain in the text how they were calculated and reference the corresponding figures or tables in the text.
    \end{itemize}

\item {\bf Experiments compute resources}
    \item[] Question: For each experiment, does the paper provide sufficient information on the computer resources (type of compute workers, memory, time of execution) needed to reproduce the experiments?
    \item[] Answer: \answerYes{}
    \item[] Justification: All experiments were conducted on an NVIDIA RTX PRO 6000 GPU with 96GB memory. SMI is training-free and only requires a one-time computation of frozen LM embeddings, after which the embeddings are reused in downstream tasks. Therefore, SMI introduces minimal additional hardware or computational overhead.   
    \item[] Guidelines:
        \begin{itemize}
        \item The answer \answerNA{} means that the paper does not include experiments.
        \item The paper should indicate the type of compute workers CPU or GPU, internal cluster, or cloud provider, including relevant memory and storage.
        \item The paper should provide the amount of compute required for each of the individual experimental runs as well as estimate the total compute. 
        \item The paper should disclose whether the full research project required more compute than the experiments reported in the paper (e.g., preliminary or failed experiments that didn't make it into the paper). 
    \end{itemize}
    
\item {\bf Code of ethics}
    \item[] Question: Does the research conducted in the paper conform, in every respect, with the NeurIPS Code of Ethics \url{https://neurips.cc/public/EthicsGuidelines}?
    \item[] Answer: \answerYes{}
    \item[] Justification: The study uses de-identified, credentialed-access clinical datasets and does not involve intervention on patients, crowdsourcing, or collection of new human-subject data. We found no deviation from the NeurIPS Code of Ethics.   
    \item[] Guidelines:
        \begin{itemize}
        \item The answer \answerNA{} means that the authors have not reviewed the NeurIPS Code of Ethics.
        \item If the authors answer \answerNo, they should explain the special circumstances that require a deviation from the Code of Ethics.
        \item The authors should make sure to preserve anonymity (e.g., if there is a special consideration due to laws or regulations in their jurisdiction).
    \end{itemize}

\item {\bf Broader impacts}
    \item[] Question: Does the paper discuss both potential positive societal impacts and negative societal impacts of the work performed?
     \item[] Answer: \answerYes{}    
     \item[] Justification: The paper discusses positive impacts through improved representation of long-tailed clinical concepts, better clinical prediction, and robustness to cross-site distribution shift. It also discusses deployment-related risks, including dependence on correct ontology selection, local code mapping, and clinician validation before real-world use. The study uses de-identified clinical datasets and does not release patient-level data.
    \item[] Guidelines:
    \begin{itemize}
        \item The answer \answerNA{} means that there is no societal impact of the work performed.
        \item If the authors answer \answerNA{} or \answerNo, they should explain why their work has no societal impact or why the paper does not address societal impact.
        \item Examples of negative societal impacts include potential malicious or unintended uses (e.g., disinformation, generating fake profiles, surveillance), fairness considerations (e.g., deployment of technologies that could make decisions that unfairly impact specific groups), privacy considerations, and security considerations.
        \item The conference expects that many papers will be foundational research and not tied to particular applications, let alone deployments. However, if there is a direct path to any negative applications, the authors should point it out. For example, it is legitimate to point out that an improvement in the quality of generative models could be used to generate Deepfakes for disinformation. On the other hand, it is not needed to point out that a generic algorithm for optimizing neural networks could enable people to train models that generate Deepfakes faster.
        \item The authors should consider possible harms that could arise when the technology is being used as intended and functioning correctly, harms that could arise when the technology is being used as intended but gives incorrect results, and harms following from (intentional or unintentional) misuse of the technology.
        \item If there are negative societal impacts, the authors could also discuss possible mitigation strategies (e.g., gated release of models, providing defenses in addition to attacks, mechanisms for monitoring misuse, mechanisms to monitor how a system learns from feedback over time, improving the efficiency and accessibility of ML).
    \end{itemize}
    
\item {\bf Safeguards}
    \item[] Question: Does the paper describe safeguards that have been put in place for responsible release of data or models that have a high risk for misuse (e.g., pre-trained language models, image generators, or scraped datasets)?
    \item[] Answer: \answerNA{}
    \item[] Justification: The paper does not release a high-risk AI model or  dataset. The clinical datasets used in the study are controlled-access and governed by their original data-use agreements.   
    item[] Guidelines:
        \begin{itemize}
        \item The answer \answerNA{} means that the paper poses no such risks.
        \item Released models that have a high risk for misuse or dual-use should be released with necessary safeguards to allow for controlled use of the model, for example by requiring that users adhere to usage guidelines or restrictions to access the model or implementing safety filters. 
        \item Datasets that have been scraped from the Internet could pose safety risks. The authors should describe how they avoided releasing unsafe images.
        \item We recognize that providing effective safeguards is challenging, and many papers do not require this, but we encourage authors to take this into account and make a best faith effort.
    \end{itemize}

\item {\bf Licenses for existing assets}
    \item[] Question: Are the creators or original owners of assets (e.g., code, data, models), used in the paper, properly credited and are the license and terms of use explicitly mentioned and properly respected?
    \item[] Answer: \answerYes{}
    \item[] Justification: The paper cites the original sources for existing datasets (i.e., MIMIC-IV, eICU), medical ontologies (i.e., ICD, CCS), language models (i.e., BioBERT), and baseline methods. Access to MIMIC-IV and eICU follows their official credentialed-use requirements.    
    \item[] Guidelines:
        \begin{itemize}
        \item The answer \answerNA{} means that the paper does not use existing assets.
        \item The authors should cite the original paper that produced the code package or dataset.
        \item The authors should state which version of the asset is used and, if possible, include a URL.
        \item The name of the license (e.g., CC-BY 4.0) should be included for each asset.
        \item For scraped data from a particular source (e.g., website), the copyright and terms of service of that source should be provided.
        \item If assets are released, the license, copyright information, and terms of use in the package should be provided. For popular datasets, \url{paperswithcode.com/datasets} has curated licenses for some datasets. Their licensing guide can help determine the license of a dataset.
        \item For existing datasets that are re-packaged, both the original license and the license of the derived asset (if it has changed) should be provided.
        \item If this information is not available online, the authors are encouraged to reach out to the asset's creators.
    \end{itemize}

\item {\bf New assets}
    \item[] Question: Are new assets introduced in the paper well documented and is the documentation provided alongside the assets?
    \item[] Answer: \answerYes{}
    \item[] Justification: We plan to release the precomputed SMI concept embeddings and the code for constructing them from public medical ontologies and frozen biomedical LM encoders. We will provide documentation describing the embedding construction procedure, required ontology resources, and usage instructions for downstream tasks.
    \item[] Guidelines:
    \begin{itemize}
        \item The answer \answerNA{} means that the paper does not release new assets.
        \item Researchers should communicate the details of the dataset\slash code\slash model as part of their submissions via structured templates. This includes details about training, license, limitations, etc. 
        \item The paper should discuss whether and how consent was obtained from people whose asset is used.
        \item At submission time, remember to anonymize your assets (if applicable). You can either create an anonymized URL or include an anonymized zip file.
    \end{itemize}

\item {\bf Crowdsourcing and research with human subjects}
    \item[] Question: For crowdsourcing experiments and research with human subjects, does the paper include the full text of instructions given to participants and screenshots, if applicable, as well as details about compensation (if any)? 
    \item[] Answer: \answerNA{}
    \item[] Justification: The paper does not involve crowdsourcing, user studies, participant recruitment, or direct interaction with human subjects.    
    \item[] Guidelines:
        \begin{itemize}
        \item The answer \answerNA{} means that the paper does not involve crowdsourcing nor research with human subjects.
        \item Including this information in the supplemental material is fine, but if the main contribution of the paper involves human subjects, then as much detail as possible should be included in the main paper. 
        \item According to the NeurIPS Code of Ethics, workers involved in data collection, curation, or other labor should be paid at least the minimum wage in the country of the data collector. 
    \end{itemize}

\item {\bf Institutional review board (IRB) approvals or equivalent for research with human subjects}
    \item[] Question: Does the paper describe potential risks incurred by study participants, whether such risks were disclosed to the subjects, and whether Institutional Review Board (IRB) approvals (or an equivalent approval/review based on the requirements of your country or institution) were obtained?
    \item[] Answer: \answerNA{}
    \item[] Justification: The study uses de-identified, controlled-access EHR datasets and does not collect new human-subject data or interact with participants. Dataset access is governed by the original data-use agreements.
    \item[] Guidelines:
    \begin{itemize}
        \item The answer \answerNA{} means that the paper does not involve crowdsourcing nor research with human subjects.
        \item Depending on the country in which research is conducted, IRB approval (or equivalent) may be required for any human subjects research. If you obtained IRB approval, you should clearly state this in the paper. 
        \item We recognize that the procedures for this may vary significantly between institutions and locations, and we expect authors to adhere to the NeurIPS Code of Ethics and the guidelines for their institution. 
        \item For initial submissions, do not include any information that would break anonymity (if applicable), such as the institution conducting the review.
    \end{itemize}

\item {\bf Declaration of LLM usage}
    \item[] Question: Does the paper describe the usage of LLMs if it is an important, original, or non-standard component of the core methods in this research? Note that if the LLM is used only for writing, editing, or formatting purposes and does \emph{not} impact the core methodology, scientific rigor, or originality of the research, declaration is not required.
    \item[] Answer: \answerYes{}
    \item[] Justification: The core method uses a frozen biomedical language model encoder, BioBERT, to obtain level-specific clinical code embeddings. The paper describes this usage in the methodology. 
    \item[] Guidelines:
    \begin{itemize}
        \item The answer \answerNA{} means that the core method development in this research does not involve LLMs as any important, original, or non-standard components.
        \item Please refer to our LLM policy in the NeurIPS handbook for what should or should not be described.
    \end{itemize}

\end{enumerate}

\end{document}